\newcounter{daggerfootnote}
\author{Mina Karzand
\thanks{Mina Karzand is with the Wisconsin Institute of Discovery at University of Wisconsin, Madison, WI, USA.
	{\tt\small karzand@wisc.edu}}
	and~Robert D. Nowak
\thanks{Robert  Nowak is with the Department of Electrical and Computer Engineering, University of Wisconsin, Madison, WI, USA.
	{\tt\small rdnowak@wisc.edu}}
\thanks{Parts of this work were presented at the 57th Annual Allerton
  Conference on Communication, Control, and Computing, 2019.}
}
\def \bx{\boldsymbol{x}}
\def \bw{{\boldsymbol{w}}}
\def \bb{\boldsymbol{b}}
\def \bu{\boldsymbol{u}}
\def\R{{\mathbb R}}
\newcommand{\es}{\ell} %Estimated sign
\newcommand{\spline}{\mathcal{F}}%norm associated with spline
\newcommand{\data}{\mathtt{D}}%norm associated with spline
\newcommand{\Uc}{\mathcal{U}}
\newcommand{\Lc}{\mathcal{L}}
\newcommand{\Hc}{\mathcal{H}}
\newcommand{\Fc}{\mathcal{F}}
\newcommand{\Xc}{\mathcal{X}}
\newcommand{\Sc}{\mathcal{S}}
\newcommand{\Bc}{\mathcal{B}}
\newcommand{\xt}{u}
\newcommand{\yb}{\textbf{y}}
\newcommand{\yt}{\widetilde{\textbf{y}}}
\newcommand{\Kb}{\textbf{K}}
\newcommand{\Kt}{\widetilde{\textbf{K}}}
\newcommand{\abv}{\textbf{a}}
\newcommand{\score}{\mathsf{score}}
\newcommand{\argmax}{\mathrm{argmax}}
\newcommand{\argmin}{\mathrm{argmin}}
\newtheorem{corollary}{Corollary}
\newtheorem{theorem}{Theorem}
\newtheorem{lemma}{Lemma}
\newtheorem{prop}{Proposition}
\title{MaxiMin Active Learning in Overparameterized Model Classes}
\begin{document}

\maketitle
\vspace{-.1in}
\begin{abstract}
  Generating labeled training datasets has become a major bottleneck
  in Machine Learning (ML) pipelines. Active ML aims to address this issue by designing
  learning algorithms that automatically and adaptively select the
  most informative examples for labeling so that human time is not
  wasted labeling irrelevant, redundant, or trivial examples. This
  paper proposes a new approach to active ML with nonparametric or
  overparameterized models such as kernel methods and neural
  networks. In the context of binary classification, the new approach
  is shown to possess a variety of desirable properties that allow
  active learning algorithms to automatically and efficiently identify decision
  boundaries and data clusters.
\end{abstract}

\vspace{-.1in}

\section{Introduction}

The field of Machine Learning (ML) has advanced considerably in recent
years, but mostly in well-defined domains using huge amounts of
human-labeled training data. Machines can recognize objects in images
and translate text, but they must be trained with more images and text
than a person can see in nearly a lifetime.  The computational
complexity of training has been offset by recent technological
advances, but the cost of training data is measured in terms of the
human effort in labeling data. People are not getting faster nor
cheaper, so generating labeled training datasets has become a major
bottleneck in ML pipelines. Active ML aims to address this issue by
designing learning algorithms that automatically and adaptively select
the most informative examples for labeling so that human time is not
wasted labeling irrelevant, redundant, or trivial examples. This paper
explores active ML with nonparametric or overparameterized models such
as kernel methods and neural networks.

Deep neural networks (DNNs) have revolutionized machine learning
applications, and theoreticians have struggled to explain their
surpising properties.  DNNs are highly overparameterized and often fit
perfectly to data, yet remarkably the learned models generalize well
to new data.  A mathematical understanding of this phenomenom is
beginning to emerge
\cite{ma2018power,belkin2018understand,vaswani2018fast,belkin2018does,belkin2018reconciling,arora2019fine,belkin2018overfitting,hastie2019surprises}.
This work suggests that among all the networks that could be fit to the
training data, the learning algorithms used in fitting favor networks
with smaller weights, providing a sort of implicit regularization.
With this in mind, researchers have shown that shallow (but wide)
networks and classical kernel methods fit to the data but regularized
to have small weights (e.g., minimum norm fit to data) can generalize well
\cite{belkin2018understand,belkin2019two,hastie2019surprises,liang2018just}.

Despite the recent success and new understanding of these systems, it
still is a fact that learning good neural network models can
require an enormous number of labeled data.  The cost of obtaining
labels can be prohibitive in many applications.
This has prompted researchers to investigate active ML 
for kernel methods and neural networks
\cite{tong2001support,cohn1994improving,sener2017active,gal2017deep,wang2017cost,shen2017deep}.
None of this work, however, directly addresses overparameterized and
interpolating regime, which is the focus in this paper.
Active ML algorithms have access to a large but unlabeled
dataset of examples and sequentially select the most ``informative'' examples for
labeling 
\cite{settles2009active,settles2012active} .  This can reduce the
total number of labeled examples needed to learn an accurate model.

Broadly speaking, active ML algorithms adaptively select
examples for labeling based on two general strategies
\cite{dasgupta2011two}. The first is to select examples that rule-out
as many (incompatible) classifiers as possible at each step. In
effect, this leads to algorithms that tend to label examples near
decision boundaries. The second strategy involves discovering cluster
structure in unlabeled data and labeling representative examples from
each cluster.  We show that our new MaxiMin active learning approach
automatically exploits both these strategies, as depicted in Figure~\ref{2face}.

\begin{figure}[h]
\centerline{\includegraphics[width=.8\linewidth]{./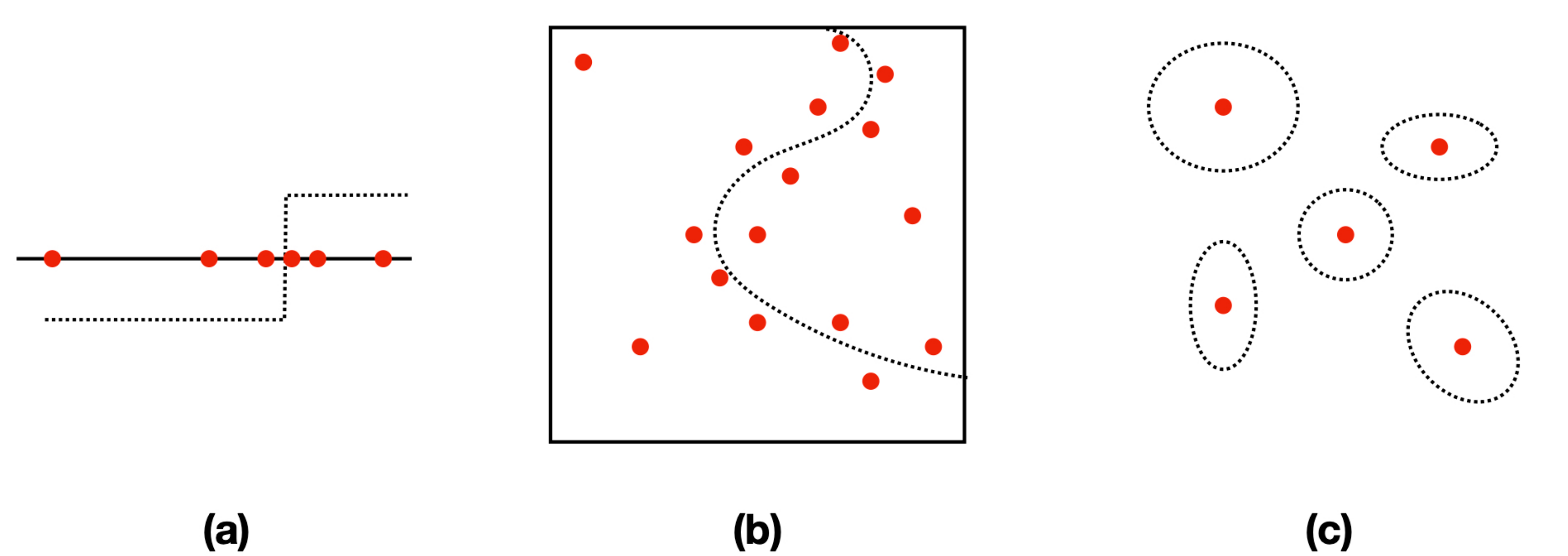}}
\caption{MaxiMin Active Learning strategically selects examples for labeling (red points). (a) reduces to binary search in simple 1-d threshold
  problem setting; (b) labeling is focused near decision boundary in
  multidimensional setting; (c) automatically discovers clusters and
  labels representative examples from each.}
\label{2face}
\end{figure}

This paper builds on a new framework for active learning in the
overparameterized and interpolationg regime, focusing on kernel
methods and two-layer neural networks in the binary classification
setting.  The approach, called {\em MaxiMin Active Learning}, is based
on mininum norm interpolating models.  Roughly speaking, at each
step of the learning process the maximin criterion requests a label
for the example that is most difficult to interpolate. A minimum norm
interpolating model is constructed for each possible example and the
one yielding the largest norm indicates which example to label next.
The rationale for the maximin criterion is that labeling the most
challenging examples first may eliminate the need to label many of the
other examples.

The maximin selection criterion is studied through experiments and mathematical
analysis.  We prove that the criterion has a number of desirable
properties:
\begin{enumerate}
\item[$\bullet$] It tends to label examples near the current
  (estimated) decision boundary and close to oppositely
  labeled examples,  allowing the active learning algorithm to focus
on learning decision boundaries.
\item[$\bullet$] It reduces to
optimal bisection in the one-dimensional linear classifier
setting.  
\item[$\bullet$] A data-based form of the criterion also provably
discovers clusters and also automatically generates labeled coverings
of the dataset.
\end{enumerate}
Experimentally, we show that these properties generalize in
several ways.  For example, we find that in multiple dimensions the
maximin criterion leads to a multidimensional bisection-like process
that automatically finds a portion of the decision boundary and then
locally explores to efficiently identify the complete boundary.  We
also show that MaxiMin Active Learning can learn hand-written
digit classifiers with far fewer labeled examples than traditional
passive learning based on labeling a randomly selected subset of examples.

\vspace{-.15in}
\section{A New Active Learning Criterion}
\label{s:DefScore}

At each iteration of the active learning algorithm,
looking at the currently labeled set of samples, a new unlabeled point
is selected to be labeled.  The criterion we are proposing to pick the
samples to be labeled is based on a `maximin' operator.  We will
describe the criterion in its most general form along with the
intuition behind this choice of criterion.  In the remainder of the
paper, we will go through some theoretical results about the
properties of variations of this criterion in various setups along
with some additional descriptive numerical evaluations and
simulations.

\subsection{Nonparametric Pool-based Active Learning}

At each time step, the algorithm has access to a pool of labeled
samples and a set of unlabaled samples.  In other words, we have a
partially labeled training set.  Let
$\Lc=\{(x_1,y_1),\cdots,(x_L,y_L)\}$ be the set of labeled examples so
far.  We assume $x_i\in\mathcal{X}$ where $\mathcal{X}$ is the input/feature space and binary valued labels $y_i\in\{-1,+1\}$.  Let $\Uc\subseteq \mathcal{X}$ be
the set of unlabeled samples.

{\em In the interpolating regime, the goal is to correctly label all
the points in $\Uc$ so that the training error is zero.} Passive learning generally
requires labeling every point in $\Uc$.  Active learning sequentially selects
points in $\Uc$ for labeling with the aim of learning a correct
classifier without necessarily labeling all of $\Uc$.  Our setting can
be viewed as an instance of {\em pool-based} active learning. %\MK{This paragraph is unclear}

At each iteration, one unlabeled sample, $u^*\in\mathcal{U}$ is
selected, labeled and added to the pool of labeled samples.
The selection process is designed to pick the samples which are most
\textit{informative} upon being labeled.  The proposed notion of score
is the measure of informativeness of each sample $u\in\Uc$ at each
time: the score of each unlabaled sample is computed, and the sample
with the largest score is selected to be labeled.
\begin{align}
\label{eq:argmaxScore}
u^* = \argmax_{u\in\mathcal{U}} \score(u)\,.
\end{align}
If there are multiple maximizers, then one is selected uniformly at
random.  Note that for any unlabeled sample $u\in\Uc$, the value of
$\score(u)$ depends implicitly on the set of currently labeled points,
$\Lc$. That is, information gained by labeling $u$ depends on the
current knowledge of the learner.  To define our proposed notion of
$\score$, we define minimum norm interpolating function and introduce
some notations next.

\subsection{Minimum norm interpolating function}
Let $\Fc$ be a class of functions mapping $\mathcal{X}$ to
$\mathbb{R}$,  where $\mathcal{X}$ is the input/feature space,.  We
assume the class $\Fc$ is rich enough to interpolate the training data.  For
example, $\Fc$ could be a nonparametric infinite dimensional
Reproducing Kernel Hilbert Space (RKHS) or an overparameterized neural
network representation.

Given the set of labeled samples, $\Lc$, and a class of functions
$\Fc$, let $f\in\Fc$ be the interpolating function such that
$f(x_i)=y_i$ for all $(x_i,y_i)\in\Lc$.  Note that there may be
many functions that interpolate a discrete set of points such as
$\Lc$. Among these, we choose $f$ to be the minimum norm interpolator:
\begin{align}
\label{eq:defF}
f(x)
:=&\,  \argmin_{g \in \mathcal{F}}\,\, \|g\|_{\mathcal{F}} 
\\
\text{s.t. } & \quad 
 g(x_i)  =  y_i, 
 \text{ for all } (x_i,y_i)\in\Lc\,.
 \notag
\end{align}
Clearly, the definition of $f$ depends on the set of currently labeled
samples $\Lc$ and the function norm $\|\cdot\|_{\mathcal F}$, although we omit
these dependencies for ease of notation. The choice of $\mathcal{F}$ and
the norm $\|\cdot\|_{\mathcal{F}}$ is application dependent.  In this
paper, we focus on (1) function classes represented by an
overparameterized neural network representation with the $\ell^2$ norm
of the weight vectors and (2) reproducing kernel Hilbert spaces with the
corresponding Hilbert norm.

For unlabeled points $u\in\Uc$ and $\es\in\{-1,+1\}$, define $f^{\xt}_{\es} (x)$ is the
minimum norm interpolating function based on current set of labeled
samples $\Lc$ and the point $\xt\in \Uc$ with label $\es$:
\begin{align}
\label{eq:defFtu}
f^u_{\es}(x)
 :=&\, \argmin_{g\in \mathcal{F} }\,\, \|g\|_{\mathcal{F}}  \\
\text{s.t. } & \quad 
 g(x_i)  =  y_i, 
 \text{ for all } (x_i,y_i)\in\Lc
  \notag
 \\
 & \quad 
  g(u)  =  \es\,. 
   \notag
\end{align}
We use this definition in the next subsection to define the notion of $\score$.

\subsection{Definition of proposed notion of $\score$}
Roughly speaking, we want our selection criterion to prioritize
labeling the most ``informative'' examples.  Since the ultimate goal
is to correctly label every example in $\Uc$, we design $\score(u)$ to measure
the how hard it is to interpolate after adding $u$ to the set of
labeled points. %\MK{The previous sentence is unclear}
The intuition is that attacking the most challenging points in the input
space first may eliminate the need to label other `easier' examples
later. 

Note that we need to compute $\score(u)$ without knowing the label of
$u$. To do so, we come up with an estimate of label of $u$, denoted by
$\es(u)\in\{-1,+1\}$ and compute $\score(u)$ assuming that upon
labeling, $u$ will be labeled $\es(u)$.  We propose the following
criterion for choosing $\es(u)$:
\begin{align}
\es(u) 
& := 
\displaystyle{\argmin_{\es\in\{-1,+1\}} } \|f_{\es}^u(x)\|_{\mathcal{F}} \,.
\label{eq:t1def}
\end{align}
 Operating in the interpolating regime,
we estimate the label of any unlabeled sample, $u$, to be the one   that yields the minimum norm interpolant (i.e.,
the ``smoother'' of the two interpolants among the two possible functions $f_+^u(x)$ and $f_-^u(x)$).

Define 
\begin{equation}
f^u(x):=f_{\es(u)}^u(x)
\label{eq:defnextinterp}
\end{equation}
to be the interpolating function after adding the sample $u$ with the label $\es(u)$, defined in~\eqref{eq:t1def}. 

We propose two notions of $\score$. For $\xt \in \Uc$, define
\begin{eqnarray}
\score_{\Fc}(\xt)  & = & \|f^{\xt} (x)\|_{\Fc}\, \label{fscore} \\
\score_{\data}(\xt)  & = & \|f^{\xt} (x) - f(x)\|_{\data} \, \label{dscore}
\end{eqnarray}
where $\|\cdot\|_{\mathcal F}$ is the norm associated the the function
space $\mathcal{F}$. The function  $f$ is the minimum norm interpolator of the labeled examples in
$\Lc$ (defined in~\eqref{eq:defF}), and $f^{\xt} (x)$ is defined \eqref{eq:defnextinterp} as the minimum norm interpolator after adding $u$ with the estimated label $\es(u)$ to the set of labeled points. 
Also, define
\begin{equation}
\label{eq:DBNorm}
\|g\|_{\data} \ = \ \int_{\mathcal X} |g(x)|^2 \, \mbox{d}P_X(x) \
, 
\end{equation}
where $P_X$ is
the distribution of $x$. In practice, $P_X$ is the empirical
distribution of $\Uc$. We refer to the (\ref{fscore}) as the {\em function norm score}
and (\ref{dscore}) as the {\em data-based norm score}\footnote{Operationally, to compute the data-based norm of any function, the algorithm uses the probability mass function of set of unlabeled points as a proxy for the input probability density function over the feature space $\mathcal{X}$. In particular, the algorithm approximates $\|g\|_{\data} $ by the average of the function over the set of unlabeled points:
$\|g\|_{\data} \approx \frac{1}{|\mathcal{U}|}\sum_{u\in\mathcal{U}} |g(u)|^2$. High density of set of unlabeled points and some mild regularity conditions guarantee that this is a good approximation. Throughout the paper, we use~\eqref{eq:DBNorm} to prove theoretical statements and its approximation in the numerical simulations.}.

The distinction between the two definitions of the $\score$ function
 is as follows.  Scoring unlabeled points
according to the definition $\score_{\Fc}$ priotorizes labeling the
examples which result in minimum norm interpolating functions with
largest norm. Since the norm of the function can be associated with
its smoothness, roughly speaking, this means that this criterion picks
the points which give the least smooth interpolating functions.
However, $\score_{\Fc}$ is insensitive to the distribution of
data. The data-based $\score_{\data}$, in contrast, is sensitive to the
distribution of the data.  Measuring the difference between the new
interpolation $f^u$ and the previous one makes this also sensitive to
the structure of the function class.

With these definitions in place, we state the MaxiMin Active Learning
criterion as follows.  Given labeled data $\Lc$, the next example $u^*
\in \mathcal{U}$ to
label is selected according to
\begin{eqnarray*}
f^u & = & \arg\min_{f \in \{f_+^u, f_-^u\}} \|f\|_{\mathcal F}  \ , \
          \forall u\in \mathcal{U} \\
u^*& = & \arg\max_{u\in \mathcal{U}} \score(u)
\end{eqnarray*}
with either $\score_{\Fc}$ or $\score_{\data}$.

\section{MaxiMin Active Learning with Neural Networks}

\subsection{Overparameterized Neural Networks and Interpolation} 
Neural networks are often highly
overparameterized and exactly fit to training data, yet remarkably the
learned models generalize well to new data.  A mathematical
understanding of this phenomenom is beginning to emerge
\cite{ma2018power,belkin2018understand,vaswani2018fast,belkin2018does,belkin2018reconciling,arora2019fine,belkin2018overfitting,hastie2019surprises}.
This work suggests that among all the networks that could be fit to the
training data, the learning algorithms used in training favor networks with smaller
weights, providing a sort of implicit regularization.  With this in
mind, researchers have shown that even shallow networks and
classical
kernel methods fit to the data but regularized to have small weights (e.g.,
minimum norm fit to data) can generalize well
\cite{belkin2018understand,belkin2019two,hastie2019surprises,liang2018just}.
The functional mappings generated by wide, two-layer neural networks
with Rectified Linear Unit (ReLU) activation functions were studied
in~\cite{nn-linear-spline}.  It is shown that exactly fitting such networks to
training data subject to  minimizing the $\ell^2$-norm of the network
weights results in a linear spline interpolation. This result was extended to a
broad class of interpolating splines by appropriate choices of
activation functions \cite{parhi2019minimum}.  Our analysis of the
MaxiMin active learning with neural networks will leverage these connections.

\subsection{Neural Network Regularization}
It has been long understood that the size of neural network weights,
rather than simply the number of weights/neurons, characterizes the
complexity of neural networks \cite{bartlett1998sample}.  Here we
focus on two-layer neural networks with ReLU activation functions in
the hidden layer.  If $\bx \in \R^d$ is input to the network, then the
output is computed by the function
\begin{eqnarray}
f_{\bw,\bb,c}(\bx) & = & \sum_{n=1}^N v_n \, \sigma(\bu_n^T\bx + b_n) \ + \ c \ , 
\label{twolayerNN}
\end{eqnarray}
where $\sigma(\cdot) = \max\{0,\cdot\}$ is the ReLU activation,
$\bw := \{v_n,\bu_n\}_{n=1}^N$ are the ``weights'' of the network, and
$\bb:=\{b_n\}$ and $c$ are constant ``bias'' terms.  The ``norm'' of
$f_{\bw,\bb,c}$ is defined as $\|f_{\bw,\bb,c}\| := \|\bw\|_2$, the
$\ell_2$-norm of the vector of network weights.  We use the term norm
in quotes because technically the weight norm does not correspond to a
true norm on the function $f_{\bw,\bb,c}$ since, for example, constant
functions $f_{\bw,\bb,c} = c$ have $\|\bw\|_2=0$.
From now on we will drop the subscripts and just
write $f$ for
ease of notation.  Let $\{(x_i,y_i)\}_{i=1}^M$ be a set of
training data.  The minimum ``norm'' neural network interpolation of
these data is the solution to the optimization
$$\min_{\bw} \|\bw\|_2 \mbox{ subject to } f(x_i)=y_i, \
i=1,\dots,M. $$
A solution exists if the number of neurons $N$ is sufficiently large
(see Theorem 5.1 in \cite{pinkus1999approximation}).

In Section~\ref{s:numsim} we explore the behavior of MaxiMin active
learning through numerical experiments using both the function
``norm'' score and the data-based norm score. In all our
experiments and theory, we assume the binary classification setting
where $y_i = \pm 1$.  Broadly
speaking, we observe the following behaviors.
\begin{enumerate}
\item[$\bullet$] With the function ``norm'' score the MaxiMin active
  learning algorithm tends to sample aggressively in the vicinity of
  the boundary, prefering to gather new labels between the closest
  oppositely labeled examples. 
\item[$\bullet$] The data-based norm score is sensitive to the distribution
  of the data. It strikes a balance between exploiting regions between
  oppositely labeled examples (as in the function-based case) and
  exploring regions further away from labeled examples.  Thus we see
  evidence that the data-based norm can effectively seek out the
  decision boundary and explore data clusters.
  %\footnote{\MK{Explain why we say exploit explore. What is the thing that is being maximized in the exploitation. Maybe connect it to the uncertainty sampling or something like that.}}. 
\end{enumerate}
These behaviors are supported by a formal analysis of MaxiMin active
learning in one dimension, discussed next.

\subsection{MaxiMin Active Learning in One-Dimension}
Our analysis of MaxiMin active learning with neural
networks will focus on the behavior in one-dimension.  We show that
MaxiMin active learning with a two-layer ReLU netwok recovers optimal
bisection learning strategies.
The following characterization of minimum ``norm''
neural network interpolation in one-dimension follows from
\cite{nn-linear-spline,parhi2019minimum} (see Theorem 4.4 and
Proposition 6.1 in \cite{parhi2019minimum}).
\begin{theorem}
\label{t:minnormNN}
Let $f:\R\rightarrow\R$ be a two-layer neural network with ReLU
activation functions and $N$ hidden nodes as in~\eqref{twolayerNN}. Let $\{(x_i,y_i)\}_{i=1}^M$ be a set of
training data.  If $N\geq M$, then a solution to the optimization
$$\min_{\bw} \|\bw\|_2 \mbox{ subject to } f(x_i)=y_i, \
i=1,\dots,M$$
is a minimal knot linear spline interpolation of the points
$\{(x_i,y_i)\}_{i=1}^M$.
\end{theorem}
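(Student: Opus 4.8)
The plan is to pass from the weight-space optimization to an equivalent problem over the function class, exploit the explicit structure of one-dimensional ReLU networks, and then solve the resulting variational problem. First I would record that in one dimension every network of the form~\eqref{twolayerNN} is a continuous piecewise-linear function whose only possible breakpoints (knots) lie at the locations $-b_n/u_n$, since each term $v_n\sigma(u_n x + b_n)$ is an affine ramp that ``turns on'' at $-b_n/u_n$ and changes the slope of $f$ by exactly $v_n u_n$ there. Next I would remove the redundancy created by the positive homogeneity $\sigma(tz)=t\,\sigma(z)$ for $t>0$: replacing $(v_n,u_n,b_n)$ by $(v_n/t_n,\,t_n u_n,\,t_n b_n)$ leaves $f$ unchanged while rescaling the $n$-th term's contribution to $\|\bw\|_2^2$. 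Minimizing $v_n^2/t_n^2 + t_n^2 u_n^2$ over $t_n>0$ gives, by AM--GM, the value $2|v_n u_n|$, so for any fixed realized function $\min\|\bw\|_2^2 = 2\sum_n |v_n u_n|$, and minimizing $\|\bw\|_2$ becomes equivalent to minimizing $\sum_n |v_n u_n|$ over all network representations that interpolate the data.

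The second step is to identify $\sum_n|v_n u_n|$ with a functional of $f$ alone. Since the signed slope change of $f$ at a knot is the sum of the $v_n u_n$ over neurons active there, $\sum_n|v_n u_n|$ is at least the total variation of $f'$, i.e. $\int_{\R}|f''|$, with equality when no two neurons place canceling jumps at a common location; the unpenalized constant $c$ and biases $b_n$ let us position knots freely. The only wrinkle is the overall affine trend: representing a nonzero asymptotic slope forces extra canceling ramps, contributing a boundary term $|f'(-\infty)+f'(+\infty)|$. Thus the problem reduces to minimizing $\max\{\int_{\R} |f''|,\ |f'(-\infty)+f'(+\infty)|\}$ over continuous piecewise-linear $f$ interpolating $\{(x_i,y_i)\}$, and the hypothesis $N\ge M$ guarantees enough hidden nodes to realize the optimizer (one ramp per interior knot plus a few for the trend).

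Finally I would solve this variational problem. Ordering the data $x_1<\cdots<x_M$ with segment slopes $m_i=(y_{i+1}-y_i)/(x_{i+1}-x_i)$, the mean value theorem forces $f'$ to attain each $m_i$ somewhere in $[x_i,x_{i+1}]$, so $\int_{x_1}^{x_M}|f''| \ge \sum_i |m_i-m_{i-1}|$, and the connect-the-dots (minimal-knot) spline attains this bound with knots only at the interior data points. I expect the main obstacle to be the interplay with the boundary term: one must show that introducing additional knots---either between data points, or outside $[x_1,x_M]$ to reshape the end slopes---can never strictly lower the maximum, so that the minimal-knot linear interpolant, extended affinely past the extreme data points, is simultaneously optimal for both terms. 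Handling this competition, together with the degenerate cases ($u_n=0$, or collinear data points where a knot can be dropped), is the delicate part; the rest is the bookkeeping described above.
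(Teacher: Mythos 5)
Your first two steps reconstruct, essentially correctly, the machinery that the paper itself does not prove but imports by citation from \cite{nn-linear-spline} and \cite{parhi2019minimum}: the homogeneity/AM--GM balancing argument showing that the minimal $\|\bw\|_2^2$ over all realizations of a fixed $f$ equals $2\sum_n |v_n u_n|$, and the identification of $\min \sum_n |v_n u_n|$ with the functional $R(f)=\max\big(\int|f''|,\ |f'(-\infty)+f'(+\infty)|\big)$. This $R(f)$ is exactly what the paper's appendix quotes as the starting point for its downstream proofs, so up to this point you are re-deriving the cited result rather than diverging from it (your achievability direction for the max is only asserted, but the statement is right).

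The genuine gap is in your final variational step. You plan to show that ``introducing additional knots---either between data points, or outside $[x_1,x_M]$ to reshape the end slopes---can never strictly lower the maximum,'' so that the minimal-knot interpolant with affine extension is optimal. That claim is false. Take $M=2$ with $(x_1,y_1)=(0,-1)$ and $(x_2,y_2)=(1,+1)$, so $m_1=2$. The minimal-knot interpolant is the line $f(x)=2x-1$, for which $\int|f''|=0$ but $|f'(-\infty)+f'(+\infty)|=4$, hence $R(f)=4$. The interpolant with slope $1$ on $(-\infty,0]$, slope $2$ on $[0,1]$, and slope $1$ on $[1,\infty)$ has $\int|f''|=2$ and boundary term $|1+1|=2$, hence $R=2$: adding two knots strictly halves the objective, and one can check $2$ is optimal here. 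More generally, whenever $|m_1+m_{M-1}|>\sum_i|m_{i+1}-m_i|$, tilting the end slopes (equivalently, placing knots at $x_1$ and $x_M$) strictly reduces $\max(\cdot,\cdot)$ by trading the boundary term against the curvature term, so the minimizer is \emph{not} the minimal-knot linear spline interpolation of the points. No bookkeeping closes this; the claim you need is simply not true.

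What rescues the statement---and what the paper actually relies on---is a boundary normalization. Immediately after the theorem, the paper stipulates that the leftmost and rightmost points are labeled and the interpolant is forced to have constant extensions by adding two artificial points with matching labels; its appendix then notes that this forces $f'(\pm\infty)=0$, so $R(f)=\int|f''|$, and your mean value theorem argument then correctly identifies connect-the-dots (flat outside the data) as the minimizer. So your proof becomes valid exactly under that normalization; without it, the solutions of $\min R(f)$ are still linear splines with knots at data points, but with end slopes chosen to balance the two terms in the max, not the minimal-knot interpolation asserted in the statement.
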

In our analysis, we exploit the equivalence between minimum ``norm''
neural networks and linear splines. Specifically, a solution to the
optimization is an interpolating function that is linear between each 
pair of neighboring points.  This ensures that given a pair of
neighboring labeled points $x_1$ and $x_2$ and any unlabeled point
$x_1<u<x_2$, adding $u$ to the set of labeled points can only
potentially change the interpolating function between $x_1$ and $x_2$.
To eliminate uncertainty in the boundary conditions of the
interpolation, we assume that the neural network is initialized by labeling the
leftmost and rightmost points in the dataset and forced to have a
constant extension to the left and right of these points (this can be
accomplished by adding two artificial points to the left and right
with the same labels as the true endpoints).%\footnote{\MK{Cite Greg's paper about how a slightly different architecture gives the same thing}}.  

The main message of our analysis is that MaxiMin active learning with
two-layer ReLU networks recovers optimal bisection (binary search) in
one-dimension.  This is summarized by the next corollary which
follows in a straightforward fashion from
Theorems~\ref{thm:BinarySearchLinearSpline} and
\ref{thm:DataSearchLinearSpline}.

\begin{corollary}
  Consider $N$ points uniformly distributed in the interval $[0,1]$
  labeled according to a $k$-piecewise constant function $f$ so that
  $y_i=f(x_i)\in\{-1,+1\}$, $i=1,\dots,N$, and length of the pieces
  are $\Theta(1/K)$.  Then after labeling $O(k\log N)$ examples, the
  MaxiMin active learning with a two-layer ReLU network correctly
  labels all $N$ examples (i.e., the training error is zero).
\label{cor:binsearchNN}
\end{corollary}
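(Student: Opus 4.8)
The plan is to turn the statement into a two-phase counting argument layered on top of the bisection behaviour already isolated in the cited theorems. First I would use Theorem~\ref{t:minnormNN} to replace the minimum ``norm'' network by the minimal-knot linear spline interpolant, so that the current estimate $f$ is affine on every interval between consecutive labeled points and adding a label at $u$ perturbs $f$ only on the single interval containing $u$. With this reduction in hand, Theorems~\ref{thm:BinarySearchLinearSpline} and~\ref{thm:DataSearchLinearSpline} supply the per-step behaviour of the two scores: $\score_{\Fc}$ and $\score_{\data}$ are strictly larger on an interval whose two endpoints carry opposite labels than on a monochromatic interval (where $f^u=f$, so $\score_{\data}=0$ and $\score_{\Fc}=\|f\|_{\Fc}$ for every interior $u$), and within such a ``bichromatic'' interval the maximizer bisects the set of enclosed unlabeled points. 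Hence resolving a single sign change between an oppositely labeled pair is exactly binary search and costs $O(\log N)$ labels; the endpoint initialization with constant extension removes the boundary effects at $0$ and $1$.

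Next I would bound how many such binary searches are needed and how they are seeded. Because the pieces have length $\Theta(1/k)$ and the $N$ points are uniform, each piece contains $\Theta(N/k)$ samples and the true labeling has at most $k-1$ sign changes. The delicate phase, and the step I expect to be the main obstacle, is \emph{discovery}: a piece both of whose boundaries fall inside a run of equally labeled \emph{labeled} points (an even-parity pocket) is invisible to bisection, since the scores are degenerate on monochromatic intervals. Here I would invoke the covering behaviour asserted for the data-based criterion in Theorem~\ref{thm:DataSearchLinearSpline} (it provably produces a labeled covering of the dataset); combined with the piece-length assumption, a covering at scale $\Theta(1/k)$ deposits at least one labeled sample in every piece, using $O(k)$ exploratory labels. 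For the function-norm criterion, which does not explore, I would instead appeal to the uniform random tie-breaking in~\eqref{eq:argmaxScore}: when the scores are tied, an undiscovered piece is sampled with probability proportional to its sample count, so a coupon-collector estimate discovers all $k$ pieces in $O(k\log k)=O(k\log N)$ labels.

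Finally I would assemble the two phases. Once every piece carries a labeled representative, any two consecutive labeled points lie in the same or in adjacent pieces, so each bichromatic interval encloses \emph{exactly one} boundary and there are $O(k)$ of them; bisecting a single-boundary interval keeps it single-boundary and creates no new pockets, so the binary searches no longer interfere and each terminates in $O(\log N)$ labels, for $O(k\log N)$ localization labels in total. At termination $f$ is constant on every maximal run of equally labeled samples and changes sign only inside the straddling intervals, none of which contains an unlabeled sample; therefore $\mathrm{sign}(f)$ agrees with the true label at all $N$ points. Adding the $O(k\log N)$ discovery labels to the $O(k\log N)$ localization labels gives the claimed $O(k\log N)$ bound. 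The points that require care are the hidden constant in $\Theta(1/k)$ (the cover must be fine enough to meet every piece) and the interleaving of discovery and localization, which is benign because bichromatic intervals always carry strictly larger score and are therefore resolved before further exploration resumes.
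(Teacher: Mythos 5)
Your two-phase architecture (discovery of every piece, then per-boundary bisection) is sound, and it is considerably more careful than the paper's own treatment, which disposes of Corollary~\ref{cor:binsearchNN} in a single remark that the criteria of Theorems~\ref{thm:BinarySearchLinearSpline} and~\ref{thm:DataSearchLinearSpline} select bisection points. Your reduction to splines via Theorem~\ref{t:minnormNN}, the strict dominance of bichromatic over monochromatic intervals, the observation that once every piece holds a labeled point each bichromatic interval encloses exactly one boundary, and the resulting $O(k)\times O(\log N)$ accounting are all correct, and they make explicit the discovery issue that the paper silently skips.

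The genuine gap is your discovery step for the data-based criterion. You invoke ``the covering behaviour asserted for the data-based criterion in Theorem~\ref{thm:DataSearchLinearSpline},'' but no such assertion exists there; property 3 of that theorem says the opposite of what you need, namely $\score_{\data}(v)=0$ for every $v$ between identically labeled neighbors, so the data-based criterion is exactly as blind inside monochromatic regions as the function-norm criterion (whose score there is the constant $\|f\|_{\Fc}$). The covering and cluster-discovery statements in the paper (Theorems~\ref{t:FirstPoint} and~\ref{t:clusterExplore}) are proved only for radial-basis kernels in the RKHS setting with clustered data and say nothing about the one-dimensional spline/NN setting. Note the contrast with the Laplace-kernel case: there, Proposition~\ref{p:Laplace-1D} shows the score on identically labeled gaps grows with the gap length, giving the deterministic ``bisect the largest unexplored gap'' rule on which the proof of Corollary~\ref{cor:binsearch} rests; Theorems~\ref{thm:BinarySearchLinearSpline} and~\ref{thm:DataSearchLinearSpline} provide no analogue, which is precisely why discovery in the NN case cannot avoid the tie-breaking rule. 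The repair is already in your own proposal: since all unlabeled points in monochromatic intervals are tied (at $0$ for $\score_{\data}$, at $\|f\|_{\Fc}$ for $\score_{\Fc}$), uniform tie-breaking in~\eqref{eq:argmaxScore} plus a coupon-collector bound over the $k$ pieces, each containing $\Theta(N/k)$ points, discovers every piece within $O(k\log k)\subseteq O(k\log N)$ labels for \emph{both} criteria. Be aware, however, that this makes the guarantee hold only in expectation or with high probability over the tie-breaking randomness, a qualification that the corollary's deterministic-sounding statement, and the paper, omit.
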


\noindent The corollary follows from the fact that the MaxiMin criteria (both
function norm and data-based norm) selects the next example to label
at the midpoint between neighboring and oppositely labeled examples
(i.e., at a bisection point).  This is characterized in the next two
theorems.  First we consider the function ``norm'' criterion.
The proof of the following theorem appears in Appendix~\ref{s:proofNN}.

\begin{theorem}\label{thm:BinarySearchLinearSpline}
Let $\Lc$ be a set of labeled examples and let $u$ be an unlabeled
example. Let $f_{+}^{u}$ be the minimum ``norm'' interpolator of
$\Lc\cup (u,+1)$ and let $f_{-}^{u}$ be the minimum ``norm'' interpolator of
$\Lc\cup (u,-1)$.  
Define the score of an unlabeled example $u$ as
 $\score_{\Fc}(u) \ = \ \min\{\|f_{+}^{u} \|, \|f_{-}^{u}\|\}$,
where $\|f\| = \|\bw\|_2$, the neural network weight norm.
 Then, the selection criterion based on $\score_{\Fc}$ has the following properties
\begin{enumerate}
\item Let $x_1$ and $x_2$ be two oppositely labeled neighboring points
  in $\Lc$, \textit{i.e.},  no other points between $x_1$ and $x_2$
  have been labeled and $y_1\neq y_2$. Then for all $x_1<\xt<x_2$,
$\score_{\Fc} \left(\frac{x_1 + x_2}{2}\right) 
\geq
 \score_{\Fc}(\xt)$.
\item Let $x_1<x_2$ and $x_3<x_4$ be two pairs of oppositely labeled neighboring points 
(\textit{i.e.},  $y_1\neq y_2$ and $y_3\neq y_4$) 
such that $x_2-x_1\geq x_4-x_3$. Then,
$$\score_{\Fc} \left(\frac{x_1 + x_2}{2}\right)
\  \leq \
 \score_{\Fc} \left(\frac{x_3+ x_4}{2}\right)\,.$$
 \item 
  Let $x_5$ and $x_6$ be two identically labeled neighboring points in $\Lc$, \textit{i.e.}, $y_5= y_6$. Then for all $x_5<\xt<x_6$, the function 
$\score_{\Fc}(\xt)$ is constant.
\item For any pair of neighboring oppositely labeled points $x_1$ and $x_2$, 
any pair of neighboring identically labeled points $x_5$ and $x_6$,
 any $x_1<u<x_2$ and any $x_5<v<x_6$, we have
 \[\score_{\Fc} (v)\leq
 \score_{\Fc} (u) \,.\]
\end{enumerate}
\end{theorem}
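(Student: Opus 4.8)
The plan is to reduce everything to a statement about linear splines and the total variation of their derivatives. By Theorem~\ref{t:minnormNN} the minimum ``norm'' interpolant is the minimal-knot linear spline through the data, i.e. the connect-the-dots interpolant, whose only knots are the data points. For such a spline a single ReLU neuron $v\sigma(ux+b)$ contributes a jump of $v|u|$ to the slope of $f$ at its kink, and minimizing $v^2+u^2$ subject to a prescribed jump $\delta$ gives cost $2|\delta|$; since the constant-extension assumption forces the slopes to vanish at $\pm\infty$ (so the jumps sum to zero and are realizable), one obtains $\|\bw\|_2^2 = 2\sum_{j}|\Delta_j| = 2\,\mathrm{TV}(f')$, where $\Delta_j$ is the change of slope at the $j$-th data point. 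As $x\mapsto\sqrt{2x}$ is increasing, comparing scores is the same as comparing $\mathrm{TV}(f')$, and I will work with the latter throughout.

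The key structural observation is locality: inserting an unlabeled point $u$ into an interval $(x_1,x_2)$ between neighboring labeled points changes the interpolant only on $(x_1,x_2)$, hence changes only the three slope-jump terms at $x_1$, $u$ and $x_2$. Writing $d_1=u-x_1$, $d_2=x_2-u$, and letting $a,b$ be the fixed slopes entering $x_1$ from the left and leaving $x_2$ to the right, the total variation with label $\es$ is $T_{\es}(u)=|m_L-a|+|m_R-m_L|+|b-m_R|$ plus a constant coming from all other knots, where $m_L,m_R$ are the two new segment slopes. The delicate point is the signs of $a$ and $b$: because the labels are $\pm1$ and, say, $y_1=+1,\ y_2=-1$, the neighboring labels can only pull the slope into $x_1$ upward and out of $x_2$ upward, forcing $a\ge 0$ and $b\ge 0$. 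With these signs every absolute value opens cleanly and one finds $T_{+1}(u)=a+b+4/d_2$ and $T_{-1}(u)=a+b+4/d_1$, so $\es(u)$ selects the label of the nearer endpoint. Subtracting the original jumps at $x_1,x_2$ (which equal $a+b+4/D$ with $D=x_2-x_1$), the score becomes the clean expression $\score_{\Fc}(u)^2/2 = T_{\mathrm{tot}} - 4/D + 4/\max(d_1,d_2)$, where $T_{\mathrm{tot}}=\mathrm{TV}(f')$ is a global constant. For identically labeled neighbors the common label produces no new kink, so the analogous computation gives the constant value $\score_{\Fc}^2/2 = T_{\mathrm{tot}}$.

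From these formulas all four parts fall out immediately. For (1), $4/\max(d_1,d_2)$ is maximized exactly when $\max(d_1,d_2)$ is smallest subject to $d_1+d_2=D$, i.e. at the midpoint. For (2), the $a,b$ and other-knot terms all collapse into the universal constant $T_{\mathrm{tot}}$, leaving $\score_{\Fc}(\text{midpoint})^2/2 = T_{\mathrm{tot}} + 4/D$, which is decreasing in the width $D$; hence the larger gap gives the smaller score. For (3) the score is literally constant on an identically labeled interval. For (4), since $\max(d_1,d_2)<D$ on any opposite-labeled interval we have $4/\max(d_1,d_2)\ge 4/D$, so $\score_{\Fc}(u)^2/2 \ge T_{\mathrm{tot}} = \score_{\Fc}(v)^2/2$ for every $v$ in an identically labeled interval.

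The main obstacle is the first step together with the sign bookkeeping: I must invoke the cited characterization to get the weight-norm-to-total-variation identity with the correct constant and with the constant-extension boundary terms controlled, and then verify that the forced signs $a,b\ge 0$ are exactly what makes the neighbor-dependent terms cancel, so that the midpoint scores reduce to the neighbor-independent quantity $T_{\mathrm{tot}}+4/D$. Once that cancellation is established the four claims are one-line consequences.
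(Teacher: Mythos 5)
Your proof is correct and follows essentially the same route as the paper's: reduce via Theorem~\ref{t:minnormNN} to minimal-knot linear splines, identify the (squared) weight norm with the total variation of $f'$ (sum of absolute slope changes), exploit locality of the inserted point, and derive the explicit formulas $T_{\pm}(u)$ yielding the score $\mathrm{const} - 4/D + 4/\max(d_1,d_2)$, from which all four parts follow. Your treatment is in fact slightly more careful than the paper's on one point---you track the monotone relation $\|\bw\|_2^2 = 2\,\mathrm{TV}(f')$ explicitly, whereas the paper uses $R(f)=\int |f''|$ directly as a proxy for the norm---but this does not change the substance of the argument.
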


Now we turn to the data-based norm.  Here we observe the effect of the
data distribution on the bisection properties.  The properties mirror
those in Theorem~\ref{thm:BinarySearchLinearSpline} except in the
case of the second property.  The data-based norm criterion tends to
sample in the {\em largest} (most data-massive) interval between
oppositely labeled points, whereas the function-based norm criterion
favors points in the {\em smallest} interval.

\begin{theorem}\label{thm:DataSearchLinearSpline}
Let the distribution $\mathbb{P}(X)$ be uniform over an interval. 
Let $\Lc$ be a set of labeled examples and let $u$ be an unlabeled
example. Let $f_{+}^{u}$ be the minimum ``norm'' interpolator of
$\Lc\cup (u,+1)$ and let $f_{-}^{u}$ be the minimum ``norm'' interpolator of
$\Lc\cup (u,-1)$ and let $f^u = \arg_{g \in \{f_+^u,f_-^u\}} \|g\|$ consistent with notations in~\eqref{eq:defFtu} and~\eqref{eq:defnextinterp}.
Then
$\score_{\data}(u)= \int |f^u(x)-f(x)|^2\, dP_X(x)$, where $f$ is
the minimum ``norm'' interpolator based on the labeled data $\Lc$.
 Then, the selection criterion based on $\score_{\data}$ has the following properties.
\begin{enumerate}
\item Let $x_1$ and $x_2$ be two oppositely labeled neighboring points in $\Lc$, \textit{i.e.}, $y_1\neq y_2$. Then for all $x_1<\xt<x_2$
$\score_{\data} \left(\frac{x_1 + x_2}{2}\right) 
\geq
 \score_{\data}(\xt)$.
\item Let $x_1<x_2$ and $x_3<x_4$ be two pairs of oppositely labeled neighboring labeled points 
(\textit{i.e.},  $y_1\neq y_2$ and $y_3\neq y_4$) 
such that $x_2-x_1\geq x_4-x_3$.  If the unlabeled points are uniformly
distributed in each interval and the number of points is in
$(x_1,x_2)$ is less than the number in $(x_4,x_3)$, then
$$\score_{\data} \left(\frac{x_1 + x_2}{2}\right)
 \geq 
 \score_{\data} \left(\frac{x_3+ x_4}{2}\right)\,.$$
 \item 
  Let $x_5$ and $x_6$ be two identically labeled neighboring points in $\Lc$, \textit{i.e.}, $y_5= y_6$. Then for all $x_5<v<x_6$, we have
$\score_{\data}(v)=0$.
\item For any pair of neighboring oppositely labeled points $x_1$ and $x_2$, 
any pair of neighboring identically labeled points $x_5$ and $x_6$,
 any $x_1<u<x_2$ and any $x_5<v<x_6$, we have
 \[\score_{\data} (v)\leq
 \score_{\data} (u) \,.\]
\end{enumerate} 
 
\end{theorem}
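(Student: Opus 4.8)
The plan is to reduce all four properties to a single computation of the perturbation $f^u-f$ and of the label $\es(u)$ selected by the minimum-``norm'' rule. First I would invoke Theorem~\ref{t:minnormNN} together with the constant-extension convention: every minimum ``norm'' interpolant is the connect-the-dots linear spline, its weight norm equals a fixed constant times $\int|f''|$ (the total variation of its derivative, the boundary term vanishing by constant extension), and adding an unlabeled point $u$ strictly between neighboring labeled points $x_1<x_2$ alters the interpolant only on $[x_1,x_2]$. Consequently $g:=f^u-f$ is a tent function supported on $[x_1,x_2]$, vanishing at the endpoints and piecewise linear with apex at $u$ of height $h:=f^u(u)-f(u)=\es(u)-f(u)$. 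A one-line integration then gives, for uniform $P_X$ of density $\rho$, $\score_{\data}(u)=\int_{x_1}^{x_2} g^2\,dP_X=\rho\,h^2(x_2-x_1)/3$, so everything hinges on the value of $h$, i.e.\ on which label the min-``norm'' rule assigns.

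Next I would establish the key lemma that $|h|=2\min(u-x_1,\,x_2-u)/(x_2-x_1)$ on an oppositely-labeled interval and $h=0$ on an identically-labeled one. Take $y_1=+1,\,y_2=-1$ without loss of generality and write $a=u-x_1$, $b=x_2-u$, $L=a+b$. The candidates $f_+^u$ and $f_-^u$ differ from $f$ only through the two segment slopes on $[x_1,u]$ and $[u,x_2]$, so (the contributions outside $[x_1,x_2]$ being identical) comparing $\|f_+^u\|$ and $\|f_-^u\|$ amounts to comparing the local derivative-variation $T(\es)=|s_1-p|+|s_2-s_1|+|q-s_2|$, where $p,q$ are the fixed slopes of the two neighboring segments. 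The decisive point---and what I expect to be the main obstacle---is that because the labels take the extreme values $\pm 1$, the neighboring slopes obey $p\ge 0$ and $q\ge 0$ (and vanish at the ends by the constant-extension convention); substituting the slopes for each candidate label, all absolute values resolve and the $p,q$ contributions cancel out of the difference, leaving $T(+1)-T(-1)=4/b-4/a$. Hence the min-``norm'' label always matches the nearer endpoint and $|h|=2\min(a,b)/L$. The same bookkeeping for identically-labeled neighbors shows the opposite label strictly increases $T$, so $\es(u)$ equals the common label and $h=0$.

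Finally I would read off the four properties. On an oppositely-labeled interval $\score_{\data}(u)=\frac{4\rho}{3L}\min(a,b)^2$, which is strictly maximized at the midpoint $a=b=L/2$, giving Property~1 with maximal value $\score_{\data}\big(\tfrac{x_1+x_2}{2}\big)=\rho L/3=\tfrac13 P_X([x_1,x_2])$. Property~3 is immediate from $h=0$, and Property~4 then follows with no further work, since $\score_{\data}\ge 0$ always forces $\score_{\data}(v)=0\le\score_{\data}(u)$. For Property~2 I would use that the midpoint score equals one third of the probability mass $P_X(I)$ of the interval $I$; under uniform $P_X$ this mass is proportional to length, so $x_2-x_1\ge x_4-x_3$ yields $\score_{\data}\big(\tfrac{x_1+x_2}{2}\big)\ge\score_{\data}\big(\tfrac{x_3+x_4}{2}\big)$. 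I would close by noting that when $P_X$ is the empirical measure of the unlabeled pool (the proxy of the footnote), $P_X(I)$ is proportional to the number of pool points in $I$, which is the precise sense in which the data-based score favors the most data-massive interval.
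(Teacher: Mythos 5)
Your proposal is correct and its skeleton matches the paper's: reduce to minimal-knot linear splines via Theorem~\ref{t:minnormNN}, observe that inserting $u$ between neighboring labeled points changes the interpolant only on $[x_1,x_2]$, identify $\es(u)$ as the label of the nearer endpoint, and integrate the squared difference against the uniform measure. Where you differ is in execution, and your version is the more complete one. The paper's appendix proof handles essentially only Property~1: it writes $f$ and $f^u$ explicitly on $[x_1,x_2]$, computes $\int_{x_1}^{x_2}|f^u-f|^2\,\mathrm{d}x$, and differentiates in $u$ to show the score increases up to the midpoint and decreases after, taking the label rule (that $\es(u)$ equals the label of the nearer endpoint) from the proof of Theorem~\ref{thm:BinarySearchLinearSpline}; Property~3 is dismissed as trivial, and Properties~2 and~4 are never argued explicitly. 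Your tent-function reduction instead yields the closed form $\score_{\data}(u)=\frac{4\rho}{3L}\min(a,b)^2$ with midpoint value $\frac{1}{3}P_X([x_1,x_2])$, from which all four properties drop out simultaneously, and your slope-bookkeeping lemma ($T(+1)-T(-1)=4/b-4/a$, valid because the slopes $p,q$ of the flanking segments are nonnegative when $y_1=+1$, $y_2=-1$) makes the label selection self-contained rather than imported.

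One caveat on Property~2. Your argument uses the theorem's stated premise that $P_X$ is uniform over a single interval, so the midpoint comparison reduces to interval lengths and the hypothesis about point counts is never used. That is in fact the only reading under which the stated inequality holds: under the empirical-measure proxy of the paper's footnote, the midpoint score equals one third of the fraction of pool points in the interval, so the hypothesis that $(x_1,x_2)$ contains \emph{fewer} points than $(x_3,x_4)$ would reverse the inequality. This tension sits in the paper's statement of Property~2, not in your proof, but your closing remark about ``data-massive'' intervals glosses over it; you should state explicitly which reading you are proving.
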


The proof appears in Appendix~\ref{s:proofNNDB}.

\section{Interpolating Active Learners in an RKHS}
\label{sec:properties}
In this section, we will focus on minimum norm interpolating functions
in a Reproducing Kernel Hilbert Space (RKHS).  We present theoretical properties for general RKHS
settings, detailed analytical results in the one-dimensional setting,
and numerical studies in multiple dimensions. Broadly speaking, we establish the following
properties: the proposed score functions \\ \vspace{-.1in} \\
\noindent $\bullet$ tend to select examples near the decision boundary of
  $f$, the current interpolator; \\ \vspace{-.1in} \\
\noindent $\bullet$ the score is largest for unlabeled examples near the decision
  boundary {\em and} close to oppositely labeled examples, in effect
  searching for the boundary in the most likely region of the
  input space;\\ \vspace{-.1in} \\
\noindent $\bullet$  in one dimension the interpolating active learner
  coincides with an optimal binary search procedure; \\ \vspace{-.1in} \\
\noindent $\bullet$ using data-based function norms, rather than the RKHS
  norm, the interpolating active learner executes a tradeoff between
  sampling near the current decision boundary and sampling in regions
  far away from currently labeled examples, thus exploiting cluster
  structure in the data.
\vspace{-.05in}

%%%%%%%%%
%%%%%%%%%%%

\subsection{Kernel Methods}
\label{sec:RKHS}
%We will focus on  minimum norm interpolating functions in Reproducing Kernel Hilbert Spaces. 

A Hilbert space $\Hc$ is associated with an inner product:
$\langle f,g\rangle_{\Hc}$ for $f,g\in\Hc$. This induces a norm
defined by $\|f\|_{\Hc} = \sqrt{\langle f,f\rangle}_{\Hc}$.
A symmetric bivariate function
$K: \mathcal{X}\times \mathcal{X} \to \mathbb{R}$ is positive
semidefinite if for all $n\geq 1$, and points $\{x_i\}_{i=1}^n$, the
matrix $\Kb$ with element $\Kb_{i,j}=K(x_i,x_j)$ is positive
semidefinite (PSD). These functions are called PSD kernel functions.
 A PSD kernel constructs a Hilbert space, $\Hc$ of functions on
$f:\mathcal{X}\to \mathbb{R}$. For any $x\in\mathcal{X}$ and any
$f\in\Hc$, the function $K(\cdot,x)\in \Hc$ and
$\langle f,K(\cdot,x)\rangle_{\Hc} = f(x)$.
Throughout this section, we assume $K(x,x)=1$. 
 
%For $x_1,\cdots,x_n\in\mathcal{X}$ and $y_1,\cdots,y_n\in\{-1,+1\}$,
%we define $f(x)\in\Hc$ to be the interpolating function
%(\textit{i.e.}, $f(x_i)=y_i$) with minimum norm (according to the norm
%associated with $\Hc$).  Define $\Kb$ to be $n$ by $n$ matrix such that
%$\Kb_{i,j}=K(x_i,x_j)$ and $\yb=[y_1,\cdots,y_n]$.  Then, one can show
%that there exists a vector the vector
%$\alpha=[\alpha_1,\cdots,\alpha_n]^T$ such that $f(x)$ can be
%decomposed as
%\begin{align} \label{eq:InterpFnc}
%	f(x) = \sum_{i=1}^n \alpha_i K(x_i,x), \quad \text{ where } \quad\alpha=\Kb^{-1} \yb\,.
%\end{align}

For the set of labeled samples $\Lc=\{(x_1,y_1),\cdots,(x_L,y_L)\}$ with $y_i\in \{-1,+1\}$, let the function $f(x)$
% be the minimum norm interpolating function (\textit{i.e.} $\argmin_{f\in\Hc}\|f\|_{\Hc}$ s.t. $f(x_i)=y_i$). Then, $f(x)$ can 
 be decomposed as
\begin{align}
f(x) & = \sum_{i=1}^L \alpha_i k(x_i,x)
\label{eq:InterpFnc}
\\
\text{with} \quad
\alpha & = \Kb^{-1} \yb, \nonumber
\end{align}
where $\Kb=\big[\Kb_{i,j}\big]_{i,j}$ is the $L$ by $L$ matrix such that $\Kb_{i,j}=k(x_i,x_j)$ and $\yb=[y_1,\cdots,y_L]^T$. 
Using reproducible kernels  implies that $f(x)\in\Hc$ for the a RKHS $\Hc$. Then, $f(x)$ defined above is the minimum Hilbert norm interpolating function defined in~\eqref{eq:defF}. 
Using  the property $\langle K(x_i,\cdot) , K(x_j,\cdot) \rangle =
K(x_i,x_j)$, we have 
	\[\|f(x)\|_{\Hc}^2 
	= \alpha^T\, \Kb\, \alpha
	=
	\yb^T\, \Kb^{-1}\, \yb \,.\]

For $u\in\Uc$ and ${\es}\in\{-1,+1\}$, the minimum norm interpolating unction $f^u_{\es}(x)$, defined in~\eqref{eq:defFtu} (based on currently labeled samples $\Lc$ and sample $u$ with label ${\es}$) is derived similarly :
\begin{align}
f^u_{\es}(x) & = \sum_{i=1}^L \widetilde{\alpha}_i k(x_i,x) + \widetilde{\alpha}_{L+1}  k(u,x)
\label{eq:InterpFnctu}
\\
\text{with} \quad
\widetilde{\alpha} & = \Kt_u^{-1} \yt_{\es}, \nonumber
\end{align}
where
\begin{align}
\label{eq:defKtYt}
\Kt_u = 
\begin{bmatrix}
\Kb& \abv_u\\
\abv_u^T & b
 \end{bmatrix}
 \,,
 \,\,
\abv_u=\begin{bmatrix}
k(x_1,u)
\\
\vdots
\\k(x_L,u)
\end{bmatrix}
\,,
\,\,
\yt_{\es}=
\begin{bmatrix}
\yb
\\
{\es}
\end{bmatrix}\,,
\,\,
\text{and}
\,\,
b=K(u,u)\,.
\end{align}
Throughout this paper, we use kernel such that $K(x,x)=1$ for all $x\in\mathcal{X}$.

%%%%%%%%%%%%%
%%%%%%%%%%%%%%%
% We look into the class of function-based Reproducing Kernel Hilbert Spaces. For an introduction, look into???. \MK{Finish up}

% \begin{align}
% 	\|f(x)\|_{\Hc} 
% 	= \alpha^T\, \Kb\, \alpha
% 	=
% 	\yb^T\, \Kb^{-1}\, \yb \,.
% \end{align}

%%%%%%%%%%%%%%%%%%
%%%%%%%%%%%%%%%%%%
\subsection{Properties of General Kernels for Active Learning }
\label{s:PropRKHS}
We first show that using kernel based function spaces for interpolation,  % and $\score^{(1)}_{\Hc}$ with the same Hilbert norm in~\eqref{eq:t1def}, 
${\es}(u)$ defined in~\eqref{eq:t1def} coincides with the sign of value of current interpolator at $u$.

\begin{prop}
\label{p:labelKernel}
	For $\xt\in \mathcal{X}$ and ${\es}\in\{-,+\}$, 
	define $f(x)$ and  $f_{\es}^{\xt}(x)$ according to~\eqref{eq:defF} and~\eqref{eq:defFtu} in Section~\ref{s:DefScore}.
	%to be minimum norm interpolating function such that $f_t^{\xt}(x_i)=y_i$ and $f_t^{\xt}(\xt)=t$. 
	Then, $\es(u)$ defined in~\eqref{eq:t1def} satisfies 
	\[{\es}(u)=
 \begin{cases}
 +1 \quad \quad \text{ if } f(u)\geq 0
\\
 -1 \quad \quad \text{ if } f(u)<0\,.
 \end{cases} \]
\end{prop}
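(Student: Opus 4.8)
The plan is to compute both candidate squared norms $\|f_{+}^{u}\|_{\Hc}^2$ and $\|f_{-}^{u}\|_{\Hc}^2$ in closed form and compare them directly. As in~\eqref{eq:InterpFnctu}, the minimum-norm interpolant constrained to pass through $(u,\es)$ has coefficient vector $\widetilde{\alpha} = \Kt_u^{-1}\yt_{\es}$, so its squared norm is the quadratic form
\begin{align*}
\|f_{\es}^{u}\|_{\Hc}^2 = \yt_{\es}^T \, \Kt_u^{-1} \, \yt_{\es},
\end{align*}
exactly analogous to the identity $\|f\|_{\Hc}^2 = \yb^T \Kb^{-1} \yb$ for the unaugmented problem. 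Since the label $\es$ enters only through the last coordinate of $\yt_{\es}$, the idea is to isolate its contribution by inverting the bordered matrix $\Kt_u$ through its $2\times 2$ block structure.

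First I would apply the Schur-complement formula to $\Kt_u = \begin{bmatrix} \Kb & \abv_u \\ \abv_u^T & b \end{bmatrix}$, writing $s := b - \abv_u^T \Kb^{-1} \abv_u$ for the (scalar) Schur complement of $\Kb$. Substituting the block inverse into $\yt_{\es}^T \Kt_u^{-1} \yt_{\es}$ and expanding, the quadratic, linear, and constant contributions in $\es$ can all be grouped. The two crucial observations are that $f(u) = \abv_u^T \Kb^{-1} \yb$ is precisely the current interpolator evaluated at $u$, and that $\es^2 = 1$ since $\es \in \{-1,+1\}$. After collecting terms this should collapse to the compact form
\begin{align*}
\|f_{\es}^{u}\|_{\Hc}^2 = \|f\|_{\Hc}^2 + \frac{1}{s}\,\bigl(f(u) - \es\bigr)^2 ,
\end{align*}
where the ``$+1$'' completing the square comes from $\es^2=1$ rather than from $b$ (which is buried in $s$). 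The coefficient $1/s$ is strictly positive because $\Kt_u$ is positive definite, so its Schur complement satisfies $s>0$; crucially, this prefactor does not depend on $\es$.

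With this identity in hand the conclusion is immediate: minimizing $\|f_{\es}^{u}\|_{\Hc}$ over $\es\in\{-1,+1\}$ is equivalent to minimizing $(f(u)-\es)^2$, and since $(f(u)-1)^2 \le (f(u)+1)^2$ exactly when $f(u)\ge 0$ (the difference being $4f(u)$), the minimizer is $\es(u)=+1$ when $f(u)\ge 0$ and $\es(u)=-1$ when $f(u)<0$, which is the claimed characterization. I expect the only genuine labor to be the block-inverse bookkeeping in the second step; the conceptual crux is recognizing that the $\es$-dependent terms assemble into the perfect square $(f(u)-\es)^2$ with an $\es$-independent positive prefactor. The one hypothesis to verify rather than assume is the positive definiteness (hence invertibility) of $\Kt_u$ — i.e.\ that $u$ is distinct from the labeled points and the kernel is strictly PD — which simultaneously guarantees that $f_{\es}^{u}$ is well-defined and that $s>0$.
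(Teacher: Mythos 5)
Your proposal is correct and follows essentially the same route as the paper's proof: both compute $\|f_{\es}^{u}\|_{\Hc}^2 = \yt_{\es}^T \Kt_u^{-1} \yt_{\es}$ via Schur-complement inversion of the bordered kernel matrix and arrive at the same key identity, since your $\frac{1}{s}\bigl(f(u)-\es\bigr)^2$ equals the paper's $\frac{\big[1-\es f(u)\big]^2}{1-\abv_u^T \Kb^{-1}\abv_u}$ once $\es^2=1$ and $b=K(u,u)=1$ are used. Your explicit attention to the positive definiteness of $\Kt_u$ (hence $s>0$) is a point the paper leaves implicit, but it does not change the argument.
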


\begin{proof}
	
Let $\yt_{\es}=[y_1,\cdots,y_n, {\es}]^T$, 
 $\abv_u=[K(x_1,\xt),\cdots,K(x_n,\xt)]^T$
  and $b = K(\xt,\xt)=1$. 
  Let $\Kb$  be the kernel matrix for the elements in $\Lc$ and $\widetilde{\Kb}_u$ be the kernel matrix for the elements in $\Lc\cup \{u\}$, as defined in~\eqref{eq:defKtYt}. 
%  and $\Kb$ defined above. 
  Then,    for ${\es}\in\{-1,+1\}$
\begin{align*}
\|f^{\xt}_{\es} (x)\|_{\Hc}^{2} 
& = \yt_{\es}^T\,\, \widetilde{\Kb}_u^{-1} \,\, \yt_{\es}
 \overset{(a)}{=} 
 \yb^T\,\, 
 \Big( \Kb- \abv_u \, \abv_u^T\Big)^{-1}\,\, \yb - 2{\es}\, \,\yb^T \, \Big( \Kb- \abv_u \, \abv_u^T \Big)^{-1}
\abv
  + \Big( 1-\abv_u^T  \, \Kb^{-1} \abv_u \Big)^{-1}
  \\
  &
   \overset{(b)}{=} 
  \yb^T\,\, \Kb^{-1}\,\, \yb +
   \frac{\Big(1- {\es}\, \yb^T\,\, \Kb^{-1} \abv_u\Big)^2 }{ 1-\abv_u^T  \, \Kb^{-1} \abv_u}
   \overset{(c)}{=} 
 \|f(x)\|_{\Hc}^{2}+ \frac{\big[1-{\es}\, f(u)\big]^2}{ 1-\abv_u^T  \, \Kb^{-1} \abv_u}\,.
 \end{align*}
where  Schur's complement formula gives (a)  and Woodbury Identity
with some algebra algebra gives (b).  We are using the property that $K(x,x)=1$ and the diagonal elements of matrix $\widetilde{\Kb}_u$ are equal to one. 
(c) uses~\eqref{eq:InterpFnc} for the minimum norm interpolating function based on $\Lc$, \textit{i.e.}, $f(x)$.
 Hence, 
 	$\|f^{\xt}_{+} (x)\|_{\Hc} > \|f^{\xt}_{-} (x)\|_{\Hc}$ if and only if
 	$f(u) < 0$ which gives the statement of proposition.
\end{proof}

%%\MK{We could say whenever we don't mention the norm from now on, it is w.r.t. the Hilbert norm}
%%%%%%%%%%%%%%%%%%
%\begin{prop}
%\label{p:EquivScoreHilbert}
%	If the norm used in definition of $\score^{(1)}(\xt)$ and $\score^{(2)}(\xt)$ is the Hilbert norm, then 
%	$\score^{(1)}(\xt) = \score^{(2)}(\xt) + C$
%	%\[\score^{(1)}(\xt) = \score^{(2)}(\xt) + C\]
%	for all $\xt \in\Uc$. 
%	Hence,  $\score^{(1)}(\xt)$ and $\score^{(2)}(\xt)$ can be
%        used interchangeably.
%\end{prop}
%We omit the proof for brevity. It is based on some simple algebra which shows that for any $\xt$ and $t\in\{-1,+1\}$,
%\[\|f^{\xt}_{t} (x) -f(x)\|_{\Hc}  = 
%\|f^{\xt}_{t} (x)\|_{\Hc}
%-
%\|f(x)\|_{\Hc}\,.\]

\vspace{-.05in}

%%%%%%%%%%%%%%%%%
%%%%%%%%%%%%%%%%%
\subsection{Radial Basis Kernels}
From here on, we will focus on  minimum norm interpolating functions with radial basis kernels. 
The kernel functions we use have the following form: For $x,x'\in\mathbb{R}^d$, $h>0$ and $p>1$, let
\begin{align}
\label{eq:kernelDef}
k_{h,p}(x,x') = \exp\Big(-\frac{1}{h} \| x-x'\|_p\Big)\,,
\end{align}
where $\|x\|_p:=\big(\sum_{i=1}^d x_i^p\big)^{1/p} $ is the $\ell_p$ norm and  $ \| x-x'\|_p$ is the Minkowski distance satisfying the triangle inequality. For $p=1,2$ this category of kernels  construct Reproducing Kernel Hilbert Spaces.
When the parameters $h$ and $p$ are specified, we denote the kernel function $k_{h,p}(x,y)$ by $k(x,y)$.

%%%%%%%%%%%%%%%%%
%%%%%%%%%%%%%%%%%
\subsection{Laplace Kernel in One Dimension}
\label{s:LaplaceKernelOneD}
To develop some intuition, we consider active learning in 
one-dimension.  The sort of target function we have in mind is a
multiple threshold classifier.  Optimal active learning in this
setting coincides with binary search.  We now show that the proposed selection criterion based on 
$\score_{\Hc}$ with Hilbert norm associated with the Laplace kernels result in an optimal active
learning in one dimension (proof in Appendix~\ref{s:AppLaplace}).
\begin{prop}
\label{p:Laplace-1D}
[Maximin criteria in one dimension with Laplace kernel]\label{prop:BinarySearch}
Define $K(x,x')=\exp(-|x-x'|/h)$ to be the Laplace kernel in one dimension and the minimum norm interpolator function defined in Section~\ref{sec:RKHS}. 
Let the selection criterion be based on $\score_{\Hc}(u)$ function defined in~\eqref{fscore} with the Laplace kernel Hilbert norm. 
 Then the following statements hold for any value of $h>0$:
\begin{enumerate}
\item Let $x_1$ and $x_2$ be two neighboring labeled points in $\Lc$. Then 
$\score_{\Hc}\left(\frac{x_1 + x_2}{2}\right) \geq \score_{\Hc}(\xt)$ for all $x_1<\xt<x_2$.
\item Let $x_1<x_2$ and $x_3<x_4$ be two pairs of neighboring  labeled points such that $x_2-x_1\geq x_4-x_3$, then
\begin{itemize}
\item if  $y_1\neq y_2$ and $ y_3 = y_4$. Then
$\score_{\Hc}\left(\frac{x_1 + x_2}{2}\right) \geq \score_{\Hc}\left(\frac{x_3+ x_4}{2}\right)$.
\item if  $y_1= y_2$ and $ y_3 \neq y_4$. Then
$\score_{\Hc} \left(\frac{x_1 + x_2}{2}\right) \leq\score_{\Hc}\left(\frac{x_3+ x_4}{2}\right)$.
\item if  $y_1\neq y_2$ and $ y_3 \neq y_4$. Then
$\score_{\Hc} \left(\frac{x_1 + x_2}{2}\right) \leq\score_{\Hc} \left(\frac{x_3+ x_4}{2}\right)$.
\item if  $y_1= y_2$ and $ y_3 = y_4$. Then
$\score_{\Hc} \left(\frac{x_1 + x_2}{2}\right) \geq\score_{\Hc}\left(\frac{x_3+ x_4}{2}\right)$.
\end{itemize}
\end{enumerate}
\end{prop}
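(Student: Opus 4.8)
The plan is to collapse the score into a single scalar increment and then exploit the one-dimensional Markov structure of the Laplace kernel to obtain closed-form, two-point expressions that are directly comparable. First I would invoke Proposition~\ref{p:labelKernel}, which tells us that $\es(u)$ agrees with the sign of $f(u)$ and, more importantly, supplies the identity proved there,
\[
\score_{\Hc}(u)^2 \;=\; \|f^{u}\|_{\Hc}^2 \;=\; \|f\|_{\Hc}^2 \;+\; \frac{\big(1-|f(u)|\big)^2}{1-\abv_u^T\Kb^{-1}\abv_u}\,,
\]
where I have used $\es(u)f(u)=|f(u)|$. Since $\|f\|_{\Hc}^2$ does not depend on $u$, comparing scores is equivalent to comparing the increment $g(u):=\big(1-|f(u)|\big)^2/\big(1-\abv_u^T\Kb^{-1}\abv_u\big)$, and the whole proposition reduces to a statement about $g$.

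The decisive structural step, and the one I expect to be the main obstacle, is showing that for $u$ lying between two \emph{neighboring} labeled points $x_1<u<x_2$, every quantity in $g(u)$ depends only on $x_1,x_2,y_1,y_2,u$ and not on the remaining labeled points. For the mean term this follows because between consecutive data points the interpolant $\sum_i\alpha_i e^{-|x-x_i|/h}$ is a linear combination of $e^{x/h}$ and $e^{-x/h}$, hence is pinned down by its two boundary values $f(x_1)=y_1$ and $f(x_2)=y_2$; for the Schur complement it follows from the Markov (screening) property of the exponential kernel, equivalently from the tridiagonality of $\Kb^{-1}$ on ordered points, using $e^{-|x_1-u|/h}e^{-|u-x_2|/h}=e^{-|x_1-x_2|/h}$. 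Setting $a=e^{-(u-x_1)/h}$ and $b=e^{-(x_2-u)/h}$ so that $ab=e^{-(x_2-x_1)/h}=:r$ is fixed along the interval, the two-point computation yields $1-\abv_u^T\Kb^{-1}\abv_u=(1-a^2)(1-b^2)/(1-a^2b^2)$ and, after simplification,
\[
g(u)=\frac{(1-a)(1-b)(1-ab)}{(1+a)(1+b)(1+ab)}\ \ (y_1=y_2),\qquad
g(u)=\frac{(1-a)(1+b)(1+ab)}{(1+a)(1-b)(1-ab)}\ \ (y_1\neq y_2,\ a\geq b)\,.
\]

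For Property~1 I would hold $ab=r$ fixed and vary the position. In the same-label case $g$ rewrites as $\tfrac{1-r}{1+r}\cdot\frac{(1+r)-(a+b)}{(1+r)+(a+b)}$, which is decreasing in $a+b$; since $a+b$ is minimized at $a=b$ (the midpoint) under the constraint $ab=r$, $g$ is maximized there. In the opposite-label case I substitute $b=r/a$ and differentiate the logarithm, obtaining $g'(a)/g(a)=-\tfrac{2}{1-a^2}-\tfrac{2r}{a^2-r^2}<0$ on $(\sqrt r,1)$, so $g$ strictly decreases as $u$ moves off center; symmetry in $a\leftrightarrow b$ handles the other half-interval. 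In both cases the midpoint is the maximizer, giving Property~1 for arbitrary labels.

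For Property~2 I evaluate $g$ at the two midpoints. Writing $m=e^{-d/(2h)}$ for an interval of length $d$, the midpoint of an oppositely labeled interval gives $g_{\neq}(m)=(1+m^2)/(1-m^2)$, which is increasing in $m$ and satisfies $g_{\neq}>1$, while the midpoint of an identically labeled interval gives $g_{=}(m)=(1-m)^3/\big((1+m)(1+m^2)\big)$, which is decreasing in $m$ and satisfies $g_{=}<1$. Because the hypothesis $x_2-x_1\geq x_4-x_3$ translates to $m_{12}\leq m_{34}$, the four sub-cases fall out immediately: the two mixed-type comparisons follow from $g_{=}<1<g_{\neq}$, the both-opposite comparison from monotonicity of $g_{\neq}$ in $m$, and the both-same comparison from monotonicity of $g_{=}$ in $m$. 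The only genuinely delicate point in the whole argument is the Markov reduction of the previous paragraph; once the two-point formulas are in hand, everything else is elementary single-variable monotonicity.
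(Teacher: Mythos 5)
Your proof is correct, and every formula in it checks out against the paper's: your two-point expressions, evaluated at the midpoint $a=b=m=e^{-d/(2h)}$, satisfy $g_{\neq}(m)=\tfrac{1+m^2}{1-m^2}=-1+\tfrac{2}{1-m^2}$ and $g_{=}(m)=\tfrac{(1-m)^3}{(1+m)(1+m^2)}=-1-\tfrac{2}{1+m^2}+\tfrac{4}{1+m}$, which are exactly the increments appearing in the paper's midpoint-score formula~\eqref{eq:scoreujLaplace}. The organization, however, is genuinely different. The paper first derives the tridiagonal inverse of the Laplace kernel matrix~\eqref{eq:LaplaceBlockD}, converts it into the global chain decomposition~\eqref{eq:NormFLaplaceK} of the squared norm as a sum of per-gap terms $2\big(1+y_iy_{i+1}e^{-(x_{i+1}-x_i)/h}\big)^{-1}$, and then notes that inserting $u$ perturbs only one term~\eqref{eq:LapNormNewInterp}; you instead start from the kernel-generic rank-one update established inside the proof of Proposition~\ref{p:labelKernel} and localize it with the screening/Markov property of the exponential kernel, which is the same tridiagonality in different clothing. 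After localization the two arguments are algebraically equivalent (your constraint $ab=r$ plays the role of the paper's $AB=C$), and both finish with elementary one-variable monotonicity. What your route buys: the only Laplace-specific input is the screening reduction, the rest reuses an identity already proved for arbitrary kernels, and the clean separation $g_{=}<1<g_{\neq}$ makes the two mixed-label cases of Property~2 immediate rather than requiring a computation. What the paper's route buys: the explicit chain formula values the score of every candidate in every gap simultaneously, which is what its two-step argument (per-interval maximization, then cross-interval comparison) and the subsequent binary-search corollary lean on directly. One further point in your favor: your monotonicity directions --- $g_{\neq}$ increasing in $m$, hence decreasing in the gap length, and $g_{=}$ the opposite --- are the correct ones and are what the four bullets of Property~2 actually require; the paper's Step~2 prose states both directions backwards (its displayed formulas, though, agree with yours), so your derivation also serves as a correction of that slip.
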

The key conclusion drawn from these properties is that the midpoints
between the closest oppositely labeled neighboring examples have the highest score.  If there
are no oppositely labeled neighbors, then the score is largest at the
midpoint of the largest gap between consecutive samples. 
Thus, the
score results in a binary search for the thresholds definining the
classifier.  Using the proposition above, it is easy to show the
following result, proved in the Appendix~\ref{s:AppBinarySearch}.
%\rob{Perhaps we don't even  need to take random samples, since we will sample largest gaps if no  oppositely labeled neighbors.}
%%%%%%%%%%%%%%%%%
% \begin{corollary}
% 	Let each point of $N$ in interval $[0,1]$ is labeled according to a $k-$ piecewise constant function $g(x)$ so that $y_i=g(x_i)\in\{-1,+1\}$. 
% 	If initially, by taking $C k$ random samples, for some constant $C$, we have at least one point in each segment of function $g(x)$, then by running the active learning algorithm with Laplace Kernel and any bandwidth, after $k\log N$ quesries, the function $g(x)$ can be reconstructed perfectly. 
% \end{corollary}
%%%%%%%%%%%%%%%%%
%%%%%%%%%%%%%%%%%
%%%%%%%%%%%%%%%%%
\begin{corollary}
       Consider $N$ points uniformly distributed in the interval
       $[0,1]$ labeled according to a $k$-piecewise constant function $g(x)$ so that $y_i=g(x_i)\in\{-1,+1\}$ and length of the pieces are roughly on the order of $\Theta(1/K)$. 
	Then by running the proposed active learning algorithm with
        Laplace Kernel and any bandwidth, after $O(k\log N)$ queries
        the sign of the resulting interpolant $f$ correctly labels all
        $N$ examples (i.e., the training error is zero).
\label{cor:binsearch}
\end{corollary}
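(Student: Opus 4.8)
The plan is to show that the properties collected in Proposition~\ref{p:Laplace-1D} force the algorithm to execute an optimal binary search for the $O(k)$ sign changes of $g$, spending $O(\log N)$ queries per sign change. Throughout, I would call an interval $(x_i,x_{i+1})$ between two neighboring labeled points \emph{active} if $y_i\neq y_{i+1}$ (it straddles a sign change of $g$) and \emph{inactive} if $y_i=y_{i+1}$. The claim then reduces to two facts: (i) once no active interval contains an unlabeled point, $\mathrm{sign}(f)$ labels every point correctly; and (ii) the algorithm reaches that configuration within $O(k\log N)$ queries.

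For (i), I would first record the sign behavior of the one-dimensional Laplace interpolant. Between consecutive nodes the minimum-norm interpolant solves $f''=f/h^2$ and hence equals $Ae^{x/h}+Be^{-x/h}$ there; matching two equal endpoint labels forces $A$ and $B$ to share that sign, so $f$ keeps the common sign across an inactive interval and crosses zero exactly once across an active interval. Consequently, if every unlabeled point lies in an inactive interval containing no sign change of $g$, its two labeled neighbors carry its true label and $\mathrm{sign}(f)$ agrees with $g$ there, while the labeled points are interpolated exactly. This converts ``zero training error'' into the purely combinatorial condition that no sign change of $g$ is left unlocalized.

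For (ii), I would use Proposition~\ref{p:Laplace-1D} to pin down the dynamics. Part~1 says the score inside any interval is maximized at its midpoint, so among the unlabeled points of a chosen interval the algorithm selects the one nearest the midpoint; since the $N$ points are near-uniform, the midpoint-nearest point is an approximate count-median and each bisection roughly halves the number of unlabeled points on the side that retains the sign change. Part~2 says an active interval always outscores an inactive one and, among active intervals, the smaller scores higher; thus the algorithm repeatedly bisects a single active interval until it contains no unlabeled point, fully localizing one sign change in $O(\log N)$ queries, and only then moves on. With at most $k-1$ sign changes this gives $O(k\log N)$ queries, plus an initial $O(k)$ to seed the endpoints and one labeled point per piece.

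The hard part will be ruling out a \emph{hidden} sign change: an inactive interval whose endpoints share a label but which straddles an even number of sign changes of $g$, and which the active-first preference would never touch. Here the $\Theta(1/k)$ piece-length assumption is essential, since two consecutive sign changes are $\Theta(1/k)$ apart, so an interval can hide a change only if its length is $\gtrsim 1/k$. I would eliminate this using the same-label sub-case of Proposition~\ref{p:Laplace-1D}, Part~2, which makes the algorithm bisect the \emph{largest} inactive interval once no active interval has unlabeled points; repeatedly halving the longest inactive interval drives all interval lengths below the piece scale, after which every interval contains at most one sign change and nothing can stay hidden. Checking that this clean-up phase also costs only $O(k\log N)$, and that the near-uniform spacing genuinely delivers the claimed halving, are the routine quantitative steps left to fill in.
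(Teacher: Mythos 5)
Your proposal is correct and takes essentially the same route as the paper's own proof: both use Proposition~\ref{p:Laplace-1D} to show that the selection rule bisects (at midpoints) the smallest oppositely-labeled interval when one exists and otherwise samples the midpoint of the largest identically-labeled gap, and both invoke the $\Theta(1/k)$ piece-length assumption to bound the exploration phase, giving $O(\log N)$ queries per threshold and $O(k\log N)$ overall. If anything, your write-up is more careful than the paper's one-paragraph argument, which leaves implicit both the zero-crossing structure of the Laplace interpolant behind the zero-training-error claim (your step (i)) and the possibility of sign changes hidden inside identically-labeled intervals (your clean-up phase).
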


This statement is true for $N>5/h$. The proof is provided in Appendix~\ref{s:AppBinarySearch}.

\vspace{-.05in}

%%%%%%%%%%%%%%%%%
%%%%%%%%%%%%%%%%%
\subsection{General Radial-Basis Kernels in One Dimension}
\label{s:RadialKernelOneD}
% The radial basis kernel with parameter $p=2$ is called
% \textit{Gaussian kernel}.
% I
In the next proposition, 
we look at the special case of radial basis kernels, defined in Equation\eqref{eq:kernelDef} applied to one dimensional functions with only three initial points. 
We  show how maximizing $\score_{\Hc}$ with the appropriate Hilbert norm is equivalent to picking the zero-crossing point of our current interpolator.
\begin{prop}[One Dimensional Functions with Radial Basis Kernels]
\label{p:RadialBasis}
Assume that  for any pair of samples $x,x'\in\Lc$ we have $|x-x'|\geq \Delta$. 
Assume $\Delta h^{-1/p} \geq D$ for a constant value of $D$. 
 Let $x_1< x_2<x_3\in\mathbb{R}$, $y_1=y_2 = +1$ and $y_3=-1$. For $\xt$ such that $x_2+\Delta/2<\xt<x_3-\Delta/2$, we have
 $\score_{\Hc}(\xt) \leq \score_{\Hc}(u^*)$
%
% \[\argmax_{\xt: x_2+\Delta/2<\xt<x_3-\Delta/2} \score^{(1)}(\xt) =u^*\,, \]  
 %
 where $u^*$ is the point satisfying $f(u^*)=0$. 
\end{prop}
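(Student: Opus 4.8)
The plan is to start from the exact norm identity established in the proof of Proposition~\ref{p:labelKernel}, namely $\|f^{\xt}_{\es}\|_{\Hc}^2=\|f\|_{\Hc}^2+\frac{[1-\es f(u)]^2}{1-\abv_u^T\Kb^{-1}\abv_u}$. Since $\es(u)=\mathrm{sign}(f(u))$ gives $\es(u)f(u)=|f(u)|$, one obtains $\score_{\Hc}(u)^2=\|f\|_{\Hc}^2+\frac{(1-|f(u)|)^2}{1-\abv_u^T\Kb^{-1}\abv_u}$. As $\|f\|_{\Hc}^2$ is independent of $u$, maximizing $\score_{\Hc}$ is equivalent to maximizing $g(u):=\frac{(1-|f(u)|)^2}{1-\abv_u^T\Kb^{-1}\abv_u}$, so the proposition reduces to showing that $g$ is maximized at the zero crossing $u^*$.

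Next I would use that in one dimension $\|x-x'\|_p=|x-x'|$, so the radial kernel collapses to the Laplace kernel $K(x,x')=e^{-|x-x'|/h}$ for every $p$; this is the covariance of a Markov (Ornstein--Uhlenbeck) process and satisfies the screening identity $K(x_1,u)=K(x_1,x_2)K(x_2,u)$ for $x_1<x_2<u$. Hence, for any $u\in(x_2,x_3)$, both the interpolant value $f(u)$ (the kriging mean) and the residual $1-\abv_u^T\Kb^{-1}\abv_u$ (the kriging variance) depend only on the bracketing points $x_2,x_3$. Writing $a=e^{-(u-x_2)/h}$, $b=e^{-(x_3-u)/h}$ and $\rho=ab=e^{-(x_3-x_2)/h}$, the two-point formulas give $f(u)=\frac{a-b}{1-\rho}$ and $1-\abv_u^T\Kb^{-1}\abv_u=\frac{(1-a^2)(1-b^2)}{1-\rho^2}$. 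Thus $f(u)=0$ exactly when $a=b$, i.e.\ at the midpoint $u^*=(x_2+x_3)/2$, confirming that the zero crossing is the midpoint and that $g(u^*)=\frac{1+\rho}{1-\rho}$.

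With these closed forms, the inequality $g(u)\le g(u^*)$ becomes purely algebraic. By the reflection symmetry $u\mapsto x_2+x_3-u$, which swaps $a\leftrightarrow b$ and sends $f\mapsto -f$ while leaving $|f|$ and the variance unchanged, I may assume $u\le u^*$, so $a\ge b$ and $|f(u)|=f(u)$. Using the identity $1-f(u)=\frac{(1-a)(1+b)}{1-\rho}$, the target inequality reduces, after clearing the positive denominators, to $(1-a)^2(1+b)^2\le(1-a^2)(1-b^2)$, and after cancelling the factor $(1-a)(1+b)>0$ to $(1-a)(1+b)\le(1+a)(1-b)$, i.e.\ simply $b\le a$. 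This holds precisely because $u\le u^*$, with equality iff $u=u^*$; the remaining half follows by symmetry.

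The main obstacle is the reduction step: rigorously justifying that $x_1$ drops out of both $f(u)$ and $1-\abv_u^T\Kb^{-1}\abv_u$ for $u\in(x_2,x_3)$. The cleanest route is to invoke the Markov/screening property of the exponential kernel; alternatively one can verify it directly by peeling off $x_1$ with a Schur complement and using $K(x_1,u)=K(x_1,x_2)K(x_2,u)$ to show its contribution cancels. I emphasize that this argument requires no lower bound on the spacing, since the inequality holds for every $u\in(x_2,x_3)$. The separation hypotheses $|x-x'|\ge\Delta$ and $\Delta h^{-1/p}\ge D$ thus appear inessential to this route; they likely support an alternative perturbative estimate of the $3\times3$ inverse, or merely guarantee that the midpoint $u^*$ lies in the restricted interval $(x_2+\Delta/2,\,x_3-\Delta/2)$, as $u^*$ sits at distance $(x_3-x_2)/2\ge\Delta/2$ from each endpoint.
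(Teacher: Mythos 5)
Your argument hinges on the claim that in one dimension the kernel collapses to the Laplace kernel for every $p$, so that the Markov/screening identity $K(x_1,u)=K(x_1,x_2)K(x_2,u)$ applies. That claim misreads the kernel this proposition is about. Although~\eqref{eq:kernelDef} as literally written does reduce to $\exp(-|x-x'|/h)$ when $d=1$, the hypothesis $\Delta h^{-1/p}\geq D$ and the paper's own proof make the intended object unambiguous: after normalizing distances by $h^{1/p}$ (so $x_2-x_1=bh^{1/p}$, $x_3-x_2=ah^{1/p}$), the kernel matrix in the appendix has entries $e^{-b^p}$, $e^{-a^p}$, $e^{-(a+b)^p}$, i.e.\ the kernel being analyzed is $\exp(-|x-x'|^p/h)$ (Gaussian when $p=2$). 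For $p>1$ the exponent is not additive, $e^{-(s+t)^p}\neq e^{-s^p}e^{-t^p}$, so the screening identity fails; this is precisely the step you yourself flagged as the ``main obstacle,'' and it cannot be repaired. Under your reading the proposition would in any case be redundant, since the Laplace case is already covered by Proposition~\ref{p:Laplace-1D}, and there would be no reason for a separate ``rather tedious'' proof.

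Because screening fails, the point $x_1$ (labeled $+1$) does not drop out: it perturbs both $f(u)$ and $1-\abv_u^T\Kb^{-1}\abv_u$ on $(x_2,x_3)$, the zero crossing $u^*$ is in general \emph{not} the midpoint $(x_2+x_3)/2$, and your closed forms $f(u)=(a-b)/(1-\rho)$ and the final reduction to $b\leq a$ no longer hold. This is also why the separation hypotheses are essential rather than ``inessential'' as you assert: the paper's proof uses $a,b,c,a-c\geq D$ to control the cross terms (the quantities $\epsilon_1,\dots,\epsilon_6$ and the determinant bound $|\Kb|\geq 0.9$) in the $3\times 3$ inverse, and with those bounds shows that $\|f^u_+\|$ is increasing and $\|f^u_-\|$ is decreasing as $u$ moves across the restricted interval, so that the max-min is attained where the two norms cross, which is shown to be equivalent to $f(u^*)=0$. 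Your algebra is a correct and clean proof for the genuinely exponential case $p=1$, but it does not prove the proposition for the kernels it is meant to cover.
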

The proof is rather tedious and appears in  Appendix~\ref{s:ProofFirstPoint}. But the idea is based on showing that with small enough bandwidth, $\|f^{\xt}_+\|$ is increasing in $\xt$ in the interval $[x_2+\Delta/2 , x_3-\Delta/2]$  and  $\|f^{\xt}_-\|$ is decreasing in $\xt$ in the same interval. This shows that $\max_{\xt}\min_{\es\in\{-1,+1\}} \|f^{\xt}_{\es}\|$ occurs at $u^*$ such that 
$\|f^{\xt^*}_+\|=\|f^{\xt^*}_-\|$. We showed that this is equivalent to the condition $f(u^*)=0$. 
%
%% \MK{It is a little strange. Computation shows that having an
%% oppositely labeled pair, increasing distance, increases the score of
%% the zero-cross (mid-point). I need to verify this by simulations}
%\vspace{-.05in}
%
%\subsection{Data-Based Norm}
%\label{s:DBnorm}
%As mentioned above, one can consider norms other than the RKHS norm.
%One natural option is to consider the $2$-norm of the output map.
%Consider 
%\begin{eqnarray}
%\score^{(2)}(\xt)  = \|f^{\xt} (x) - f(x)\|_{{\cal U}} := \sqrt{\sum_{v \in
%  {\cal U}} (f^{\xt} (v) - f(v))^2 } \ .
%\label{dbnorm}
%\end{eqnarray}

\subsection{Properties of data based-norm criterion}
\label{s:result}

Intuitively, $\score_{\data}$ measures the expected change in the squared norm over
all unlabeled examples if $u \in {\cal U}$ is selected as the next
point. This norm is sensitive to the particular distribution of the
data, which is important if the data are clustered.  This behavior will
be demonstrated in the multidimensional setting discussed next.

In this section, we present two theoretical results on the properties of data-based norm selection criterion. 
To do so, we will prove the properties of the selected examples based on the data-based norm in the context of  the clustered data. 
In particular, if the support of the generative distribution $P_X(x)$ is composed of several disjoint clusters, the data-based norm criterion prioritizes labeling samples from bigger clusters first. Subsequently, it selects a sample from each cluster to be labeled.
 If the clustering in the dataset is aligned with their labels (most of the samples in the same cluster are in the same class), labeling one sample in each cluster ensures rapid decay in the probability of error of the classifier as a function of number of labeled samples. This behavior is consistent with numerical simulations presented in Section~\ref{s:numsim}.

The next theorem will show that if the clusters are well-separated (the distance between the clusters are sufficiently large), then the first example to be selected to for labeling is in the biggest cluster. 
\begin{theorem}[First point in clustered data]\label{t:FirstPoint}
Fix $p>1$ and $h>0$. Let the distribution $\mathbb{P}(X)$ be uniform over $M$ disjoint sets $B_1,\cdots,B_M$ such that $B_i$ is an $\ell_p$ ball with radius $r_i$ and center $c_i$, \textit{i.e.}, 
\begin{align}
\label{eq:ballDef}
B_i= \Bc_{d,p}(r_i;c_i) := \{x\in\mathbb{R}^d: \|x-c_i\|_{p}\leq r_i \}\,.
\end{align}
Without loss of generality, assume $r_1> r_2>\cdots>r_K$. 
Define
$D=\min_{i\neq j} \|c_i-c_j\|_p -2r_1$ as an upper bound for the minimum distance between the clusters.

Assume $\Lc=\varnothing$ and let the interpolating functions $f$ be defined in~\eqref{eq:InterpFnc} with $k_{h,p}$ (defined in~\eqref{eq:kernelDef}). The selection criterion is based on  the $\score_{\data}$ function defined in~\eqref{dscore}. 
If 
$$D>  \frac{h}{2}\Big[\ln M - \ln
\big(1-(r_2/r_1)^d\big)\Big]\quad \quad \text{and}\quad \quad r_1\leq h/2,$$ 
 then the first point to be labeled is in the biggest ball, $B_1$.
\end{theorem}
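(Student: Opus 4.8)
The plan is to reduce $\score_{\data}$ to a single transparent integral and then compare its value at a well-chosen point of the largest ball against its value anywhere outside that ball. Since $\Lc=\varnothing$, the minimum norm interpolator $f$ of the (empty) labeled set is $f\equiv 0$, so $f(u)=0\ge 0$ for every $u$, and Proposition~\ref{p:labelKernel} gives $\es(u)=+1$. Interpolating the single point $(u,+1)$ in the RKHS yields $f^{u}(x)=k_{h,p}(u,x)=\exp(-\|u-x\|_p/h)$ (the $1\times1$ kernel matrix is $K(u,u)=1$). Plugging $f^u$ and $f\equiv0$ into \eqref{dscore} and writing $V$ for the total $\ell_p$-volume $V_{d,p}\sum_i r_i^d$ of the support (with $V_{d,p}$ the volume of the unit $\ell_p$ ball), I get
$$\score_{\data}(u)=\frac1V\sum_{i=1}^{M}\int_{B_i}\exp\!\Big(-\tfrac{2}{h}\|u-x\|_p\Big)\,dx.$$
Because $1/V$ is common to all $u$, it suffices to maximize $S(u):=\sum_i\int_{B_i}\exp(-\tfrac2h\|u-x\|_p)\,dx$.

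Next I would split $S(u)$, for $u\in B_j$, into the \emph{self term} $\int_{B_j}(\cdots)$ and the \emph{cross terms} $\sum_{i\ne j}\int_{B_i}(\cdots)$. For the self term the integrand is a radially (in $\ell_p$) decreasing function of $x$ centered at $u$, and $B_j$ has volume $V_{d,p}r_j^d$; the bathtub/rearrangement principle then bounds the self term by its value on the $\ell_p$-ball of the same volume centered at $u$, namely $G(r_j)$, where $G(r):=\int_{\|v\|_p\le r}\exp(-\tfrac2h\|v\|_p)\,dv$. Thus any competitor $u\in B_j$ with $j\ge 2$ has self term at most $G(r_2)$ (as $G$ is increasing and $r_j\le r_2$), whereas evaluating $S$ at the center $c_1$ makes the self term exactly $G(r_1)$.

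For the cross terms I would exploit the separation: if $u\in B_j$ and $x\in B_i$ with $i\ne j$, then $\|u-x\|_p\ge \|c_i-c_j\|_p-r_i-r_j\ge D$, so each cross integral is at most $V_{d,p}r_i^d\exp(-2D/h)$ and the competitor's total cross contribution is at most $(M-1)V_{d,p}r_1^d\exp(-2D/h)$, while the champion $c_1$ has nonnegative cross terms. Combining the three estimates, it remains to verify $G(r_1)-G(r_2)>$ (competitor cross bound). The lower bound $G(r_1)-G(r_2)=\int_{r_2\le\|v\|_p\le r_1}\exp(-\tfrac2h\|v\|_p)\,dv\ge e^{-2r_1/h}V_{d,p}(r_1^d-r_2^d)$ uses $r_1\le h/2$ to keep the annulus integrand bounded below, and the annulus volume equals $V_{d,p}r_1^d(1-(r_2/r_1)^d)$, exactly the quantity in the hypothesis; the threshold $D>\tfrac h2[\ln M-\ln(1-(r_2/r_1)^d)]$ is equivalent to $M e^{-2D/h}<1-(r_2/r_1)^d$, which is what forces the cross bound below the self-term gap.

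The delicate step, and the main obstacle, is controlling the competitor \emph{uniformly} over its position in $B_j$: the self term is largest when $u=c_j$ but the cross terms are largest when $u$ is pushed toward a neighboring ball, and these two effects trade off. The clean accounting is that when $u$ sits where its self term is near-maximal (close to $c_j$) its distance to the other balls is $\ge D+r_1$, so the cross bound acquires an extra factor $e^{-2r_1/h}$ that cancels the $e^{-2r_1/h}$ lost in the annulus lower bound, after which the stated condition on $D$ can be read off directly. Making this trade-off rigorous for every $u\in B_j$ (not only at the center), together with justifying the bathtub inequality for $\ell_p$ balls, is where the real effort lies; the hypothesis $r_1\le h/2$ is precisely what keeps the kernel slowly varying inside each ball, so that the self term stays near $G(r_j)$ and the annulus estimate loses only a controlled constant.
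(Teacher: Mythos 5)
Your proposal is correct and follows essentially the same route as the paper's proof in Appendix~\ref{s:ProofFirstPoint}: the same reduction of the score to $\frac{1}{V}\sum_{i}\int_{B_i}e^{-2\|x-u\|_p/h}\,\mathrm{d}x$ (via $f\equiv 0$, $\es(u)=+1$, $f^u=k(u,\cdot)$), the same champion/competitor comparison of $c_1$ against an arbitrary $u\in B_j$, the same triangle-inequality bound $e^{-2D/h}$ per unit volume on the cross terms, and your bathtub/rearrangement step is exactly the paper's Lemma~\ref{l:CenterAlignment}.

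One clarification about what you call the main obstacle: the paper performs no positional trade-off at all. It applies the two uniform bounds simultaneously for every $u\in B_j$ --- self term at most $\int_0^{r_j}e^{-2s/h}\,\mathrm{d}V_{d,p}(s)$ by Lemma~\ref{l:CenterAlignment}, cross terms at most $e^{-2D/h}\sum_{i\neq j}V_i$ --- which is legitimate since each upper bound holds pointwise in $u$, and then compares with $V\score_{\data}(c_1)\geq \int_0^{r_1}e^{-2s/h}\,\mathrm{d}V_{d,p}(s)$. The price is precisely the factor $e^{-2r_1/h}$ you identified: this accounting strictly requires $D\geq \frac{h}{2}\big[\ln M-\ln\big(1-(r_2/r_1)^d\big)\big]+r_1$, and the paper absorbs the additive $r_1\leq h/2$ slack into its hypotheses without comment. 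So your uniform accounting already matches the paper's proof as written; the $D+r_1$ refinement of the cross bound near $c_j$ and the trade-off you sketch would be needed only to recover the stated constant exactly, and the paper does not carry that out either.
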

The proof is presented in Appendix~\ref{s:ProofFirstPoint}.

%%%%%%%%%%%%%%
%%%%%%%%%%%%%%
%%%%%%%%%%%%%%
The next theorem shows that if the distance between the clusters are sufficiently large and the radius of the clusters are not too large, then the active learning algorithm based on the notion of $\score$ with data-based norm labels one sample from each cluster before zooming in inside the clusters. 

\begin{theorem}[Cluster exploration]\label{t:clusterExplore}
Let $\Sc$ be the support of $P_X$. 
Assume $\Sc=\cup_{i=1}^{M} B_i$ where $B_i$'s are $\ell_p$-balls with radii $r$ and centers $c_i$. 
Define $D:=\min_{i\neq j} \|c_i-c_j\|_p - 2r_1$ to be the minimum distance between the clusters.
Let $\Lc=\{x_1,x_2,\cdots,x_L\}$ be $L<M$ labeled points such that $x_1\in B_1, x_2\in B_2,\cdots,x_L\in B_L$. 
Let the selection criterion be based on  the $\score_{\data}$ function defined in~\eqref{dscore}. 
If 
 $r<h/3$ and $D\geq 12 h \ln(2M)$, then the next point to be labeled
 is in  a new ball ($\cup_{i={L+1}}^MB_{i}$) containing no labeled points.
\end{theorem}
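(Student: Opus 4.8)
The plan is to reduce $\score_{\data}$ to a closed form, show it is $\Theta(1/M)$ on every ball, and then prove that the per-ball constant is strictly larger on the empty balls than on the occupied ones. First I would record the rank-one update of the interpolant caused by adding a point. Writing $s(u):=1-\abv_u^T\Kb^{-1}\abv_u$ for the Schur complement that already appears in the proof of Proposition~\ref{p:labelKernel}, and $\phi_u(x):=k(u,x)-\abv_x^T\Kb^{-1}\abv_u$ with $\abv_x=[k(x_1,x),\dots,k(x_L,x)]^T$, the same block-inversion/Woodbury computation used in steps (a)--(b) there yields
\[
f^u(x)-f(x)=\frac{\es(u)-f(u)}{s(u)}\,\phi_u(x),
\]
so that, using $\es(u)=\operatorname{sign}(f(u))$ from Proposition~\ref{p:labelKernel},
\[
\score_{\data}(u)=\frac{(1-|f(u)|)^2}{s(u)^2}\int_{\mathcal X}\phi_u(x)^2\,dP_X(x).
\]
I would also record the reproducing-kernel facts $\phi_u(x_j)=0$, $\phi_u(u)=s(u)=\|\phi_u\|_{\Hc}^2$, and the pointwise bound $\phi_u(x)^2\le\|\phi_u\|_{\Hc}^2=s(u)$, which will convert Hilbert-norm control into integral control.

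Second, I would use the separation hypothesis $D\ge 12h\ln(2M)$ to decouple the balls. Any two points in distinct balls lie at $\ell_p$-distance at least $D$, so every cross-ball kernel value is at most $e^{-D/h}\le(2M)^{-12}$. Hence $\Kb=I_L+E$ with $\|E\|$ of this order, $\alpha=\Kb^{-1}\yb$ obeys $\alpha_j=y_j+O((2M)^{-12})$, and $f(u)$, $s(u)$, $\phi_u$ all agree with their fully decoupled single-ball counterparts up to additive errors bounded by $\operatorname{poly}(M)\,e^{-D/h}$, which is negligible. In particular, for $u$ in an occupied ball $B_i$ I obtain $|f(u)|=k(x_i,u)+O(\text{negl.})$ and $s(u)\ge 1-k(x_i,u)^2-O(\text{negl.})$ (including the other representers can only lower the distance-to-span by a negligible amount), and $\es(u)=y_i$ since the dominant term $y_i k(x_i,u)$ has magnitude at least $e^{-2r/h}$ and cannot be sign-flipped by the contamination.

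Third, I would bound the two regimes and compare. For an occupied ball, $\phi_u$ is negligible off $B_i$, so splitting $\int\phi_u^2\,dP_X$ and using $\phi_u^2\le s(u)$ on $B_i$ (which carries mass $P(B_i)=1/M$), together with $\tfrac{(1-|f(u)|)^2}{s(u)^2}\le1$ to absorb the off-ball tail, gives
\[
\score_{\data}(u)\ \le\ \frac{1}{M}\,\frac{(1-|f(u)|)^2}{s(u)}+O(\text{negl.})\ \le\ \frac{1}{M}\,\frac{1-k(x_i,u)}{1+k(x_i,u)}+O(\text{negl.}),
\]
and since $t\mapsto(1-t)/(1+t)$ is decreasing while $k(x_i,u)\ge e^{-2r/h}$ for $u,x_i\in B_i$, this is at most $\tfrac1M\frac{1-e^{-2r/h}}{1+e^{-2r/h}}+O(\text{negl.})$. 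For an empty ball $B_j$ ($j>L$) I would simply evaluate the score at its center $c_j$, where $f\approx0$ and $\phi_{c_j}\approx k(c_j,\cdot)$, giving $\score_{\data}(c_j)\ge\tfrac1M\,e^{-2r/h}-O(\text{negl.})$. The theorem then follows from the elementary inequality $\tfrac{1-a}{1+a}<a$, valid whenever $a=e^{-2r/h}>\sqrt2-1$, which is implied by $r<h/3$; this makes the empty-ball lower bound strictly exceed the occupied-ball upper bound once the $O(\text{negl.})$ terms are absorbed using $D\ge 12h\ln(2M)$.

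The main obstacle I anticipate is the occupied-ball estimate, where both the numerator $(1-|f(u)|)^2$ and the Schur complement $s(u)$ degenerate to $0$ as $u\to x_i$; the argument must control their \emph{ratio} uniformly over $B_i$ rather than pointwise, which is exactly what the algebraic collapse to $\tfrac{1-k(x_i,u)}{1+k(x_i,u)}$ accomplishes, and where the threshold $r<h/3$ enters. A secondary, purely bookkeeping burden is verifying that every cross-ball perturbation --- in $\Kb^{-1}$, in $f(u)$, in the lower bound for $s(u)$, and in the off-$B_i$ tail of $\int\phi_u^2$ --- is uniformly dominated by $\operatorname{poly}(M)\,e^{-D/h}$, so that the strict constant gap $\tfrac{1-a}{1+a}<a$ survives after the errors are added in.
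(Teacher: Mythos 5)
Your architecture is genuinely different from the paper's and is, in outline, attractive: you derive a closed form for the score from the rank-one update $f^u-f=\frac{\es(u)-f(u)}{s(u)}\phi_u$, bound the occupied-ball contribution by the reproducing-kernel Cauchy--Schwarz inequality $\phi_u(x)^2\le s(u)$, lower-bound empty balls at their centers, and close with the constant comparison $\frac{1-a}{1+a}<a$ for $a=e^{-2r/h}>\sqrt2-1$, which $r<h/3$ guarantees. The paper instead compares each $u$ in an occupied ball against its translate $v=c_{L+1}+(u-c_L)$ in an empty ball and does a pointwise integrand comparison (Step 5), so your route dispenses with the translation trick and makes the numerical constants transparent. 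Your rank-one formula, the empty-ball lower bound, and the final inequality between constants are all correct.

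The gap is in the occupied-ball estimate, exactly where you flagged it, and the resolution you propose does not actually resolve it. Your error control is purely additive: $|f(u)|=k(x_i,u)+O(\varepsilon)$ and $s(u)\ge 1-k(x_i,u)^2-O(\varepsilon)$ with $\varepsilon=\mathrm{poly}(M)\,e^{-D/h}$. From these two facts alone you cannot conclude $\frac{(1-|f(u)|)^2}{s(u)}\le\frac{1-k(x_i,u)}{1+k(x_i,u)}+O(\varepsilon)$ uniformly over $B_i$: once $u$ is so close to $x_i$ that $1-k(x_i,u)\lesssim\varepsilon$, your upper bound on the numerator is of order $\varepsilon^2$ while your lower bound on the denominator, $\big(1-k(x_i,u)\big)\big(1+k(x_i,u)\big)-\varepsilon$, is zero or negative, so the additive bounds give no control of the ratio at all; the same degeneracy invalidates the auxiliary claim $\frac{(1-|f(u)|)^2}{s(u)^2}\le1$ that you use to absorb the off-ball tail. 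The true ratio is in fact bounded in that regime, but only because the interpolation constraints $f(x_i)=y_i$ and $s(x_i)=0$ hold exactly, so near $x_i$ the cross-ball contamination perturbs $1-|f(u)|$ and $s(u)$ \emph{multiplicatively}, not additively --- a fact your sketch never establishes. Supplying it is precisely the role of the paper's Lemma~\ref{l:inversKt}: an exact Schur-complement inversion of the $2\times2$ block $\{x_i,u\}$ of $\Kt_u$, which yields $f^u-f=\frac{y_i}{1+k}\big[k(u,\cdot)-k\,k(x_i,\cdot)\big]+\rho$ with $k=k(x_i,u)$ and $\sup_x|\rho(x)|=O(\varepsilon)$, i.e., contamination that is additive at the level of coefficients and functions rather than at the level of the ratio. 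With that lemma your argument closes nicely (apply your Cauchy--Schwarz step to the main term, whose squared RKHS norm is exactly $\frac{1-k}{1+k}$); without it, or an equivalent uniform treatment of the regime $u\to x_i$, the proof is incomplete.
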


As a corollary of the above theorem, one can see that if the ratio of the distance between the clusters to the radius of clusters is sufficiently large ($D/r>36 ln(2M)$), then one can use a kernel with proper bandwidth which picks one sample from each cluster initially. 
The proof is presented in Appendix~\ref{s:prooftclusterExplore}.
%%%%%%%%%%%%%%
%%%%%%%%%%%%%%
%%%%%%%%%%%%%%

\section{Numerical Simulations of kernel based }
\label{s:numsim}
In this Section, we present the outcome of numerical simulations of the proposed selection criteria on synthetic and real data. 
In this section, $\score_{\Hc}$ is used to denoted the $\score$ function defined in~\eqref{fscore} with the Hilbert norm associated with the Laplace Kernel.  Similarly, $\score_{\data}$ is the $\score$ function defined in~\eqref{dscore} with the data-based norm.

\subsection{Bisection in one dimension}

The bisection process is illustrated experimentally in the
Figure~\ref{fig:oneDbs} below.  $\score_{\Hc}$ uses the RKHS
norm. For comparison, we also show the behavior of the algorithm using
$\score_{\data}$ and the data-based norm.  Data selection using either score drives the
error to zero faster than random sampling (as shown on the left). We
clearly see the bisection behavior of $\score_{\Hc}$, locating one
decision boundary/threshold and then another, as the proof corollary
above suggests. Also, we see that the data-based norm does more
exploration away from the decision boundaries.  As a result, the
data-based norm has a faster and more graceful error decay, as shown
on the right of the figure.  Similar behavior is observed in the
multidimensional setting shown in Figure~\ref{fig:SmoothCurveTwoD}.

\begin{figure}[h]
        \centering
\includegraphics[width=14cm]{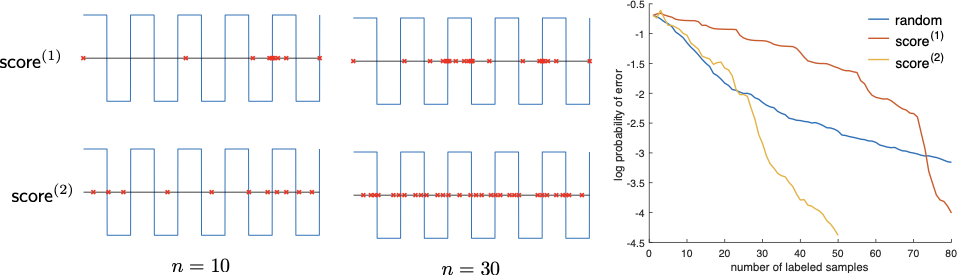}
        %%%
        \caption{Uniform distribution of samples in unit interval,
          multiple thresholds between $\pm 1$ labels,  and active
          learning using Laplace Kernel, Bandwidth$ = .1$. Probability
          of error of the interpolated function shown on right.}
\label{fig:oneDbs}
\end{figure}

\subsection{Multidimensional setting with smooth boundary}

The properties and behavior found in the one dimensional setting carry
over to higher dimensions.  In particular, the max-min norm criterion
tends to select unlabeled examples near the decision boundary {\em
  and} close to oppositely labeled examples,  This is illustrated in
Figure~\ref{fig:HeatmapKernel} below.
The inputs points (training examples) are
uniformly distributed in the
square $[-1,1]\times[-1,1]$. We trained an Laplace kernel machine to perfectly interpolate four training
points with locations and binary labels as depicted in
Figure~\ref{fig:HeatmapKernel}(a).  The color depicts the magnitude of the
learned interpolating function: dark blue is $0$ indicating the
``decision boundary'' and bright yellow is approximately
$3.5$. Figure~\ref{fig:HeatmapKernel}(b) denotes the score for selecting a point
at each location based on RKHS norm criterion.  Figure~\ref{fig:HeatmapKernel}(c) denotes the score for
selecting a point at each location based on data-based norm criterion
discussed above. Both criteria select the point on the decision boundary, but the RKHS
norm favors points that are  closest
to oppositely labeled examples whereas the data-based norm favors points
on the boundary further from labeled examples.

\begin{figure}[h]
        \centering
\includegraphics[width=.6\linewidth]{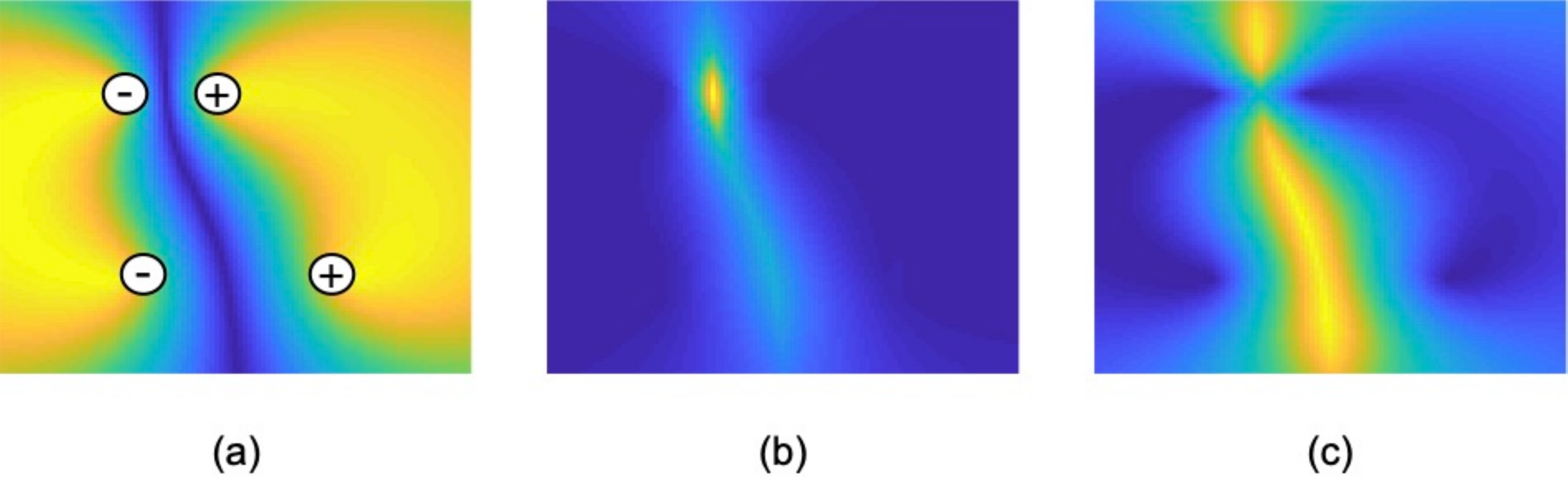}
        %%%
        \caption{Data selection of Laplace kernel active learner. (a)
  Magnitude of output
  map kernel machine trained to interpolate four data points as
indicated (dark blue is $0$ indicating the learned decision boundary).
(b) Max-Min RKHS norm selection of next point to label. Brightest yellow is
location of highest score and selected example. (c) Max-Min selection of next point to label using data-based
norm.  Both select the point on the decision boundary, but the RKHS
norm favors points that are  closest
to oppositely labeled examples.}
\label{fig:HeatmapKernel}
\end{figure}
\vspace{-.05in}

Next we present a modified scenario in which the examples are not
uniformly distributed over the input space, but instead concentrated
only in certain regions indicated by the magenta highlights in
Figure~\ref{fig:HeatmapKernelBoxed}(a).  In this setting, the example selection
criteria differ more significantly for the two norms.  The weight norm selection
criterion remains unchanged, but is applied only to regions where
there are examples. Areas with out examples to select are indicated by dark
blue in Figure~\ref{fig:HeatmapKernelBoxed}(b)-(c). The data-based norm is
sensitive to the non-uniform input distribution, and it scores
examples near the lower portion of the decision boundary highest.

\begin{figure}[h]
        \centering
\includegraphics[width=.6\linewidth]{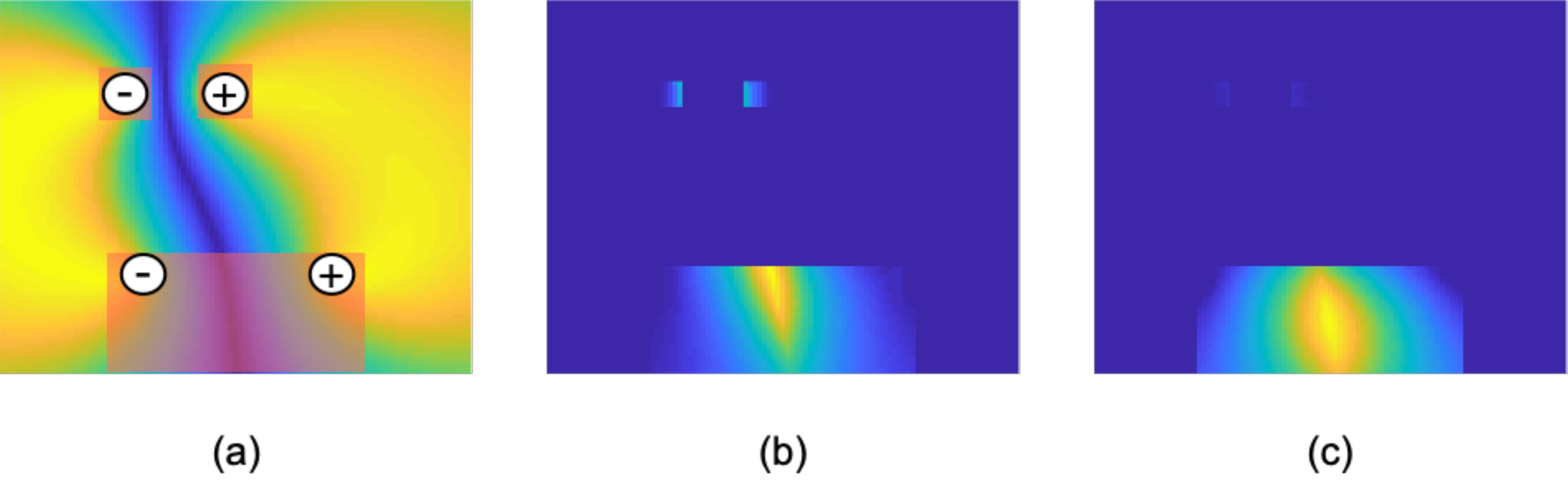}
        %%%
        \caption{Data selection of Laplace kernel active learner. (a)
  Unlabeled examples are only available in magenta shaded regions.
(b) Max-Min selection map using RKHS
norm~\eqref{fscore}. (c) Max-Min selection map using data-based
norm defined in Equation~\eqref{dscore}.}
\label{fig:HeatmapKernelBoxed}
\end{figure}

The distinction between the max-min selection criterion using the RKHS
vs.\ data-based norm is also apparent in the experiment in which a
curved decision boundary in two dimensions is actively learned using a
Laplace kernel machine, as depicted in
Figure~\ref{fig:SmoothCurveTwoD} below.  $\score_{\Hc}$ is the max-min RKHS norm criterion at progressive stages of the
learning process (from left to right).  The data-based norm is used in
$\score_{\data}$ defined in Equation~\eqref{dscore}.  Both dramatically
outperform a passive (random sampling) scheme and both demonstrate how
active learning automatically focuses sampling near the decision
boundary between the oppositely labeled data (yellow vs.\ blue).
However, the data-based norm does more exploration away from the
decision boundary.  As a result, the data-based norm requires slightly
more labels to perfectly predict all unlabeled examples, but has a
more graceful error decay, as shown on the right of the figure.

\begin{figure}[h]
        \centering
\includegraphics[width=12cm]{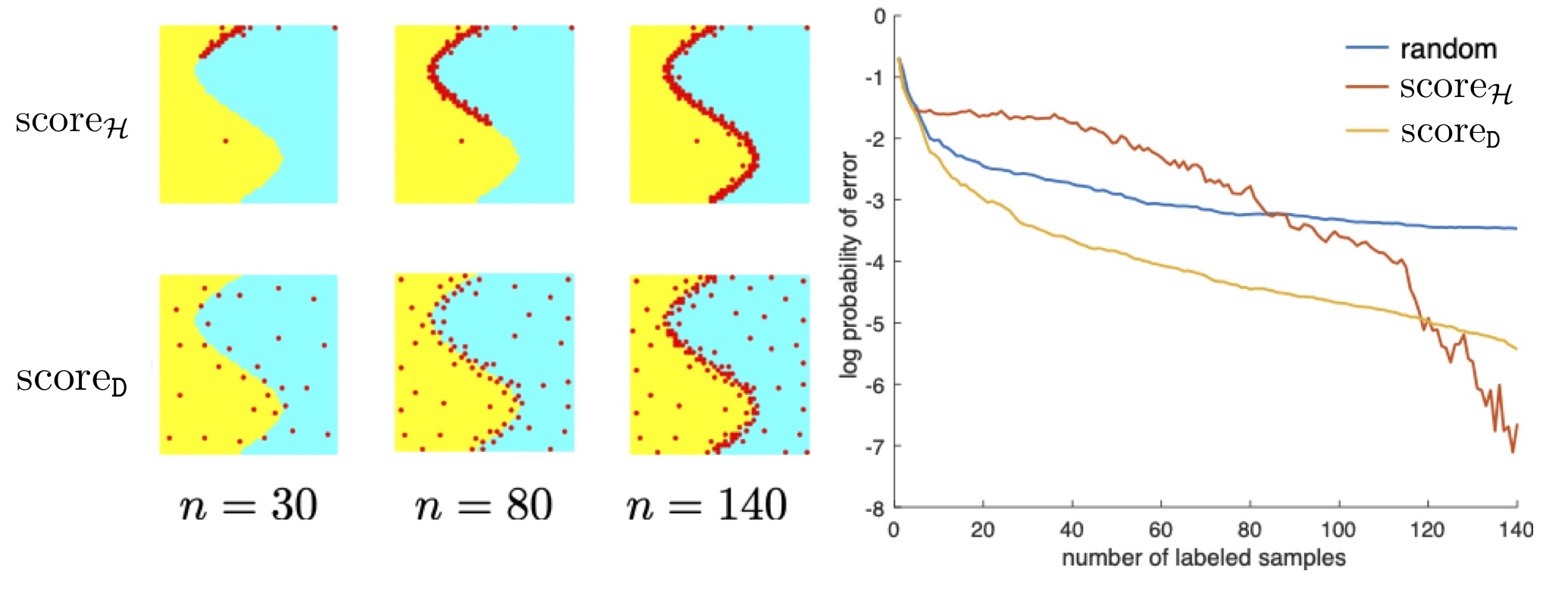}
        %%%
        \caption{Uniform distribution of samples, smooth boundary,
          Laplace Kernel, Bandwidth$ = .1$.  On left, sampling behavior
          of $\score_{\Hc}$ and $\score_{\data}$ at progressive stages
          (left to right).  On right, error probabilities as a function
          of number of labeled examples.}
\label{fig:SmoothCurveTwoD}
\end{figure}

\subsection{Multidimensional setting with clustered data}
To capture the properties of the proposed selection criteria in clustered data, we implemented the algorithm on synthetic clustered data in Figures~\ref{fig:cluster} and~\ref{F:TwoDClustering}.
 We demonstrate how
the data-based norm also tends to automatically select representive
examples from clusters when such structure exists in the unlabeled
dataset. Figure~\ref{fig:cluster} compares the behavior of selection
based on $\score_{\Hc}In $ with the RKHS norm and $\score_{\data}$ with
data-based norm, when data are clusters and each cluster is
homogeneously labeled.  We see that the data-based norm quickly
identifies the clusters and labels a representative from each, leading
to faster error decay as shown on the right.
\begin{figure}[h]
        \centering
\includegraphics[width=10cm]{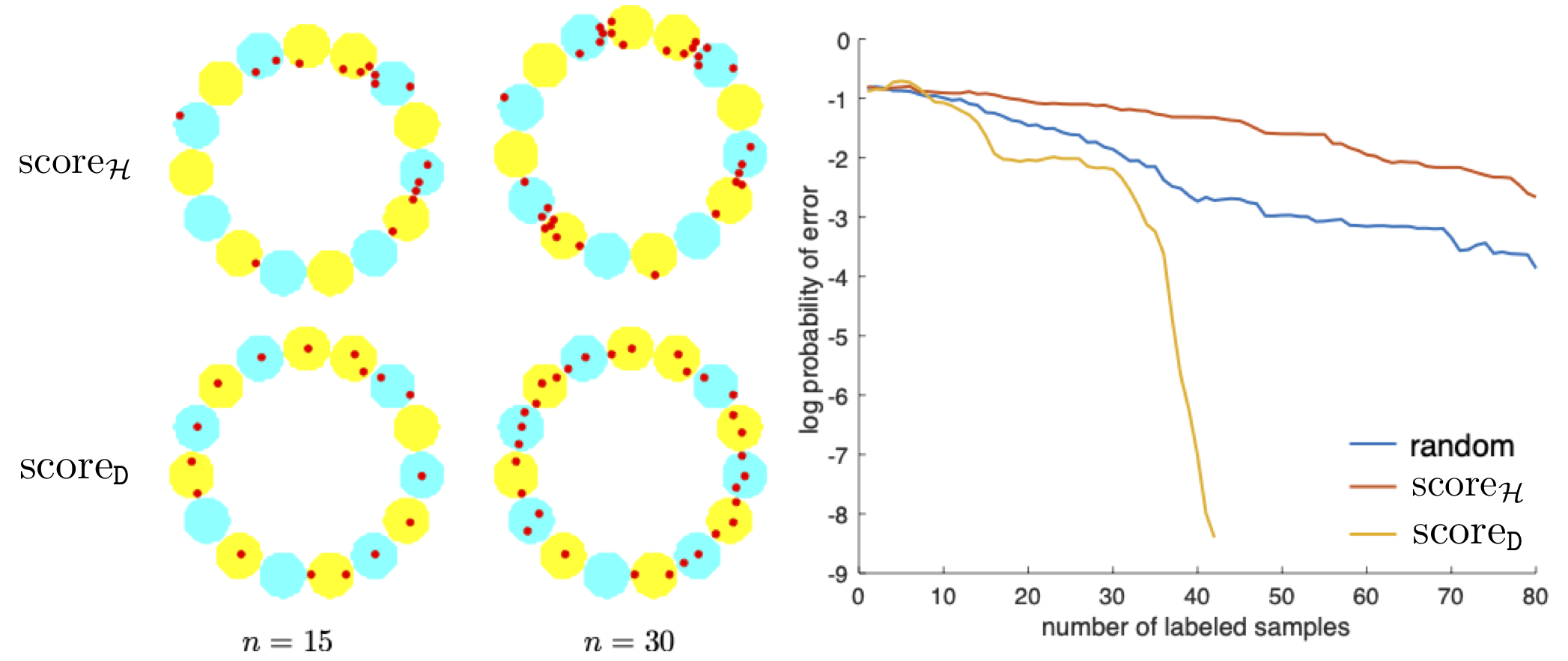}
        %%%
        \caption{Uniform distribution of samples, smooth boundary,
          Laplace Kernel, Bandwidth$ = .1$. On left, sampling behavior
          of $\score_{\Hc}$ and $\score_{\data}$ at progressive stages
          (left to right).  On right, error probabilities as a function
          of number of labeled examples.}
\label{fig:cluster}
\end{figure}

\begin{figure}[h]
 \centering
        \begin{subfigure}[b]{0.55\textwidth}
\includegraphics[width=\linewidth]{}
%\caption{(b)}
%\includegraphics[width=.4\linewidth]{figures/clusters/Criteria2t12}
%\caption{(b)}
    \end{subfigure}
    %%%
   ~ 
        \caption{
            Points in blue
            and yellow clusters are labeled $+1$ and $-1$,
            respectively.
            The left figure uses $\score_{\Hc}$ to be the $\score$ function defined in~\eqref{fscore} with the Hilbert norm associated with the Laplace Kernel.  Similarly, $\score_{\data}$ is the $\score$ function defined in~\eqref{dscore} with the data-based norm.  The first 13 samples selected by
            $\score_{\Hc}$ and $\score_{\data}$ are depicted as black
            dots. $\score_{\data}$ has labeled one sample from each
            cluster, but $\score_{\Hc}$ has not labeled any samples
            from 5 clusters.  Note that $\score_{\Hc}$ has spent some
            of the sample budget to discriminate between nearby
            clusters with opposite labels.  }
\label{F:TwoDClustering}
\end{figure}
In the setup in Figure~\ref{F:TwoDClustering}, the samples are generated based on a uniform distribution on 13 clusters. 
Points in blue  and yellow clusters are labeled $+1$ and $-1$, respectively.
We run the two variations of proposed active learning algorithms and compare their sampling strategy in this setup. 
The left figure uses $\score_{\Hc}$ to be the $\score$ function defined in~\eqref{fscore} with the Hilbert norm associated with the Laplace Kernel.  Similarly, $\score_{\data}$ is the $\score$ function defined in~\eqref{dscore} with the data-based norm. 

The selection criterion based on $\score_{\Hc}$ prioritizes sampling on the decision boundary of the current classifier where the currently oppositely labeled samples are close to each other. 
This behavior of the algorithm based on  $\score_{\Hc}$ in one dimension is proved in Sections~\ref{s:LaplaceKernelOneD} and~\ref{s:RadialKernelOneD}.
Alternatively, $\score_{\data}$ prioritizes labeling at least one sample from each cluster. Hence, after labeling 13 samples, the active learning algorithm based on $\score_{\data}$  has one sample in each cluster, but the active learning algorithm based on $\score_{\data}$ has not labeled any samples in 5 clusters.

%the $\score$ function defined in~\eqref{eq:scoredef} with the Hilbert norm associated with the Laplace Kernel. 
\subsection{MNIST experiments}

Here we illustrate the performance of the proposed active learning
method on the MNIST
dataset. 
We ran algorithms based on our proposed  selection criteria for a binary classification task on MNIST dataset. 
The binary classification task used in this experiment assigns a label $-1$ to any digit in set $\{0,1,2,3,4\}$ and label $+1$ to $\{5,6,7,8,9\}$.
The goal of the classifier is detecting whether an image belongs to the set of numbers greater or equal to $5$ or not.
We used Laplace kernel as defined in~\eqref{eq:kernelDef} with $p=2$ and $h=10$ on the vectorized
version of a dataset of $1000$ images. 
In Figures~\ref{F:MNISTPrerroTraining}, $\score_{\Hc}$ is the $\score$ function defined in~\eqref{fscore} with the Hilbert norm associated with the Laplace Kernel.  Similarly, $\score_{\data}$ is the $\score$ function defined in~\eqref{dscore} with the data-based norm. 

To asses the quality of performance of each of the selection criteria, we compare the probability of error of the interpolator at each iteration.  
In particular,  we plot the probability of error of the interpolator  as a function of number of labeled samples, using the $\score_{\Hc}$ and $\score_{\data}$ functions  on the training set and test set separately.
 For comparison, we also plot the probability of error when the selection criterion for picking samples to be labeled is random. 

Figure~\ref{F:MNISTPrerroTraining} (a) shows the decay of probability of error in the training set. 
When the number of labeled samples is equal to the number of samples in the training set, it means that all the samples in training set are labeled and used in constructing the interpolator. 
 Hence, the probability of error on the training set for any selection criterion  is zero when number of labeled samples is equal to the number of samples in the training set.
Figure~\ref{F:MNISTPrerroTraining} (b) shows the probability of error on the test set as a function of the number of labeled samples in the training set selected by each selection criterion. 

\begin{figure}[h]
 \centering
        \begin{subfigure}[b]{0.49\textwidth}
        \centering
\includegraphics[width=.7\linewidth]{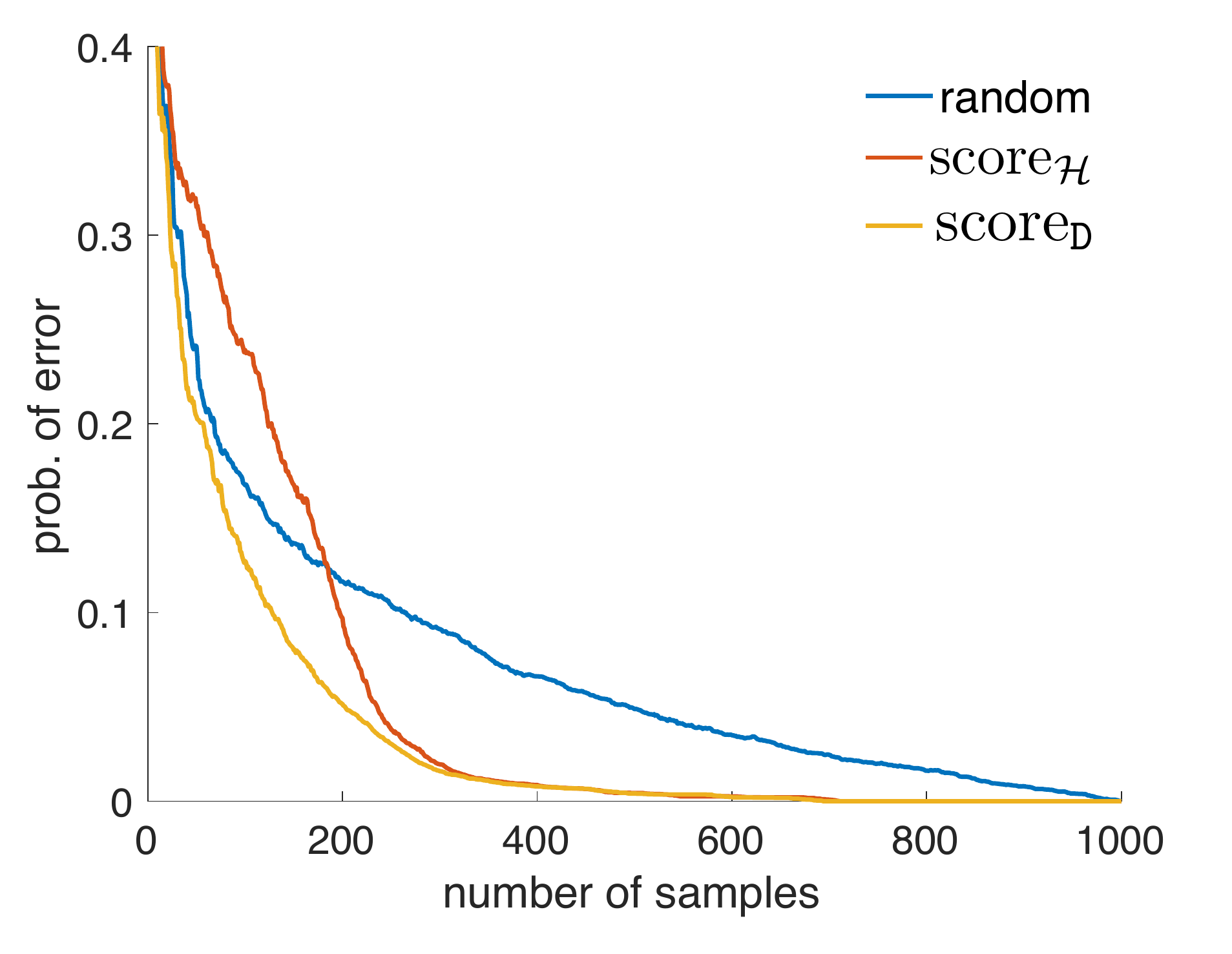}
%\caption{(b)}
    \end{subfigure}
    %%%
            \begin{subfigure}[b]{0.49\textwidth}
        \centering
\includegraphics[width=.7\linewidth]{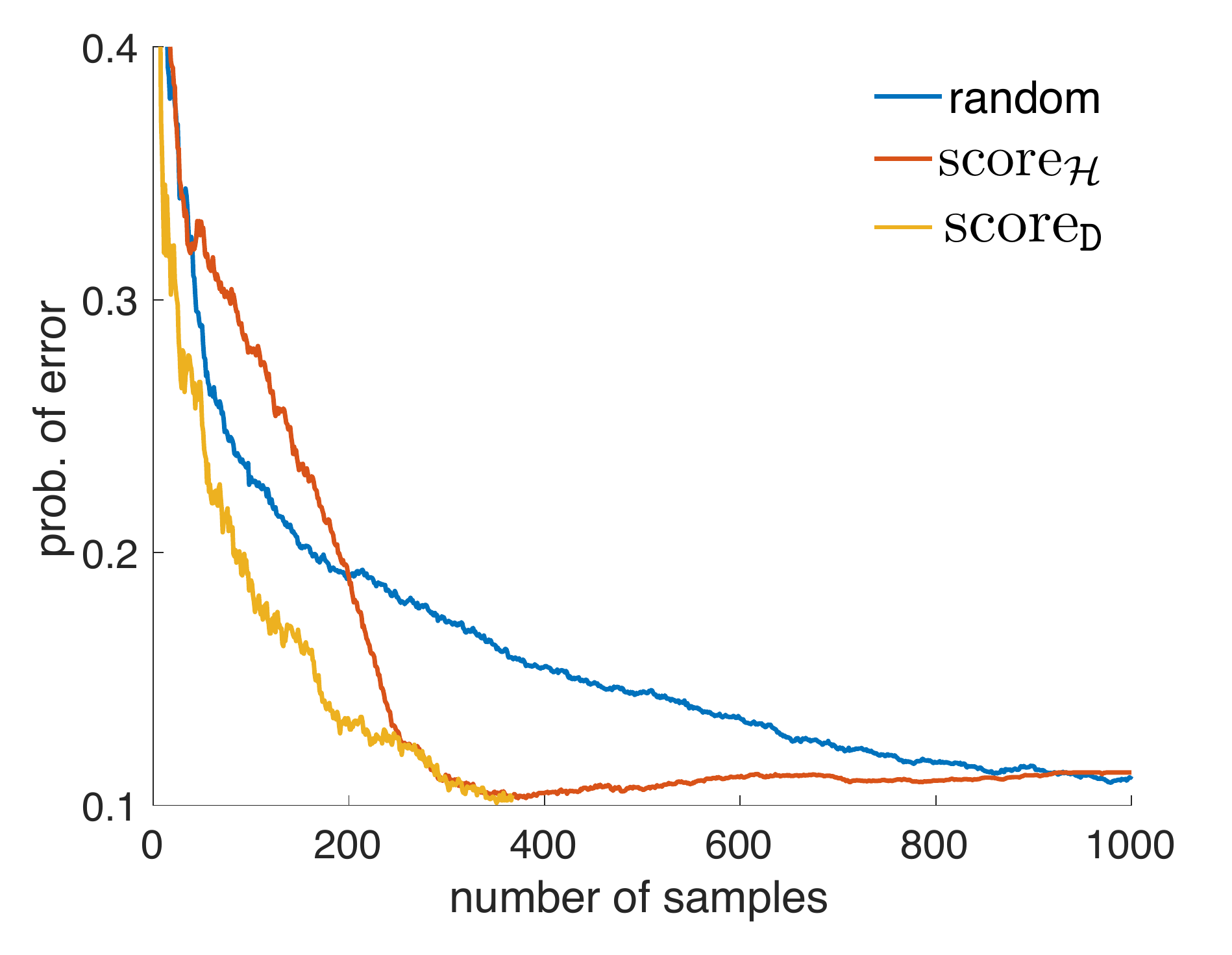}
%\caption{(b)}
    \end{subfigure}
   ~\caption{Probability of error for learning a classification task on MNIST data set. The performance of three selection criteria for labeling the samples: random selection, active selection based on $\score_{\Hc}$, and active selection based on $\score_{\data}$. The first curve depicts the probability of error on the training set and the second curve is the probability of error on the test set. 
}
\label{F:MNISTPrerroTraining}
\end{figure}

\subsubsection{Clustering in MNIST}
The binary classification task used in the MNIST experiment assigns a label $-1$ to any digit in set $\{0,1,2,3,4\}$ and label $+1$ to $\{5,6,7,8,9\}$.
 We expect that the images are clustered where each cluster would correspond to the images of a digit. We expect that the advantageous behavior of using data-based norm criterion in clustered data is one of the reasons for faster decay of probability of error of the $\score_{\data}$ in Figure~\ref{F:MNISTPrerroTraining}. 

To verify this intuition, we look at the samples that were chosen by each criterion and the digit corresponding to that sample. 
Note that this digit is the number represented in the image and not the label of the sample since the label of each sample is $+1$ or $-1$ depending whether the number is greater than $4$ or not. 
After labeling $100$ samples, we look at histogram of the digits associated with the labeled samples with each criterion $\score_{\Hc}$ and $\score_{\data}$. 
If samples of each cluster are chosen to be labeled uniformly among clusters, we would see about $10$ labeled samples in each cluster. 
Figure~\ref{F:MNISTClustering}  shows the histogram described above for two variations of the selection criteria based on $\score_{\Hc}$ or $\score_{\data}$. We observe that selecting samples based on $\score_{\data}$ is much more uniform among the clusters. On the contrary, selecting samples based on $\score_{\Hc}$ gives much less uniform samples among clusters. In the particular example given in Figure~\ref{F:MNISTClustering}, we see that even after selecting $100$ samples to be labeled, no sample in the cluster of images of number $0$ has been labeled in this instance of execution of the selection algorithm based on notion of $\score_{\Hc}$. 

To quantify the uniformity of selecting samples in different clusters, we ran this experiment $20$ times and estimated the standard deviation of number of labeled samples in each cluster after labeling $100$ samples. Note that since we have $10$ clusters, the mean of the number of labeled samples in each cluster is $10$. The standard deviation using $\score_{\Hc}$ is $4.1$ whereas  standard deviation using $\score_{\Hc}$ is $2.7$. This shows that selection criterion based on $\score_{\data}$ samples more uniformly among the clusters.

\begin{figure}[h]
 \centering
        \begin{subfigure}[b]{0.49\textwidth}
        \centering
\includegraphics[width=.7\linewidth]{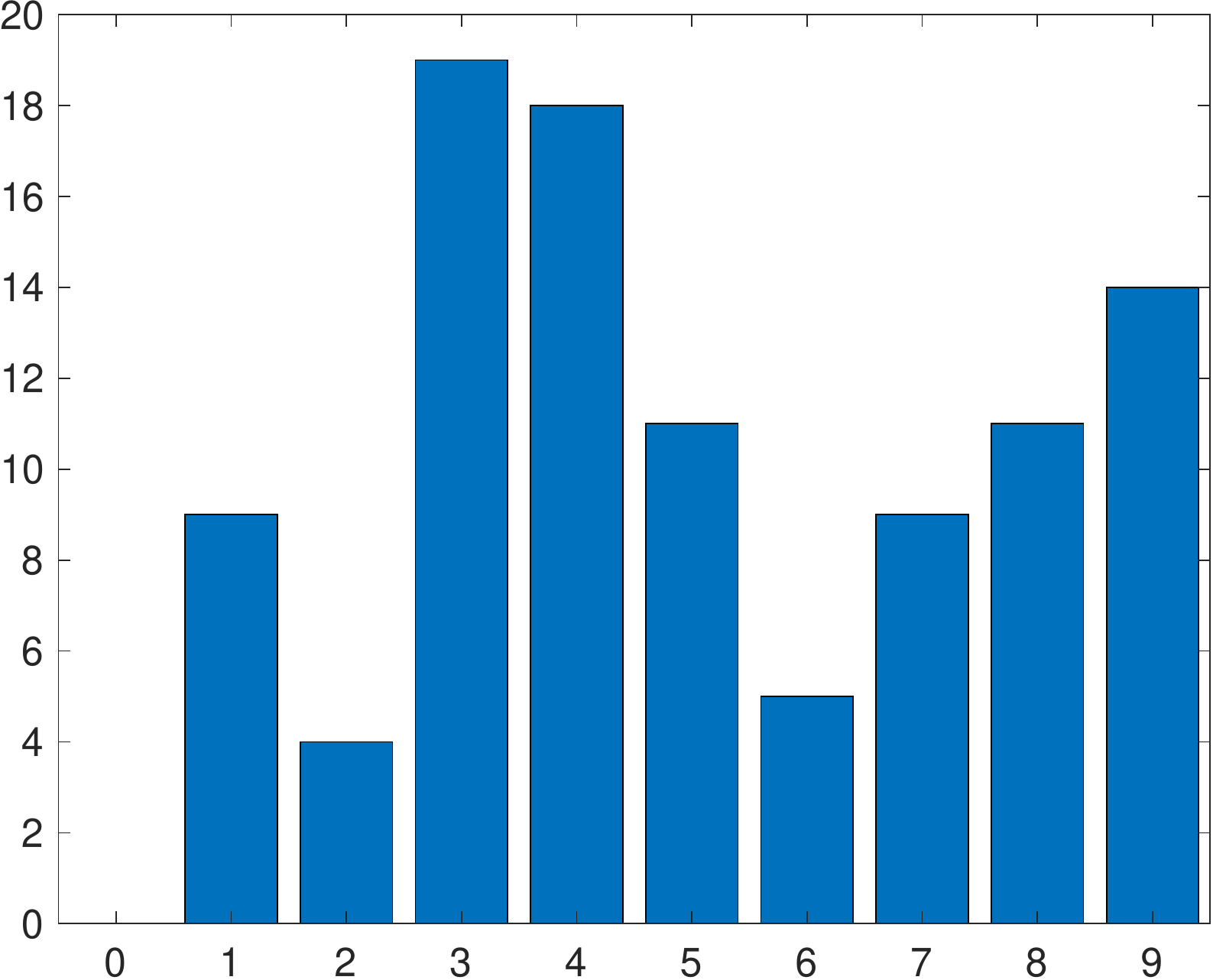}
%\caption{(b)}
    \end{subfigure}
    %%%
            \begin{subfigure}[b]{0.49\textwidth}
        \centering
\includegraphics[width=.7\linewidth]{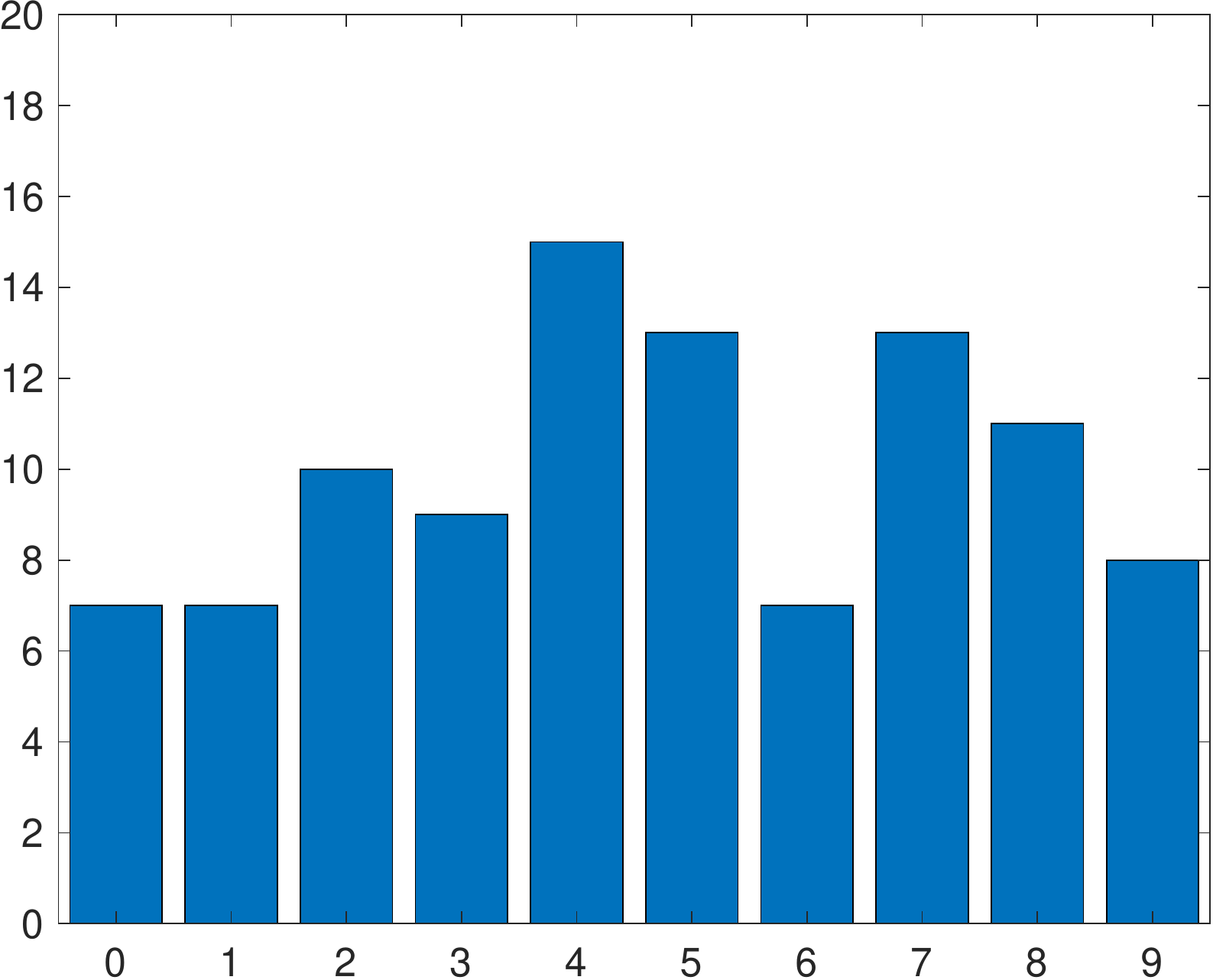}
%\caption{(b)}
    \end{subfigure}
    %%%
   ~ 
        \caption{The histogram of the handwritten digits associated with the labeled samples
        after labeling $100$ samples. The first histogram is for the selection criterion  $\score_{\Hc}$ and the second histogram is for the selection criterion 
            $\score_{\data}$. Notably, $\score_{\Hc}$ has not labeled any of the images of the digit $0$.}
\label{F:MNISTClustering}
\end{figure}

% \input{numsim.tex}
%\vspace{-.2in}
\vspace{-.1in}
%\vspace{-.5in}

\section{Interpolating Neural Network Active Learners}

Here we briefly examine the extension of the max-min criterion and its
variants to neural network learners.  Neural network complexity or
capacity can be controlled by limiting magnitude of the network
weights
\cite{bartlett1997valid,neyshabur2014search,zhang2016understanding}. A
number of weight norms and related measures have been recently proposed in the literature
\cite{neyshabur2015norm,bartlett2017spectrally,golowich2018size,arora2018stronger,neyshabur2017pac}.
For example, ReLU networks with a single hidden layer
and minimum $\ell_2$ norm weights coincide with linear spline
interpolation \cite{savarese2019infinite}.  With this in mind, we
provide empirical evidence showing that defining the max-min criterion
with the norm of the network weights yields a neural network active
learning algorithm with properties analagous to those obtained in the
RKHS setting. 

Consider a single hidden layer network with ReLU activation units
trained using MSE loss.  In Figure~\ref{fig:nn1} we show the results
of an experiment implemented in PyTorch in the same settings considered
above for kernel machines in Figures~\ref{fig:HeatmapKernel} and \ref{fig:HeatmapKernelBoxed}. We trained an overparameterized network
with $100$ hidden layer units to perfectly interpolate four training
points with locations and binary labels as depicted in
Figure~\ref{fig:nn1}(a).  The color depicts the magnitude of the
learned interpolating function: dark blue is $0$ indicating the
``decision boundary'' and bright yellow is approximately
$3.5$. Figure~\ref{fig:nn1}(b) denotes the $\score_{\Hc}$ with the
weight norm (i.e., the $\ell_2$
norm of the resulting network weights when a new sample is selected at
that location).  The brightest yellow indicates the highest score and the
location of the next selection. Figure~\ref{fig:nn1}(c) denotes the
$\score_{\data}$ with the data-based norm defined in Equation~\eqref{dscore}.
In both cases, the max occurs at roughly the same location, which is
near the
current decision boundary {\em and} closest to oppositely labeled
points.  The data-based norm also places higher scores on points further away from the
labeled examples. Thus, the data selection behavior of the neural network is
analagous to that of the kernel-based active learner (compare with\ Figure~\ref{fig:HeatmapKernel}).

\begin{figure}[h]
\centerline{\includegraphics[width=9cm]{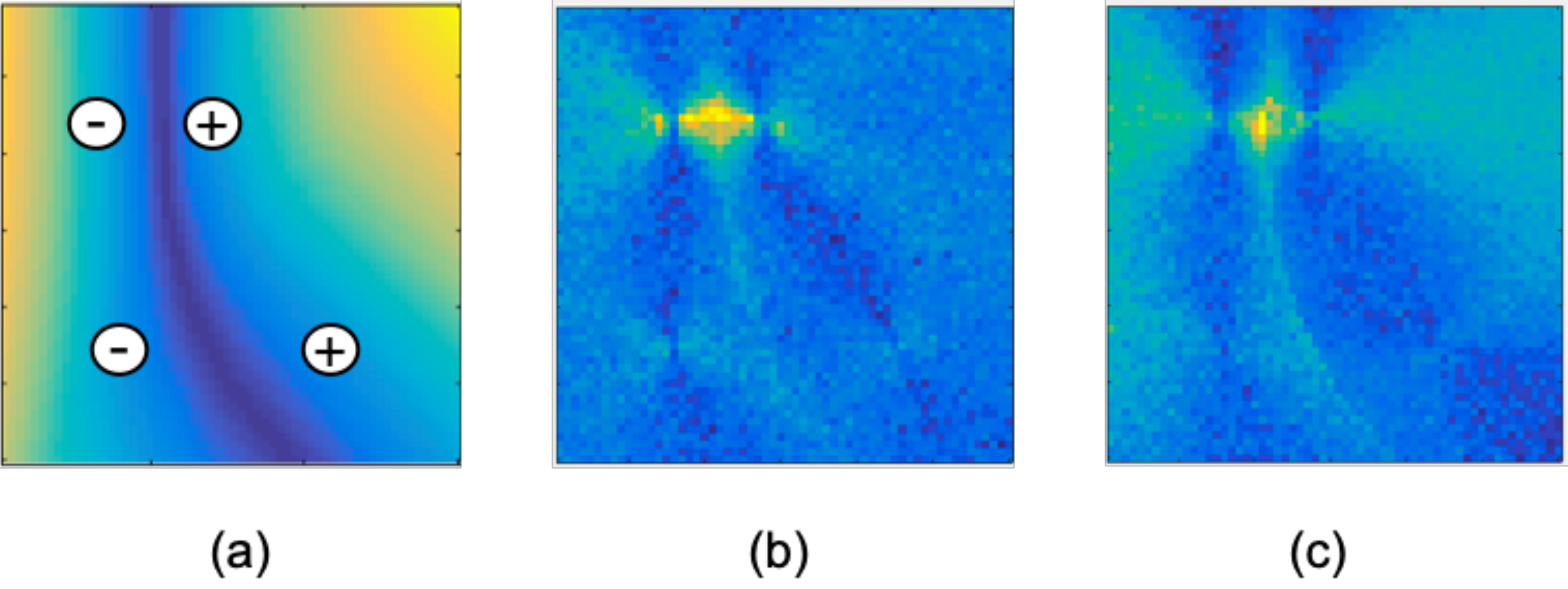}} 
\caption{Data selection of neural network active learner. (a)
  Magnitude of output
  map of single
  hidden layer ReLU network trained to interpolate four data points as
indicated (dark blue is $0$ indicating the learned decision boundary).
(b) Max-Min selection of next point to label using network weight
norm. (c) Max-Min selection of next point to label using data-based
norm.  Both select the point on the decision boundary that is closest
to oppositely labeled examples.}
\label{fig:nn1}
\end{figure}

Next we present a modified scenario in which the examples are not
uniformly distributed over the input space, but instead concentrated
only in certain regions indicated by the magenta highlights in
Figure~\ref{fig:nn2}(a).  In this setting, the example selection
criteria differ more significantly for the two norms.  The weight norm selection
criterion remains unchanged, but is applied only to regions where
there are examples. Areas without examples to select are indicated by dark
blue in Figure~\ref{fig:nn2}(b)-(c). The data-based norm is
sensitive to the non-uniform input distribution, and it scores
examples near the lower portion of the decision boundary highest.
Again, this is quite similar to the behavior of the kernel active
learner  (compare with\ Figure~\ref{fig:HeatmapKernelBoxed}).

\begin{figure}[h]
\centerline{\includegraphics[width=9cm]{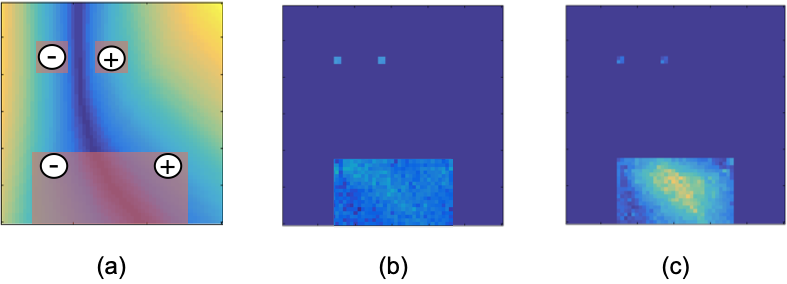} }
\caption{Data selection of neural network active learner. (a)
  Unlabeled examples are only available in magenta shaded regions.
(b) Max-Min selection map using network weight
norm. (c) Max-Min selection map using data-based
norm.}
\label{fig:nn2}
\end{figure}
\vspace{-.2in}

\section{Conclusion and Future Work}
The question of designing active learning algorithms in the regime of nonparametric and overparameterized models become more essential as we look at larger models which require bigger training sets. To reduce the human cost of labeling all samples, we can use a pool-based active learning algorithm to avoid labeling non-informative examples. 

Our algorithm does not exploit any assumption about the underlying classifier in selecting the samples
to label. Yet, for a wide range of classifiers, it performs well with provable guarantees. It is designed for the extreme case of the nonparametric setting in which no assumption about the smoothness of the boundary between different classes is made by the learner. 

There are many interesting questions remaining: the behavior of our proposed criterion applied to other classifiers such as kernel SVM instead of minimum norm interpolators, generalization of the criterion to multi-class settings and regression algorithms. The computational complexity of our criterion can also be a serious bottleneck in applications with bigger data-sets and should be addressed in future. Additional numerical simulations, especially with more complex architecture of Neural Networks can also be insightful.

\section*{Acknowledgement}
This work was partially supported by the Air Force Machine Learning Center of Excellence FA9550-18-1-0166.

%\small

\bibliographystyle{IEEEtran}
\bibliography{KALrefs}
%\begin{appendices}

\appendix
\section{Maximin active learning with neural networks}

  We present the proof of Theorems~\ref{thm:BinarySearchLinearSpline} and~\ref{thm:DataSearchLinearSpline} assuming the solutions to~\eqref{eq:defF} and~\eqref{eq:defFtu} -- minimum norm interpolating functions -- are linear spline functions with
  knots at each data point. According to Theorem~\ref{t:minnormNN}, there are other solutions to the minimum
  ``norm'' neural network; but since $\score^{(1)}_{\spline}$ only
  depends on the ``norm'' it suffices to just consider the spline
  case. 
  Moreover, as shown in
  \cite{parhi2019minimum}, the weight norm $\|f\|_{\Fc}:= \|\bw\|_2$
  is equal to the total variation of $f$.  The total variation of a
  linear spline is the sum of the absolute values of the slopes of
  each linear piece.  We use this equivalence throughout the proof.

For a linear spline function $f(x)$ with  knots at each data point,  
 the assumption  $f(\infty)=f(-\infty)=0$ guarantees that for any $x_i$ such that $y_i=+1$, we have 
 \begin{align*}
 \lim_{x\to x_i^{-}} \frac{\mathrm{d}}{\mathrm{d}x}f(x)
 &\geq 0,\quad \quad
 \lim_{x\to x_i^{+}} \frac{\mathrm{d}}{\mathrm{d}x}f(x)
 \leq 0
 \end{align*}
 Similarly, for any $x_i$ such that $y_i=+1$, we have
 \begin{align*}
 \lim_{x\to x_i^{-}} \frac{\mathrm{d}}{\mathrm{d}x}f(x)
 &\leq 0,\quad \quad
 \lim_{x\to x_i^{+}} \frac{\mathrm{d}}{\mathrm{d}x}f(x)
 \geq 0
 \end{align*}
 The assumption  $f(\infty)=f(-\infty)=0$ also guarantees that $f'(\infty)=f'(-\infty)=0$.
 
Also, if a point $u$ is between two labeled points $x_1$ and $x_2$ such that $x_1<x_2$, then for any label $\es\in\{-,1+1\}$ we know that $f^{u}_{\es} (x)=f(x)$ for $x\leq x_1$ or $x\geq x_2$. 
Using these properties, we find the maximizer of $\score_{\Fc}(u)$ and $\score_{\data}(u)$ in various cases. 

%\MK{I should mention something like this implies looking at other neighborhoods does not change the score}

\subsection{Maximin Criterion with Function Norm and Neural Networks (Proof of Theorem~\ref{thm:BinarySearchLinearSpline})}
\label{s:proofNN}
To prove Theorem~\ref{t:minnormNN},~\cite{nn-linear-spline} shows that optimizing the parameters of a two layer Neural Network as described in Theorem~\ref{t:minnormNN} to find $\min_{\mathbf{w}} \|\mathbf{w}\|_2$ such that $f(x_i)=y_i$ for all $i$ is the same as minimizing the $R(f)$ such that $f(x_i)=y_i$ for all $i$ where $R(f)$ is defined as
\begin{align*}
R(f) = \max\Big(
\big|f'(+\infty) +f'(-\infty)\big|,\,
\int |f''(x)| \,\mathrm{d}x
\Big)\,.
\end{align*}
Hence, we can use $R(f)$, as a proxy for a function norm in the context of Neural Networks. 

In our setup, adding two artificial points to the left and right with the same labels as the true end points ensures that the function $R(f)=\int |f''(x)| \,\mathrm{d}x$ since  $f'(+\infty)=f'(-\infty) =0$ for the minimum norm interpolating functions. 

Hence, for a set of points $\{(x_i,y_i)\}_i$, the norm of minimum norm interpolating function is the same as $R(f)=\int |f''(x)| \,\mathrm{d}x$, i.e., ``the summation of changes in the slope of minimal knot linear spline interpolation of points''. This quantity is used to compute the score of each unlabeled point. 
Note that clearly to compute the score of a point $u$ such that $x_j<u<x_{j+1}$, we only need to compute how much the slope of interpolating function changes by adding $u$ in the interval $x_j$ to $x_{j+1}$. In particular, 
\[
\|f^u_{\es}(x)\|_{\Fc} -
\|f(x)\|_{\Fc}
= 
2\frac{1-\es y_j}{u-x_j}
+
2\frac{1-\es y_{j+1}}{x_{j+1}-u}
-
2\frac{1-y_jy_{j+1}}{x_{j+1}-x_j}\,.
\]

\begin{enumerate}
\item 
Without loss of generality, assume $y_1=+1$ and $y_2= -1$. 
Hence, 
\[\lim_{x\to x_1^{-}} \frac{\mathrm{d}}{\mathrm{d}x}f(x)\geq 0\]
and
\[\lim_{x\to x_2^{+}}  \frac{\mathrm{d}}{\mathrm{d}x} f(x)\geq 0\,.\]
Looking at $f(x)$, we also have the slope of function between $x_1$ and $x_2$ to be $-\frac{2}{x_2-x_1}$. 

The same statements hold for $f_{\es}^u(x)$ for any $x_1<u<x_2$ and $\es\in\{-1,+1\}$.
\begin{align*}
\lim_{x\to x_1^{-}} \frac{\mathrm{d}}{\mathrm{d}x} f_{\es}^u(x)
&\geq 0\,, 
\quad \quad
\lim_{x\to x_2^{+}}  \frac{\mathrm{d}}{\mathrm{d}x} f_{\es}^u(x)
\geq 0\,.
\end{align*}

For $\es=+1$, the slope of $f_{\es}^u(x)$ for $x_1<x<u$ is zero and  the slope of $f_{\es}^u(x)$ for $u<x<x_2$ is $-\frac{2}{x_2-u}$. Since $f^{u}_{+} (x) = f(x)$ for $x\leq x_1$ or $x\geq x_2$, 
\begin{align*}
\|f(x)\|_{\spline}   - \frac{4}{x_2-x_1}
&= 
\|f^u_{+}(x)\|_{\spline}  - \frac{4}{x_2-u}\,.
\end{align*}
Using a similar calculation for $\es=-1$ we get
\begin{align*}
\|f^u_{+}(x)\|_{\spline} 
&= 
\|f(x)\|_{\spline}   - \frac{4}{x_2-x_1}
+ \frac{4}{x_2-u}\,,
\\
\|f^u_{-}(x)\|_{\spline} 
&= 
\|f(x)\|_{\spline}   - \frac{4}{x_2-x_1}
+ \frac{4}{u-x_1}\,.
\end{align*}
Hence, $\|f^u_{+}(x)\|_{\spline} \geq \|f^u_{-}(x)\|_{\spline} $ if and only if $u\geq  \frac{x_1+x_2}{2}$ and
\begin{align}
\label{eq:score1Su}
\score_{\spline} (u)= \|f^u(x)\|_{\spline} 
&= 
\|f(x)\|_{\spline}   - \frac{4}{x_2-x_1}
+ \min\Big\{\frac{4}{x_2-u}, \frac{4}{u-x_1}\Big\}\,.
\end{align}
This gives 
$$u^*:=
\argmax_{x_1<u\leq x_2}  \score^{(1)}_{\spline}(u)
= 
\frac{x_1+x_2}{2}\,.$$
and 
$$\score_{\spline} (u^*) =  \|f(x)\|_{\spline}   + \frac{4}{x_2-x_1}\,.$$
\item
Above computation shows that pair of neighboring oppositely labeled points $x_1<x_2$ and $x_3<x_4$,
if $x_2-x_1\geq x_4-x_3$, then 
\[\score_{\spline} \big(\frac{x_3+x_4}{2}\big) 
\geq 
\score_{\spline} \big(\frac{x_1+x_2}{2}\big)\,.\]
\item 
Assume $y_5=y_6=y$. Then, for $x_5\leq v\leq x_6$ we have $f^v_{y}(x) = f(x)$ for all $x$.
For $\es=-y$, we have 
\[
\|f^v_{-y}(x)\|_{\spline} = 
\|f(x)\|_{\spline}  + \frac{4}{u-x_5} + \frac{4}{x_6-u}\,.\]
Hence, $\|f^v_{y}(x) \|_{\spline} \leq \|f^v_{-y}(x)\|_{\spline}$ and $f^{v}(x) = f^v_{y}(x) $. 
Hence, for a pair of identically labeled points $x_5$ and $x_6$ and all $x_5\leq v\leq x_6$, we have 
\begin{align}
\label{eq:score1Susimilar}
\score_{\spline} (v)=\|f^v(x)\|_{\spline}=\|f(x)\|_{\spline}
\end{align}
 which is a constant independent of $v$. 
\item Equations~\eqref{eq:score1Su} and~\eqref{eq:score1Susimilar}, show that in this setup
 \[\score_{\spline} (v) = \|f(x)\|_{\spline} \leq
 \score^{(1)}_{\spline} (u) \,.\]
\end{enumerate}
\subsection{Proof of Theorem~\ref{thm:DataSearchLinearSpline}}
\label{s:proofNNDB}

 Without loss of generality, assume $x_1=0$ and $x_2 = 1$. Then, if $y_1=y_2$, the statement of theorem is trivial. 
% \MK{Also assume there are other labeled points to the left and right. If there are not, then adding a point between$x_1$ and $x_2$ changes the function outside}

% Without loss of generality, assume $y_1=+1$ and $y_2=-1$.
 Hnce,
 \begin{align*}
 f(x) =   \frac{x_2+x_1-2x}{x_2-x_1}\,, \quad \text{ for all } x_1<x<x_2\,.
 \end{align*}
 For $x_1<u<x_2$, we have $\es(u)=+1$ if $x_1<u< (x_1+x_2)/2$ and $\es(u)=-1$ if $(x_1+x_2)/2<u<x_2$. 
 First, we look at $x_1<u< (x_1+x_2)/2$:
 \begin{align*}
 f^u(x) =
 \begin{cases}
  1\,, \quad &\text{ for all } x_1<x<u
  \\
  \frac{x_2 +u - 2x}{x_2-u}
 %  2\frac{x_2-x}{x_2-u}-1
   \,, \quad &\text{ for all } u<x<x_2\,.
  \end{cases}
 \end{align*}
 Hence, for  $x_1<u< (x_1+x_2)/2$ 
 \begin{align*}
 \|f-f^u\|_{\data} 
 & = 
 \int_{x_1}^{x_2} \big[f^u(x)-f(x)\big]^2\mathrm{d}x
 \\
 & =
 \frac{4}{(x_2-x_1)^2}\int_{x_1}^{u}  (x-x_1)^2\mathrm{d}x + 
 \frac{4(u - x_1)^2}{(x_2-x_1)^2 (x_2 - u)^2 }\int_{u}^{x_2} (x_2-x)^2 \mathrm{d}x
 %\\
 %&= 
 %\frac{4}{3} \frac{(u-x_1)^3 }{(x_2-x_1)^2}
 %+ \frac{4}{3}\frac{x_2-u-1}{x_2-u} (x_2-u)^3 - 4u\frac{x_2-u-1}{(x_2-u)^2} (x_2-u)^2 +4 (\frac{u}{x_2-u})^2 (x_2-u)
 %\\
 %&=
 %\frac{4}{3} (u-x_1)^3 
 %+ \frac{4}{3}[(x_2-u)^3-(x_2-u)^2] - 4u(x_2-u-1) +4 \frac{u^2}{x_2-u}
 \end{align*}

% Hence, for $x_1\leq u\leq \frac{x_1+x_2}{2}$
 \begin{align*}
 \frac{\mathrm{d}}{\mathrm{d} u }\|f- f^u\|_{\data} 
 & = 
 \frac{4}{(x_2-x_1)^2}(u-x_1)^2
 -\frac{4(u - x_1)^2}{(x_2-x_1)^2 (x_2 - u)^2 }(x_2-u)^2
 \\ &\quad
 +
 \frac{8(u - x_1)}{3(x_2-x_1) (x_2 - u)^3 } (x_2 - u)^3
 \\
 &=
 \frac{8}{3} \frac{u - x_1}{x_2-x_1 }  \geq 0\,.
 \end{align*}
 Similarly, we can show that for $\frac{x_1+x_2}{2}\leq u \leq x_2$, $f^u(x) = f^u_{-}(x)$ and 
 \begin{align*}
 \frac{\mathrm{d}}{\mathrm{d} u }\|f-f^u\|_{\data} 
 \leq 0\,.
 \end{align*}

% Also, $\score^{(2)}_{\data}\big(\frac{x_1+x_2}{2}\big) = \frac{8}{3}(x_2-x_1) $ which gives the second statement. 

%\section{Max Min criteria in one dimension with Laplace Kernel}

\section{Maximin kernel based active learning in one dimension}
\subsection{Minimum norm interpolating function with Laplace Kernel in one dimension}
\label{s:AppLaplace}
We want to find the minimum norm interpolating function based on set of labeled samples $\Lc=\{(x_1,y_1),\cdots,(x_n,y_n)\}$ such that $x_1<x_2<\cdots,x_n$.

	First, let us look at the Kernel matrix for three neighboring points $x_1<x_2<x_3$ according to the Laplace Kernel.
\begin{align*}
K
&=
\begin{bmatrix}
1 &    e^{-(x_2-x_1)/h}    &    e^{-(x_3-x_1)/h}\\
e^{-(x_2-x_1)/h}   &   1   &     e^{-(x_3-x_2)/h}  \\
 e^{-(x_3-x_1)/h} &   e^{-(x_3-x_2)/h}  &   1  
\end{bmatrix}\,.
\end{align*} 
We define $d_1= e^{-(x_2-x_1)/h} $ and $d_2=e^{-(x_3-x_2)/h} $.
It can be shown that with the above structure
\begin{align*}
K^{-1}
&= 
\begin{bmatrix}
\frac{1}{1-d_1^2}&   \frac{-d_1}{1-d_1^2}   &  0\\ 
\frac{-d_1}{1-d_1^2}   &  \frac{1}{1-d_1^2}+\frac{1}{1-d_2^2} -1   &    \frac{-d_2}{1-d_2^2}  \\ 
0&  \frac{-d_2}{1-d_2^2}  &  \frac{1}{1-d_2^2}   \\
\end{bmatrix}\,.
\end{align*} 
In general, if we look into the Kernel matrix for the set of points $x_1<x_2<\cdots<x_n$, and define $d_{i} =e^{-(x_{i+1}-x_i)/h} $ for $1\leq i \leq n-1$. We define $d_0=d_n=0$. Using induction, on can show that the inverse of the Kernel matrix has a block diagonal form such that
\begin{align}
(K^{-1})_{1,1} &= \frac{1}{1-d_{1}^2}, 
\quad\quad\quad
(K^{-1})_{n,n} = \frac{1}{1-d_{n-1}^2}
\notag\\
(K^{-1})_{i,i} &= \frac{1}{1-d_{i-1}^2} +\frac{1}{1-d_{i}^2} - 1, \quad\quad \forall 1<i<n
\notag\\
(K^{-1})_{i,i+1} &=(K^{-1})_{i+1,i}= \frac{-d_{i}}{1-d_{i}^2}, \quad \quad \forall 1\leq i<n
\label{eq:LaplaceBlockD}
\end{align}
and the remaining elements of matrix of matrix $K^{-1}$ is zero, \textit{i.e.}, $(K^{-1})_{i,j}$ for $|j-i|\geq 2$.

Using~\eqref{eq:InterpFnc} and the above characterization of matrix $\Kb^{-1}$, we can show that the  the minimum norm interpolating function based on set of labeled samples $\Lc=\{(x_1,y_1),\cdots,(x_n,y_n)\}$ such that $x_1<x_2<\cdots,x_n$ has the following form:
\begin{align}
f(x) & = \frac{1}{1+y_1y_2\, d_1} \,y_1k(x_1,x) +\frac{1}{1+y_n y_{n_1}\, d_{n-1}}\, y_n k(x_n,x) 
\nonumber\\
&\quad+
\sum_{i=2}^{n-1} 
\Big[\frac{1}{1+ y_i y_{i-1} d_{i-1}} + \frac{1}{1+y_i y_{i+1} d_{i}} - 1\Big]\,
y_i  k(x_i,x)\,.
\label{eq:InterpLaplaceOneD}
\end{align}

\subsection{Criterion $\score_{\Hc}$ with Laplace Kernel in one dimension (Proposition~\ref{p:Laplace-1D})}
\begin{proof}[Proof of Proposition 1]
Looking at three neighboring points $x_1<x_2<\cdots<x_n$ labeled $\mathbf{y}=\{y_1,y_2,\cdots,y_n\}$, let $K$ be the kernel matrix corresponding to $\{x_1,x_2,\cdots,x_n\}$. Then, using  the block diagonal structure given in Equation~\eqref{eq:LaplaceBlockD}, we compute $\|f(x)\|_{\Hc}$:
\begin{align}
\|f(x)\|_{\Hc}
& = 
\mathbf{y}^T  K^{-1} \mathbf{y}
\notag\\& = 
\mathsf{trace}(K^{-1}) + \sum_{i=1}^{n-1}2 y_i y_{i+1} (K^{-1})_{i,i+1} 
\notag \\& = 
-(n-2)+2\sum_{i=1}^{n-1}\frac{1 -y_i y_{i+1} \exp(-(x_{i+1}-x_i)/h) }
{1- \exp(-2(x_{i+1}-x_i)/h)} 
\notag \\& = 
-(n-2)+2\sum_{i=1}^{n-1}\frac{1}
{1+y_i y_{i+1} \exp(-(x_{i+1}-x_i)/h)} \,.
\label{eq:NormFLaplaceK}
\end{align}
This factorization of $\|f(x)\|_{\Hc}$ implies that if $x_i<u<x_{j+1}$, we have 
\begin{align}
\|f_{\es}^u(x)\|_{\Hc}
 = &
\|f(x)\|_{\Hc} -1 
- \frac{2}{1+y_j y_{j+1} \exp(-(x_{j+1}-x_j)/h)} 
\notag\\&
+\frac{2}{1+\es y_j \exp(-(u-x_j)/h)} 
+ \frac{2}{1+\es y_{j+1} \exp(-(x_{j+1}-u)/h)} \,.
\label{eq:LapNormNewInterp}
\end{align}

This factorization of $\|f_{\es}^u(x)\|_{\Hc}$ implies that given $n$ labeled points, to find the global maximizer of $\score_{\Hc}(u)$, the following strategy  works: 

\textbf{Step~1:} For each $j$,  we find the maximizer of  $\score_{\Hc}(u)$ for the unlabeled points $u$ such that $x_j<u<x_{j+1}$. We define
\begin{align}
\label{eq:Laplaceustarj}
u_j^*=\arg\max_{x_j<u<x_{j+1}}\score_{\Hc}(u)
\end{align}
The above factorization~\eqref{eq:LapNormNewInterp} shows that to find $u_j^*$, we can alternatively find the maximizer of the sore given a configuration of only two labeled points $\Lc'=\{(x_i,y_j), (x_{j+1},y_{j
+1})\}$ and unlabeled points $x_{j}< u< x_{j+1}$ to find the maximizer of the last two terms of the above factorization. The maximzer of $\score_{\Hc}(u)$ for $u$ such that $x_j<u<x_{j+1}$ given $\Lc'$ is the same as the maximizer of $\score_{\Hc}(u)$ for $u$ such that $x_j<u<x_{j+1}$ given $\Lc$.

\textbf{Step~2:} 
Note that $\arg\max_{u}\score_{\Hc}(u) = u_{j^*}^*$ such that $j^*=\arg\max_{j}\score_{\Hc}(u_j^*)$. 
We can use~\eqref{eq:LapNormNewInterp} to compare  $\score_{\Hc}(u_{j}^*)$ for various values of $j$. 

In what follows, we show that using Laplace Kernel, whether $y_j=y_{j+1}$ or $y_j\neq y_{j+1}$, we always have $u_j^* = (x_j+x_{j+1})/2$ and $\es(u_j^* )=y_j$. Hence, using~\eqref{eq:LapNormNewInterp}, we have 
\begin{align}
\score_{\Hc}(u_{j}^*) =& 
\|f(x)\|_{\Hc} -1 
- \frac{2}{1+y_j y_{j+1} \exp(-(x_{j+1}-x_j)/h)} 
\notag\\&
+\frac{2}{1+ \exp(-(x_{j+1}-x_j)/2h)} 
+ \frac{2}{1+ y_j y_{j+1} \exp(-(x_{j+1}-x_j)/2h)} \,.
\label{eq:scoreujLaplace}
\end{align}

\textbf{Step~1:}
\begin{enumerate}
\item 
Consider two neighboring points $x_1<x_3$ labeled $\mathbf{y}=\{-,+\}$. Let $u$ be such that $x_1<u<x_3$ and $t=y_3$. Define $A= e^{-(u-x_1)/h} $ and $B=e^{-(x_3-u)/h} $. Define the matrix $K$ to be the kernel matrix corresponding to the points $x_1,u,x_3$. Then using~\eqref{eq:NormFLaplaceK}
\begin{align*}
\|f^u_{\es}(x)\|^2 
& = 
\frac{2}{1-\es A} + \frac{2}{1+\es B}  -1
\end{align*}
Hence, 
\begin{align*}
\es(u) &= \argmin_{\es} \|f^u_{\es}(x)\|^2  =
\begin{cases}
 +1, \quad \quad \text{if }  u>\frac{x_1+x_3}{2} \text{ or } B>A
 \\
 -1, \quad \quad \text{if } u\leq \frac{x_1+x_3}{2} \text{ or } B\leq A
 \end{cases}
%\\
%\score_{\Hc}(u)  &= \min_{\es} \|f^u_{\es}(x)\|^2  
%=\mathsf{trace}(K^{-1}) -2 \Big| \frac{A}{1-A^2} - \frac{B}{1-B^2}\Big|
\end{align*}
Without loss of generality,  assume $A\geq B$ or $ u\leq \frac{x_1+x_3}{2} $, then the block diagonal structure given in Equation~\eqref{eq:LaplaceBlockD} implies
 \begin{align*}
\score_{\Hc}(u) %&=2 \left\{  \frac{1}{1-A^2} + \frac{1}{1-B^2} - \frac{A}{1-A^2} + %\frac{B}{1-B^2} \right\} -1
%\\
 &=2 \left\{  \frac{1}{1+A} + \frac{1}{1-B} \right\} -1
\end{align*}
 Note that for all $u$ such that $x_1<u<x_3$, we have $AB =e^{-(x_3-x_1)/h}=C $ and is constant. So 
 \begin{align*}
\max_{u\leq \frac{x_1+x_3}{2} } \score_{\Hc}(u) 
&=
\max_{A,B, \text{ s.t. } AB=C,  A\geq B } 2 \left\{  \frac{1}{1+A} + \frac{1}{1-B} \right\} -1
\\ & = 
\frac{4}{1-C} -1 = \frac{4}{1-e^{-(x_3-x_1)/h}} -1 
 \end{align*}
 which is attained when $A=B$ or $u = \frac{x_1+x_3}{2} $. 
 
 This gives the following statement:
 For neighboring labeled points $x_1<x_3$ such that $y_1\neq y_3$, we have 
\begin{align}
 \max_{s_1\leq u\leq x_3 } \score_{\Hc}(u)  = \score_{\Hc}( \frac{x_1+x_3}{2} )  =\frac{4}{1-e^{-(x_3-x_1)/h}}-1\,.
\label{eq:score1Prop1}
 \end{align}

 Note that the above function is decreasing in $x_3-x_1$. 
 %%%
%%%
\item 
Consider two neighboring points $x_1<x_3$ labeled $\mathbf{y}=\{+,+\}$. Let $u$ such that $x_1<u<x_3$ and $t=y_3$. Define $A= e^{-(u-x_1)/h} $ and $B=e^{-(x_3-u)/h} $. 
 \begin{align*}
\max_{x_1<u<x_3} \score_{\Hc}(u) 
&=
\max_{A,B, \text{ s.t. } AB=C} 2 \left\{  \frac{1}{1+A} + \frac{1}{1+B} \right\} -1
\\
& = 
\frac{4}{1+\sqrt{C}} -1
= 
\frac{4}{1+{e^{-(x_3-x_1)/2h}}} -1
 \end{align*}
 
  This gives the following statement:
 For neighboring labeled points $x_1<x_3$ such that $y_1= y_3$, we have 
 \begin{align}
 \max_{s_1\leq u\leq x_3 } \score_{\Hc}(u)   = \score_{\Hc}( \frac{x_1+x_3}{2} )  =\frac{4}{1+{e^{-(x_3-x_1)/2h}}} -1 \,.
\label{eq:score1Prop2}
 \end{align}

 Note that the above function is increasing in $x_3-x_1$. 
\end{enumerate}

\textbf{Step~2:}
In Step~1, we showed that the maximizer of score at each interval between two neighboring points is achieved in the center of the interval, i.e., $u^*_{j}=\frac{x_j+x_{j+1}}{2}$  with notation defined in~\eqref{eq:Laplaceustarj}. Now to compare $ \score_{\Hc}(u^*_{j})$ for various $j$, we look at the following properties derived from the formulation in~\eqref{eq:scoreujLaplace}:

\begin{itemize}
\item If $y_j\neq y_{j+1}$, then 
$$\score_{\Hc}(u^*_{j}) = 
\|f(x)\|_{\Hc} -1 
+ \frac{2}{1- \exp(-(x_{j+1}-x_j)/h)} 
\geq 
\|f(x)\|_{\Hc} +1\,. 
$$
Note that if $y_j\neq y_{j+1}$, then  $\score_{\Hc}(u^*_{j}) $ is increasing in $x_{j+1}-x_j$.

\item If $y_j= y_{j+1}$, then 
$$\score_{\Hc}(u^*_{j}) = 
\|f(x)\|_{\Hc} -1 
- \frac{2}{1+ \exp(-(x_{j+1}-x_j)/h)} 
+ \frac{4}{1+ \exp(-(x_{j+1}-x_j)/2h)} 
\leq 
\|f(x)\|_{\Hc} +1\,. 
$$
Note that if $y_j= y_{j+1}$, then  $\score_{\Hc}(u^*_{j}) $ is decreasing in $x_{j+1}-x_j$.
\end{itemize}
The above two properties give the statement of the second part of proposition. 
\end{proof}

\subsection{Max Min criteria Binary Search (Corollary~\ref{cor:binsearch})}
\label{s:AppBinarySearch}
%\begin{proof}[Proof of Binary Search Corollary]

  According to the last property in Proposition~\ref{p:Laplace-1D} the
  first sample selected will be at the midpoint of the unit interval
  and the second point will be at $1/4$ or $3/4$.  If the labels
  agree, then the next sample will be at the midpoint of the largest
  subinterval (e.g., at $3/4$ if $1/4$ was sampled first). Sampling at
  the midpoints of the largest subinterval between a consecutive pairs
  labeled points continues until a point with the opposite label is
  found. Once a point the with opposite label have been found,
  Proposition~\ref{p:Laplace-1D} implies that subsequent samples
  repeatedly bisect the subinterval between the closest pair of
  oppositely labeled points. This bisection process will terminate
  with two neighboring points having opposite labels, thus identifying
  one boundary/threshold of $g$. The total number of labels collected
  by this bisection process is at most $\log_2 N$.  After this, the
  algorithm alternates between the two situations above. It performs
  bisection on the subinterval between the close pair of oppositely
  labeled points, if such an interval exists. If not, it samples at
  the midpoint of the largest subinterval between a consecutive pairs
  of labeled points.  The stated label complexity bound follows from
  the assumptions that there are $K$ thresholds and the length of each
  piece (subinterval between thresholds) is on the order of $1/K$.
%\end{proof}

%\input{LaplaceDBproof}
\subsection{One Dimensional Functions with Radial Basis Kernels (Proposition~\ref{p:RadialBasis})}
\label{s:AppRadialBasis}
\begin{proof}[Proof of Proposition \ref{p:RadialBasis} on maximum score with radial basis kernels]

For the ease of notation, for fixed $p$ and $h$, we define $a,b>0$ as the normalized distance between samples such that $x_2-x_1 = b h^{1/p}$ and
$x_3-x_2 = a h^{1/p}$. For $x_2<u<x_3$, we define  $0<c<a$ such that the distance between the point $u$ and $x_2$ is $u-x_2 = c h^{1/p}$, as in Figure~\ref{f:Gaussian3pt}. 
The proposition is based on the assumption that for any pair of  points, $|x-x'|\geq \Delta$ and $h$ is sufficiently small that  $a,b,c,a-c \geq \Delta h^{-1/p}$. 

We want to show that the max score happens at the zero crossing of function $f(u)$. Since we normalized all pairwise distances by $h^{1/p}$, instead we will show that there exists a constant $D$ such that if $a,b,c,a-c \geq D$, then the max score is achieved at the zero crossing. 

\begin{figure}[h]
\begin{centering}
\includegraphics[width=.8\linewidth]{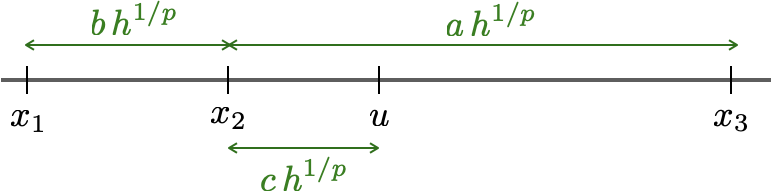}
\end{centering}
\caption{The samples $x_1,x_2$ and $x_3$ are the labeled samples such that $y_1=y_2= +1$ and $y_3=-1$.}
\label{f:Gaussian3pt}
\end{figure}
Note that $\|f^{(u)}_{+}\|$ depends on the location of point $u$, characterized by the normalized distance between $u$ and $x_2$ denoted by  $c$.
We want to prove that for small enough bandwidth,  $\|f^{(u)}_{+}\|$ is increasing in $c$ for $\Delta h^{-1/p}<c<a-\Delta h^{-1/p}$. We can use similar argument to show that $\|f^{(u)}_{-}\|$ is decreasing in $c$. This implies
$$\max_{\Delta h^{-1/p}<c<a-\Delta h^{-1/p}}\,\,\,\, \min_{\es\in\{\-,+\} } \|f^{(u)}_{\es}\| = \|f^{(u^*)}_{+}\| = \|f^{(u^*)}_{-}\|$$
with $u^*$ defined to be the point in which $f(u^*)=0$.

 To do so, we will show that $\frac{\mathrm{d}}{\mathrm{d} c}\|f^{(u)}_{+}\|>0$ in the interval $\Delta h^{-1/p}<c<a-\Delta h^{-1/p}$. 

In the proof of Proposition \ref{p:labelKernel}, we showed the following form for the function  $\|f^{(u)}_{+}\|$,
\[\|f^u_+(x)\| = \frac{\big[1-\, f(u)\big]^2}{ 1-\abv_u^T  \, \Kb^{-1} \abv_u}\]
where $\Kb$ is the kernel matrix for the points $x_1,x_2$ and $x_3$.
 The vector $\abv_v$ is defined to be $\abv_v = [K(x_1,u), K(x_2,u),K(x_3,u)]^T$. 
 The term $f(u)$ is the minimum norm interpolating function based on the points $x_1,x_2$ and $x_3$ and their labels  evaluated at $u$. Equation~\ref{eq:InterpFnc} shows that 
 \[f(u) = \yb^T \Kb^{-1} \abv_u\,.\]

First, we look at $\Kb$ and its inverse in the setup explained above. Using the definition of Radial basis kernels in Equation (4),

\begin{align}
	\mathbf{K} = 
	\begin{bmatrix}
	1 & \exp(-b^p) & \exp[-(a+b)^p] \\
	 \exp(-b^p) & 1 & \exp(-a^p)\\
	\exp[-(a+b)^p] & \exp(-a^p) & 1
	\end{bmatrix}
\end{align}

Hence, 

\begin{align*}
	\mathbf{K}^{-1} 
	&=
	 \frac{1}{|\mathbf{K}|}
	\begin{bmatrix}
	1-e^{-2a^p} && -e^{-b^p}  + e^{-a^p - (b+a)^p} && e^{-a^p-b^p} -e^{-(a+b)^p} 
	\\\\
 -e^{-b^p}  + e^{-a^p - (b+a)^p}
 && 	1-e^{-2(a+b)^p}  && -e^{-a^p} + e^{-b^p - (a+b)^p}\\\\
e^{-a^p-b^p} -e^{-(a+b)^p}  && 
-e^{-a^p} + e^{-b^p - (a+b)^p}
&& 
	1-e^{-2b^p} 
	\end{bmatrix}
	\\\\&=
	 \frac{1}{|\mathbf{K}|}
	 \begin{bmatrix}
	1-e^{-2a^p} && 
	-e^{-b^p}[1-\epsilon_1]
	 && e^{-a^p-b^p} [1-\epsilon_2]
	\\\\
	-e^{-b^p}[1-\epsilon_1]
 && 	1-e^{-2(a+b)^p}  && 
 -e^{-a^p} [1-\epsilon_3]
 \\\\
 e^{-a^p-b^p} [1-\epsilon_2]  && 
-e^{-a^p} [1-\epsilon_3]
&& 
	1-e^{-2b^p} 
	\end{bmatrix}
\end{align*}

The determinant of matrix $\Kb$ is
\begin{align*}
	|\mathbf{K}|
& = 
1 + 2\exp[-a^p - b^p -(a+b)^p]
-  \exp[-2(a+b)^p]
- \exp(-2 a^p) - \exp(-2 b^p)
& = 1-\epsilon'
\end{align*}
where we defined
\begin{align*}
\epsilon'  & = 
  \exp[-2(a+b)^p]
+ \exp(-2 a^p) + \exp(-2 b^p)
-2\exp[-a^p - b^p -(a+b)^p]
\\
\epsilon_1 & = 
\exp\big({b^p-a^p - (b+a)^p}\big)
\\
\epsilon_2 & = 
\exp\big({b^p+a^p - (b+a)^p}\big)
\\
\epsilon_3 & = 
\exp\big({a^p-b^p - (b+a)^p}\big)
\end{align*}
Note that since for $p\geq 1$, we have $(a+b)^p\geq a^p + b^p$, then $\epsilon',\epsilon_1,\epsilon_2,\epsilon_3\geq 0$. Also, there exists a constant $D$ such that if $a,b >D$, then $|\Kb|\geq 0.9$. 

The vector $\abv_v$ is 
\begin{align*}
\abv_v 
&= [K(x_1,u), K(x_2,u),K(x_3,u)]^T
 = 	
 \begin{bmatrix}
		\exp[-(c+b)^p] &
		\exp(-c^p) &
		\exp[-(a-c)^p]
	\end{bmatrix}^T
%	\begin{bmatrix}
%		\exp[-(c+b)^p]\\\\
%		\exp(-c^p)\\\\
%		\exp[-(a-c)^p]
%	\end{bmatrix}
\end{align*}
Next, we compute $f(u)$
\begin{align*}
	f(u) & = 
	\mathbf{y}^T \mathbf{K}^{-1} \abv_u
	\\
	&= 
\frac{1}{|\mathbf{K}|} 
 \begin{bmatrix}
	1 & 1 & -1
\end{bmatrix}
 \begin{bmatrix}
	1-e^{-2a^p} && 
	-e^{-b^p}[1-\epsilon_1]
	 && e^{-a^p-b^p} [1-\epsilon_2]
	\\\\
	-e^{-b^p}[1-\epsilon_1]
 && 	1-e^{-2(a+b)^p}  && 
 -e^{-a^p} [1-\epsilon_3]
 \\\\
 e^{-a^p-b^p} [1-\epsilon_2]  && 
-e^{-a^p} [1-\epsilon_3]
&& 
	1-e^{-2b^p} 
	\end{bmatrix}
	\begin{bmatrix}
		e^{-(c+b)^p}\\\\
		e^{-c^p}\\\\
		e^{-(a-c)^p}
	\end{bmatrix}
		\\ \\
		& =
		\frac{1}{|\mathbf{K}|} 
	\left\{		e^{-(c+b)^p} [1-\epsilon_4]
+		e^{-c^p} [1-\epsilon_5] 
-		e^{-(a-c)^p}  [1-\epsilon_6] 
\right\}
\end{align*}
where we defined
\begin{align*}
		\epsilon_4 &=  e^{-2a^p} + e^{-b^p}[1-\epsilon_1]  +  e^{-a^p-b^p} [1-\epsilon_2] \\
		\epsilon_ 5 &= e^{-b^p}[1-\epsilon_1] +e^{-2(a+b)^p} -e^{-a^p} [1-\epsilon_3] \\
		\epsilon_6 & = e^{-a^p-b^p} [1-\epsilon_2] -e^{-a^p} [1-\epsilon_3] +e^{-2b^p} 
\end{align*}
So there exists a constant $D$ such that if $a,b,c,a-c >D$, then $|f(u)|\leq 1$ and $\epsilon_4, \epsilon_5, \epsilon_6 \leq 0.1$. 

Next, we derive the derivative of $f(u)$ as  a function of $c$
\begin{align*}
|\Kb| \frac{\mathrm{d}}{\mathrm{d} c}	f(u) 
& = 	-p(c+b)^{p-1} e^{-(c+b)^p} [1-\epsilon_4]
-p \,.c^{p-1}	e^{-c^p} [1-\epsilon_5] 
-p(a-c)^{p-1}		e^{-(a-c)^p}  [1-\epsilon_6]\,.
\\
&\leq
-0.9\big\{
p \,.c^{p-1}	e^{-c^p} +
p(a-c)^{p-1}		e^{-(a-c)^p}  
\big\}
	\end{align*}
	where the last inequality uses $\epsilon_4, \epsilon_5, \epsilon_6 \leq 0.1$.
	
Next, we compute $\abv_u^T \Kb^{-1}\abv_u$	 and its derivative with respect to $c$.
To use  the formulation computed in the proof of Proposition 1, we compute  $	\abv_u^T \mathbf{K}^{-1} \abv_u$ to be
\begin{align*}
|\Kb| \abv_u^T \Kb^{-1}\abv_u
& = e^{-2(b+c)^p} \big[	1-e^{-2a^p}\big] 
		+  e^{-2 c^p} \big[	1-e^{-2(a+b)^p}\big]
		 + e^{-2(a-c)^p}\, \big[ 	1-e^{-2b^p} \big]
		 \\& \quad - 2 \exp{\big(- b^p - c^p - (b+c)^p \big)}\,\,[1-\epsilon_1] 
		 \\&\quad +2 \exp{\big( - b^p - a^p -(c+b)^p-(a-c)^p\big)}\,\,[1-\epsilon_2]
		 \\& \quad - 2 \exp{\big(- a^p - c^p - (a-c)^p\big)}\,\,[1-\epsilon_3]
		 \\
	1- \abv_u^T \Kb^{-1}\abv_u	 &\geq 0.9
\end{align*}
where the last inequality holds for large enough constant $D$ such that $a,b,c,a-c>D$.
Hence,

\begin{align*}
	|\Kb| \frac{\mathrm{d}}{\mathrm{d}c}  \abv_u^T \Kb^{-1}\abv_u
		 & = 
		-2p \,(b+c)^{p-1} e^{-2(b+c)^p} \big[	1-e^{-2a^p}\big] \\
		&\quad
		-  2p c^{p-1}e^{-2 c^p} \big[	1-e^{-2(a+b)^p}\big]
		 +2p (a-c)^{p-1} e^{-2(a-c)^p}\, \big[ 	1-e^{-2b^p} \big]
		 \\& \quad + 2p[ c^{p-1} + (b+c)^{p-1}]\,\,  \exp{\big(- b^p - c^p - (b+c)^p \big)}\,\,[1-\epsilon_1] 
		 \\&\quad - 2 p [(c+b)^{p-1}-(a-c)^{p-1}]\,\,  \exp{\big( - b^p - a^p -(c+b)^p-(a-c)^p\big)}\,\,[1-\epsilon_2]
		 \\& \quad + 2p [ c^{p-1} - (a-c)^{p-1}]\,\, \exp{\big(- a^p - c^p - (a-c)^p\big)}\,\,[1-\epsilon_3]
		 \\
		\big|  \frac{\mathrm{d}}{\mathrm{d}c}  \abv_u^T \Kb^{-1}\abv_u \big|
		  &\leq 
		 8p c^{p-1} e^{-2c^p} 		 +8p( a-c)^{p-1} e^{-2(a-c)^p}
\end{align*}
The remaining of the proof is based on the assumption that there exists a constant large enough value $D$ such that $a,b,c,a-c >D$. As we saw, this implies $f(u)< 1$. Plugging in the above computations in to the derivative of the function $\|f^u_+(x)\|$ with respect to $c$, after doing some algebra, we see that this function is increasing in $c$. We give the sketch of this algebra here:
\begin{align*}
 \frac{\mathrm{d}}{\mathrm{d} c}	\|f^u_+(x)\| 
& = 	
\big(1-f(u)\big) \frac{-2(1-\abv_u^T \Kb^{-1} \abv_u)\,  \frac{\mathrm{d}}{\mathrm{d} c}	f(u) + \big(1-f(u)\big)    \frac{\mathrm{d}}{\mathrm{d} c}	\abv_u^T \Kb^{-1} \abv_u}
{(1-\abv_u^T \Kb^{-1} \abv_u)^2}\\
& \geq
\big(1-f(u)\big) \frac{
1.5 \big\{
p \,.c^{p-1}	e^{-c^p} +
p(a-c)^{p-1}		e^{-(a-c)^p}  
\big\} 
+ \big(1-f(u)\big)    \frac{\mathrm{d}}{\mathrm{d} c}	\abv_u^T \Kb^{-1} \abv_u}
{(1-\abv_u^T \Kb^{-1} \abv_u)^2}
\\
&\geq 0\,.
\end{align*}
The last inequality uses the bounds proved above.
%Note that $   \frac{\mathrm{d}}{\mathrm{d} c}	\abv_u^T \Kb^{-1} \abv_u$ decays  as $O(p \min\{ \, c^{p-1} e^{-2c^p}, \, (a-c)^{p-1} e^{-2(a-c)^p})$ whereas $-\frac{\mathrm{d}}{\mathrm{d} c}	f(u) $ is proportional to $O(p \min\{ \, c^{p-1} e^{-c^p}, \, (a-c)^{p-1} e^{-(a-c)^p})$ so for large enough constant $D$ such that $c,a-c>D$, we have the last inequality.

 Similarly, we can prove that $\|f^u_-(x)\|$ is decreasing in $c$. this implies that the $\score$ function is maximize in the zero crossing of function $f(x)$.
 \end{proof}
\section{Maximin kernel based active learning with clustered data}
To prove the statement of theorems presented in Section~\ref{s:result}, we introduce some notations consistent with the notation introduced in Section~\ref{sec:RKHS}.
 Given a set of labeled samples $\Lc=\{(x_1,y_1),\cdots,(x_L,y_L)\}$, define the $L$ by $L$ matrix $\Kb=[k(x_i,x_j)]_{1\leq i,j\leq L}$ and vector $\yb=[y_1,\cdots,y_L]^T$. 

Recall that $\Uc$ is a set of unlabled examples. For $u\in\Uc$ and 
$\es\in\{-1,+1\}$, let $\abv_u=[k(x_1,u), \cdots,k(x_L,u)]^T$ and 
$\Kt_u$ be the $L+1$ by $L+1$ matrix such that
\[
\Kt_u = 
\begin{bmatrix}
\Kb& \abv_u\\
\abv_u^T & 1
\end{bmatrix}
\quad
\text{and}
\quad
\yt_{\es}=
\begin{bmatrix}
\yb
\\
t
\end{bmatrix}\,.
\]
Let $\Bc_{d,p}(r;c)$ be the $d$ dimensional $\ell_p$ ball with radius $r$ centered at $c$ (defined in~\eqref{eq:ballDef}).
Let $V_{d,p}(r)$ be the volume of $\Bc_{d,p}(r;0)$ with respect to the
Lebesgue measure.
%%%%%%%%%%%%%%%%%%%%
%%%%%%%%%%%%%%%%%%%%
%%%%%%%%%%%%%%%%%%%%
%%%%%%%%%%%%%%%%%%%%
\subsection{Proof of Theorem~\ref{t:FirstPoint}}
\label{s:ProofFirstPoint}
The statement of theorem implies that when the data is clustered and
distributed uniformly in $\ell_p$ balls, with centers far enough from
each other, the first selected point using the $\score_{\data}$ function
defined in~\eqref{dscore} is in the largest ball.  To prove
this, we will show that the $\score_{\data}(c_1)$, as defined
in~\eqref{dscore} is larger than $\score_{\data}(v)$ for any
$v\notin B_1$ where $c_1$ is the center of $B_1$.  Note that this does
not imply that the first selected point coincides with the center of
$B_1$. It guarantees that the largest ball contains at least one point
with a score larger than that of every point in other balls.  
% In certain arrangement of balls, this might not be true.  For
% example, if there is a hyperplane which separates $B_1$ from
% $B_2,\cdots $ and $ B_M$, then the first selected point would be
% slightly closer to that hyper plane than $c_1$.

Since  $\Lc=\varnothing$, the empty set, the current interpolating function is uniformly zero everywhere $f(x)=0$ (according to the definition~\eqref{eq:InterpFnc}). 
According to the Equations~\eqref{eq:defFtu} and~\eqref{eq:t1def}, for all $u\in\Uc$, we can choose $\es(u)$ to be equal to $+1$ or $-1$. 
We choose $\es(u)=+1$  without loss of generality for all $u\in\Uc$.

Using~\eqref{eq:InterpFnc}, adding any point $u\in\Uc$ with label $\es(u)$ to $\Lc$ would give the new interpolating function
\[f^u(x):= f^u_{\es(u)}(x) = k(u,x) = \exp\big( -\frac{1}{h} \|x-u\|_p \big)\,.\]
Hence, since $\mathbb{P}_X(x)$ is uniform over $\Xc=\cup_{i=1}^M  B_i$ 
\begin{align*}
\score_{\data}(u) 
&= 
\int_{x\in\Xc}  \exp\big( -\frac{2}{h} \|x-u\|_p \big) \mathrm{d} P_X(x)
 = \frac{1}{V }
\sum_{i=1}^M 
\int_{x\in B_i}  \exp\big( -\frac{2}{h} \|x-u\|_p \big) \mathrm{d} x
\end{align*}
where we defined $V=\sum_{i=1}^M V_{d,p}(r_i)$ to be the total volume of $\mathcal{X}$. 
So, to compute $\score_{\data}(c_1)$,

\begin{align}
V \score_{\data}(c_1) 
& =
\sum_{i=1}^M 
\int_{x\in B_i}  \exp\big( -\frac{2}{h} \|x-c_1\|_p \big) \,\mathrm{d} x
\geq 
\int_{x\in B_1}  \exp\big( -\frac{2}{h} \|x-c_1\|_p \big)\, \mathrm{d} x
\nonumber
\\
&=
\int_{s=0}^{r_1}  \exp\big( -\frac{2s}{h}  \big) \,\mathrm{d} V_{d,p}(s)
\label{eq:T1Pterm1}
\end{align}
where we used the change of variable $s= \|x-c_1\|_p$ in the last line. 
%\rob{explain $\mathrm{d} V_{d,p}(s)$}

For $v\notin B_1$, we want to show that $\score_{\data}(v)\leq \score_{\data}(c_1)$. 
Let $v\in B_j$ such that $j\neq 1$.
\begin{align}
V\score_{\data}(v) 
&=
\int_{x\in B_j}  \exp\big( -\frac{2}{h} \|x-v\|_p \big) \mathrm{d} x
 +
\sum_{i=1, i\neq j}^M 
\int_{x\in B_i}  \exp\big( -\frac{2}{h} \|x-v\|_p \big) \mathrm{d} x
\label{eq:T1Pterm2}
\end{align}
We will bound each of above terms separately. 

For any $i\neq j$ and $x\in B_i$ application of triangle inequality gives 
$$\|x-v\|_p \geq \|c_i - c_j\| - \|x-c_i\| - \|v-c_j\|\geq  D$$ 
since $v\in B_j, x\in B_i$ and $ \|c_i - c_j \|_p\geq D+2 r_1$, $\|x -c_i \|_p \leq r_i\leq r_1$ and $\|v-c_j\|_p \leq r_j\leq r_1$.  Hence, 
\begin{align}
\sum_{i=1, i\neq j}^M &
\int_{x\in B_i}  \exp\big( -\frac{2}{h} \|x-v\|_p \big) \mathrm{d} x
\leq 
\sum_{i=1, i\neq j}^M
\int_{x\in B_i}  e^{ -2D/h} \mathrm{d} x
\leq
 e^{ -2D/h} 
\sum_{i=1, i\neq j}^M V_i\,.
\label{eq:T1Pterm3}
\end{align}

Lemma~\ref{l:CenterAlignment} shows that the first term
in~\eqref{eq:T1Pterm2} is largest when $v$ coincides with
$c_j$. 
%\rob{Maybe just say: ``it is straightforward to show that...''
%  rather than going through the straightfoward? analysis of the
%  lemma.} 
Hence,
\begin{align}
\int_{x\in B_j} 
\hspace{-.1in}
\exp\big( & -\frac{2}{h} \|x-v\|_p \big) \mathrm{d} x
\leq 
\int_{x\in B_j} 
\hspace{-.1in} \exp\big( -\frac{2}{h} \|x-c_j\|_p \big) \mathrm{d} x
=
\int_{s=0}^{r_j}   e^{ -2s/h} \, \,\mathrm{d} V_{d,p}(s)\,.
\label{eq:T1Pterm4}
\end{align}
Equations~\eqref{eq:T1Pterm1},~\eqref{eq:T1Pterm2},~\eqref{eq:T1Pterm3}, and~\eqref{eq:T1Pterm4} give
\begin{align*}
V  \score_{\data}(c_1) - V\score_{\data}(v)
&\geq 
\int_{s=r_j}^{r_1}  \exp\big( -\frac{2s}{h}  \big)\, \,\mathrm{d} V_{d,p}(s)
-
\exp\big( -\frac{2}{h} D \big)
\sum_{i=1, i\neq j}^M V_i
\\
&\geq 
 \exp\big( -\frac{2r_1}{h}  \big)
\,\,\big[ V_1 -  V_j\big]
 -
M \,V_1 \exp\big( -\frac{2D}{h}  \big)\,.
\end{align*}

Hence, 
\begin{align*}
\frac{V}{V_1} \Big[ \score_{\data}(c_1) - \score_{\data}(v)\Big]
&\geq 
 \exp\big( -\frac{2r_1}{h}  \big)
\,\,\big[ 1-\frac{V_j}{V_1}\big]
 -
M \, \exp\big(-2\frac{D}{h}  \big)
\\
&
\stackrel{(a)}{\geq }
 \exp\big( -\frac{2r_1}{h}  \big)
\,\,\big[ 1-\big(\frac{r_2}{r_1}\big)^d\big]
 -
M \, \exp\big( -2\frac{D}{h}  \big)
\geq 0\,,
\end{align*}
where  inequality (a) is due to the property that $$V_{d,p}(r) = \frac{\big[ 2r\,\Gamma(1+1/p)  \big]^d}{\Gamma(1+d/p)}$$
and $r_j\leq r_2$ for all $j\neq 1$.
Also, the assumption 
$$D>  \frac{h}{2}\Big[\ln M - \ln
\big(1-(r_2/r_1)^d\big)\Big],$$ 
and $r_1\leq h/2$
made in the statement of the theorem, 
yields the last inequality.
%%%%%%%%%%%%%
%%%%%%%%%%%%%

\begin{lemma}
\label{l:CenterAlignment}
For any $v\in B=\Bc_{d,p}(r,c)$, we have 
\[
\int_{x\in B} \exp\Big(\frac{-\|x-v\|}{h} \Big)\, \mathrm{d}x
\leq 
\int_{x\in B} \exp\Big(\frac{-\|x-c\|}{h} \Big)\, \mathrm{d}x\,.
 \]
\end{lemma}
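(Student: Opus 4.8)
The plan is to show that the map $v \mapsto F(v) := \int_B \exp(-\|x-v\|_p/h)\,\mathrm{d}x$ is maximized at the center $c$ of $B = \mathcal{B}_{d,p}(r;c)$, by reducing to a one-dimensional rearrangement fact and exploiting the coordinate-wise symmetry of the $\ell_p$ ball. Without loss of generality I would take $c = 0$, so that $B$ is invariant under each coordinate reflection $x_i \mapsto -x_i$. The key reduction is that it suffices to prove, for each coordinate $j$ with the remaining coordinates of $v$ fixed, that zeroing out $v_j$ does not decrease $F$; iterating over $j = 1, \dots, d$ then drives $v$ to the origin and yields $F(v) \le F(0) = \int_B \exp(-\|x-c\|_p/h)\,\mathrm{d}x$. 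As a preliminary consistency check, the substitution $x = R_j y$, where $R_j$ reflects the $j$-th coordinate (an $\ell_p$-isometry leaving $B$ invariant), gives $F(v) = F(R_j v)$, so $F$ is even in each $v_j$, consistent with a maximum at $0$.

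For the single-coordinate step I would slice along the $j$-th axis via Fubini. Writing $x = (x_j, x')$ and $v = (v_j, v')$, the cross-section $I(x') := \{x_j : (x_j, x') \in B\}$ is, because $\|x\|_p$ depends on $x_j$ only through $|x_j|$, a symmetric interval $[-\ell(x'), \ell(x')]$ with $\ell(x') = (r^p - \|x'\|_p^p)^{1/p}_+$. On each slice the integrand is $\psi_w(x_j - v_j)$, where $w := x' - v'$ is fixed and $\psi_w(s) := \exp\!\big(-(|s|^p + \|w\|_p^p)^{1/p}/h\big)$ is even in $s$ and strictly decreasing in $|s|$. Hence the inner integral equals $\int_{-\ell - v_j}^{\,\ell - v_j}\psi_w(u)\,\mathrm{d}u$: the integral of a symmetric-decreasing function over an interval of fixed length $2\ell$ whose center is $-v_j$.

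The remaining ingredient is the elementary one-dimensional lemma that, for an even $\psi$ nonincreasing in $|s|$, the map $m \mapsto \int_{m-\ell}^{m+\ell}\psi(u)\,\mathrm{d}u$ is maximized at $m = 0$: it is even in $m$, and for $m>0$ its derivative $\psi(m+\ell) - \psi(m-\ell) \le 0$ since $|m+\ell| \ge |m-\ell|$. Applying this with $m = -v_j$ gives the slice-wise bound $\int_{I(x')}\psi_w(x_j - v_j)\,\mathrm{d}x_j \le \int_{I(x')}\psi_w(x_j)\,\mathrm{d}x_j$, and integrating over $x'$ yields $F(v_j, v') \le F(0, v')$, completing the one-coordinate step. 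Iterating over all coordinates finishes the proof.

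I do not anticipate a serious obstacle: this is a standard Steiner-type slicing/symmetrization argument. The only points needing care are (i) confirming the $\ell_p$ cross-section is a genuinely symmetric interval, which rests solely on $\|x\|_p$ being even and monotone in $|x_j|$, and (ii) the even-unimodality of $\psi_w$, which holds for every $p \ge 1$ (indeed any $p>0$). One could alternatively derive the conclusion in a single step from Riesz's rearrangement inequality, but the coordinate-wise slicing keeps the argument self-contained and avoids invoking Euclidean symmetric rearrangement for an $\ell_p$-symmetric integrand.
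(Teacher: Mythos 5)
Your argument is correct, but it takes a genuinely different route from the paper's. You prove the lemma by coordinate-wise Steiner-type symmetrization: take $c=0$, slice $B$ along the $j$-th axis, note that each cross-section is a symmetric interval and that the integrand restricted to a slice is an even function of $x_j-v_j$, nonincreasing in $|x_j-v_j|$, invoke the one-dimensional fact that integrating such a function over an interval of fixed length is maximized when the interval is centered at the origin, and iterate over $j$ to move $v$ to $c$. This is sound (your cautionary points (i) and (ii) do hold for every $p\ge 1$), but it takes $d$ steps and leans on the coordinate separability $\|x\|_p^p=\sum_i|x_i|^p$ of the $\ell_p$ norm. The paper's proof is instead a one-step volume-exchange argument: let $B_1=\mathcal{B}_{d,p}(r;c)$ and $B_2=\mathcal{B}_{d,p}(r;v)$; by translation invariance of Lebesgue measure, $\int_{B_1}e^{-\|x-c\|_p/h}\,\mathrm{d}x=\int_{B_2}e^{-\|x-v\|_p/h}\,\mathrm{d}x$, so the difference to be bounded collapses to $\int_{B_1\setminus B_2}e^{-\|x-v\|_p/h}\,\mathrm{d}x-\int_{B_2\setminus B_1}e^{-\|x-v\|_p/h}\,\mathrm{d}x$; since the integrand is at most $e^{-r/h}$ off $B_2$ and at least $e^{-r/h}$ on $B_2$, and the two set differences have equal volume, this is $\le 0$. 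The trade-off: the paper's proof is shorter and uses nothing about $\|\cdot\|_p$ beyond the fact that $B_2$ is a translate of $B_1$ and that the kernel is a nonincreasing function of distance, so it works verbatim for any norm and any nonincreasing radial profile; your proof needs the unconditional/separable structure of $\ell_p$, but it is a standard, reusable symmetrization template and stays entirely elementary (and, as you note, Riesz rearrangement would give a one-line alternative in the Euclidean case). Note also that neither proof actually uses the hypothesis $v\in B$.
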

\begin{proof}
To prove the statement of lemma, instead of looking at the integration of two different functions 
$ \exp\Big(\frac{-\|x-v\|}{h} \Big)$ and $ \exp\Big(\frac{-\|x-c\|}{h} \Big)$ on one ball, we look at two balls each centered on $v$  and $c$. This intermediate steps helps up prove the statement of lemma.

Let $B_1=\Bc_{d,p}(r,c)$ and $B_2=\Bc_{d,p}(r,v)$. Then, 
\[
\int_{x\in B_1} \exp\Big(\frac{-\|x-c\|}{h} \Big)\, \mathrm{d}x
=
\int_{x\in B_2} \exp\Big(\frac{-\|x-v\|}{h} \Big)\, \mathrm{d}x\,.
\]
Hence, to prove the statement of the lemma, we upper bound the following:
\begin{align*}
 \int_{x\in B_1}
\hspace{-.1in}
 \exp\Big(\frac{-\|x-v\|}{h} \Big)\, \mathrm{d}x
&-
\int_{x\in B_1}
\hspace{-.1in}
 \exp\Big(\frac{-\|x-c\|}{h} \Big)\, \mathrm{d}x
\\
&=
\int_{x\in B_1} 
\hspace{-.1in}\exp\Big(\frac{-\|x-v\|}{h} \Big)\, \mathrm{d}x
-
\int_{x\in B_2} \hspace{-.1in}
\exp\Big(\frac{-\|x-v\|}{h} \Big)\, \mathrm{d}x
\\
&=
\int_{x\in B_1\setminus B_2}
\hspace{-.2in} \exp\Big(\frac{-\|x-v\|}{h} \Big)\, \mathrm{d}x
-
\int_{x\in B_2 \setminus B_1 }
\hspace{-.2in}
 \exp\Big(\frac{-\|x-v\|}{h} \Big)\, \mathrm{d}x
\\
&\stackrel{(a)}{\leq }
e^{-r/h}\int_{x\in B_1\setminus B_2} \hspace{-.2in}\mathrm{d}x
-
e^{-r/h} \int_{x\in B_2 \setminus B_1 } \hspace{-.2in}\mathrm{d}x
\stackrel{(b)}=
0\,.
\end{align*}
Inequality (a) is due to the fact that since $v$ is the center of $B_2$, for $x\in B_1\setminus B_2$, $\|x-v\|\geq r$ and for $x\in B_2\setminus B_1$, $\|x-v\|\leq r$. Equality (b) is due to the fact that volume of $B_2 \setminus B_1$ is equal to the volume of $B_1 \setminus B_2$.
\end{proof}

\subsection{Proof of Theorem~\ref{t:clusterExplore}}
\label{s:prooftclusterExplore}
The statement of theorem shows that if the data is clustered, and few
of the clusters has been labeled so far, the algorithm selects a
sample from a cluster which has not been labeled so far. 
To do so, without loss of generality, we  show that for any  $u\in  B_L$, and there exists a $v\in B_{L+1}$ such that 
$\score_{\data}(v) > \score_{\data}(u)$.
The same argument shows that for any $i\leq L$ and any $u\in B_i$,  there exists a $v\in B_{L+1}$ such that  $\score_{\data}(v) > \score_{\data}(u)$. This proves that the score of any point in the labeled balls so far
is smaller than at least one point in the unlabeled clusters and hence the next point to be selected is in one of currently unlabeled balls. 

We will show that for any  $u\in  B_L$, and there exists a $v\in B_{L+1}$ such that 
$\score_{\data}(v) > \score_{\data}(u)$.
%First, we will show that for any $u\in B_1$, $t(u)=y_1$. 
In particular, for any fixed $u\in B_1,$
 we choose
\begin{align}
\label{eq:defvclusters}
v=c_{L+1} + (u-c_1)\,.
\end{align}
We break the rest of the proof into five steps.

%%%%%%%%

\noindent{\bf Step 1:}
First, we will look into the interpolator function $f(x)$ such that $f(x_i)=y_i$ for $(x_i,y_i)\in\Lc$, defined in~\eqref{eq:InterpFnc}.

Since $x_i\in B_i$ for $i=1,\cdots,L$, and $\|c_i-c_j\|_p > D+2r$, we have $\|x_i-x_j\|_p\geq D$ and $k(x_i,x_j)\leq e^{-D/h}$. Hence, matrix $\Kb$ can be decomposed as
\begin{align*}
\Kb= \mathbf{I}_L + e^{-D/h} \mathbf{E} 
\end{align*}
where $ \mathbf{I}_L$ is the identity $L\times L$ matrix and  matrix
$\mathbf{E}=[E_{i,j}]_{1\leq i,j\leq L}$  satisfies $0\leq E_{i,j}\leq 1$. 
Hence,  using Taylor series,
\begin{align}
\Kb^{-1}
&=
 \mathbf{I}_L + \sum_{n=1}^{\infty} (-1)^n e^{-nD/h} \mathbf{E}^n
\stackrel{(a)}{=}
\mathbf{I}_L + \widetilde{ \mathbf{E} }^{(1)}\sum_{n=1}^{\infty} e^{-nD/h} L^{n-1} 
\nonumber
\\
&\stackrel{(b)}{=}
\mathbf{I}_L + \frac{ e^{-D/h}}{1-L e^{-D/h}} \widetilde{ \mathbf{E} }^{(1)} 
\stackrel{(c)}{=}
\mathbf{I}_L + 2\,e^{-D/h} \widetilde{ \mathbf{E}}^{(2)} 
\label{eq:InverseFarK}
\end{align}
The matrices $ \widetilde{ \mathbf{E} }^{(1)}=[\widetilde{E}_{i,j}]_{1\leq i,j\leq L}$ and
 $ \widetilde{ \mathbf{E} }^{(2)}$
  also satisfy $|\widetilde{E}^{(1)}|_{i,j}\leq 1$ and $|\widetilde{E}^{(2)}|_{i,j}\leq 1$. 
   For any $n\geq 1$, the matrix $\mathbf{E}^n$ has elements smaller
   than $L^{n-1}$ (This can be proved using induction over $n$). This gives (a).
(b) is the summation of a geometric series (which holds since $D>h\log L$). 
(c) is due to the assumption $D> h\ln(2L)$.
Plugging this into~\eqref{eq:InterpFnc}  gives
\begin{align*}
f(x)&=\sum_{i=1}^L( y_i +\epsilon^{(f)} \gamma_i)k(x_i,x)
\end{align*}
where $\epsilon^{(f)} = 2L \,e^{-D/h}$. 
To make the notation easier, from now on, we will use the variables $\gamma_i$   with possibly different values in each line.  Note that the values of $\gamma_i$ depend on the elements of matrix $\widetilde{E}^{(2)}$ and realization of $y_i$ for $i=1,\cdots,L$.  But we  always have $|\gamma_i|\leq 1$.
%%%%%%%%%%
%%%%%%%%

\noindent{\bf Step 2:}
For any $v\in B_{L+1}$, we have $\|v-x_i\|_p\geq D$ for all $i=1,\cdots,L$. Hence,
 the matrix $\Kt_v$ defined in~\eqref{eq:defKtYt} takes the form
\begin{align*}
\Kt_v
& = 
 \mathbf{I}_{L+1} + e^{-D/h} \mathbf{E} 
\end{align*}
where  matrix $\mathbf{E}=[E_{i,j}]_{1\leq i,j\leq L+1}$  satisfies $|E_{i,j}|\leq 1$. 
Similar analysis as in step 1 and~\eqref{eq:InverseFarK} shows that for $v\in B_{L+1}$ and  any $\es\in\{-1,+1\}$,
(using definition of $f_v^{\es(v)}(x)$ in~\eqref{eq:defFtu})
we have
\begin{align*}
f_v^{\es}(x) & = \sum_{i=1}^L[ y_i + \epsilon^{(v)} \gamma_i]\,k(x_i,x) + [\es+\epsilon^{(v)} \gamma_{L+1}]\, k(v,x)
\end{align*}
where $\epsilon^{(v)}= 2(L+1) \,e^{-D/h}$. 
Hence, 
\begin{align*}
f^v_{\es}(x) -& f(x)
=
 \es\, k(x,v)
+
\big(\epsilon^{(v)} + \epsilon^{(f)}\big) \Big[\sum_{i=1}^{L} \gamma_i k(x,x_i) +\gamma_{L+1} k(x,v)\Big] 
\end{align*}
Note that the value of the variables $\gamma_i$ above might be different from the previous lines, but there exists  parameters $\gamma_i$ that satisfy the above equality and $|\gamma_i|\leq 1$.

%%%%%%%%%%
\noindent{\bf Step 3: }
 For any $u\in B_L$, we will show that, $y_L f(u)\geq 0$.  According to Proposition~\ref{p:labelKernel} in  Section~\ref{s:PropRKHS}, this proves that  $\es(u)=y_L$: our estimation of label of any sample in ball $B_L$ is $y_L$, the label of the only currently labeled sample in $B_L$.
\begin{align*}
y_L  f(u) 
& = 
y_1\sum_{i=1}^L ( y_i +  \epsilon^{(f)}\gamma_i )k(x_i,u)
\\
& =
(1+ \epsilon^{(f)} y_L\gamma_L) k(x_L,u) 
+ 
y_L\sum_{i=1}^{L-1} ( y_i +  \epsilon^{(f)}\gamma_i  )k(x_i,u)
\\
&
\stackrel{(a)}{\geq} (1-\epsilon^{(f)}) e^{-2 r/h} - L (1+\epsilon^{(f)} ) e^{-D/h}
\stackrel{(b)}{\geq} 
0\,,
\end{align*}
where (a) is due to the following facts: since $x_L\in B_L$ and $u\in B_L$,  
we have $\|x_L-u\|\leq 2r$ and $k(x_L,u)\geq e^{-2r/h}$. 
Also, since $u \in B_L$, for $i\leq L-1$ we have 
$\|x_i-u\|\geq D$ and $k(x_i,u)\leq e^{-D/h}$. 
The assumptions  $D> 12 h \log (2M)$, $L<M$ and the definition of $\epsilon^{(f)} = 2L
  e^{-D/h}$ give $\epsilon^{(f)} \leq 1/100$.    
  Then using the assumption $r<h/3$ gives (b).

%%%%%%%%
\noindent{\bf Step 4:}
Fix $u\in B_L$ and define $d:=\|u-x_L\|\leq 2r$. 
Step 3 above proves $\es(u)=y_L$. 
 Lemma~\ref{l:inversKt} shows that there exist parameters $\gamma_i$ such that $|\gamma_i|\leq 1$ and 
 the interpolating function $f^u_{\es(u)}(x)$ defined  in~\eqref{eq:defFtu} takes the form
\begin{align*}
f^u_{\es(u)}(x) 
& =
 \Big[ \frac{y_L}{1+e^{-d/h}} +L\epsilon^{(u)}\gamma_{L+1} \Big]  \,k(x,u) 
 +
  \Big[ \frac{y_L}{1+e^{-d/h}} +L \epsilon^{(u)}\gamma_{1} \Big] \,k(x,x_L)
  \\
&\quad +
\sum_{i=1}^{L-1} (y_i + \epsilon^{(u)}\gamma_{i} ) K(x,x_i) 
\end{align*}
where $\epsilon^{(u)} =  4 L^3 e^{-D/h} $.
Hence, 
\begin{align*}
f^u_{\es(u)}(x) - &f(x)
=
 \frac{y_L}{1+e^{-d/h}}  k(x,u) -  \frac{y_L e^{-d/h}}{1+e^{-d/h}}  k(x,x_L)
  \\
&\quad + 
\big[ L \epsilon^{(u)} +  \epsilon^{(f)} \big]
\Big[\sum_{i=1}^{L} \gamma_i k(x,x_i) + \gamma_{L+1} k(x,u)\Big]
\end{align*}

%%%%%%%%%%
\noindent{\bf Step 5:} 
Hence, using the fact that $k(x,x')\leq 1$, we get
\begin{align*}
|f^v_{\es}(x) &- f(x)|^2
- |f^u_{\es(u)}(x) - f(x)|^2
\\
&\geq 
k^2(x,v)
-2(L+1)^2 \big[ L \epsilon^{(u)} +\epsilon^{(v)}+ 2 \epsilon^{(f)} \big]
 -\frac{1}{(1+e^{-d/h})^2 } 
 \Big[ k(x,u) -  e^{-d/h} k(x,x_L) \Big]^2
\end{align*}

Since $P_{X}(x)$ is uniform over $\cup_{j=1}^M B_j$, we want to show that 
\begin{align}
\label{eq:Proof2Term1}
\sum_{j=1}^ M
\int_{x\in B_j}
|f^v_{\es}(x) - f(x)|^2
- |f^u_{\es(u)}(x) - f(x)|^2
\,\mathrm{d}x
\geq 0\,.
\end{align}
To do so, we will bound the above term by
\begin{align}
&\int_{x\in B_{L+1}}
\hspace{-.2in}
k^2(x,v)
\,\mathrm{d}x
-
2(L+1)^2 \big[ L \epsilon^{(u)} +\epsilon^{(v)}+ 2 \epsilon^{(f)} \big]
\sum_{i=1}^M V_i 
 \nonumber
 \\
&-
\sum_{j=1}^ M
\int_{x\in B_j}
\frac{ \Big[ k(x,u) -  e^{-d/h} k(x,x_L) \Big]^2 }{(1+e^{-d/h})^2 } 
 \,\mathrm{d}x
 \nonumber
 \\
&\geq 
\int_{x\in B_{L+1}}
\hspace{-.2in}
k^2(x,v)
\,\mathrm{d}x
-\int_{x\in B_1}
\frac{ \Big[ k(x,u) -  e^{-d/h} k(x,x_L) \Big]^2 }{(1+e^{-d/h})^2 } 
 \,\mathrm{d}x
  \nonumber
 \\
&\quad
-
\Big\{
2(L+1)^2 \big[ L \epsilon^{(u)} +\epsilon^{(v)}+ 2 \epsilon^{(f)} \big]
+ e^{-2D/h}
\Big\}
\sum_{i=1}^M V_i 
\label{eq:Proof2Term2}
\end{align}
where the last inequality holds since for $j\neq 1$ and $x\in B_j$, we have $k(x,u),k(x,x_L)\leq e^{-D/h}$.

Note that  in~\eqref{eq:defvclusters} we defined
$v=c_{L+1} + (u-c_L)$. This gives
\[ \int_{B_{L+1}}  k^2(x,v) dx =  \int_{B_{L}}  k^2(x,u) dx\,. \]

Hence,
\begin{align*}
&\int_{x\in B_{L+1}}
\hspace{-.3in}
{(1+e^{-d/h})^2 } 
k^2(x,v)
\,\mathrm{d}x
-
\hspace{-.05in}
\int_{x\in B_L}
\hspace{-.2in}
{ \Big[ k(x,u) -  e^{-d/h} k(x,x_L) \Big]^2 }
\hspace{-.05in}
\mathrm{d}x
 \\
&=\int_{B_L} 
    \Big[(1+e^{-d/h})^2 k^2(x,u)  - 
  k^2(x,u) -  e^{-2d/h}k^2(x,x_L)
+ 2 e^{-d/h}k(x,u) k(x,x_L)\Big]
  dx
\\
&=
e^{-d/h}
\int_{B_L} 
 \Big[   (2+e^{-d/h}) k^2(x,u)  - 
  e^{-d/h}k^2(x,x_L)
 + 2 k(x,u) k(x,x_L)\Big]
  dx
  \\
&\stackrel{(a)}{\geq}
e^{-2d/h}
\int_{B_L} 
k^2(x,x_L)  \Big[1+ e^{-d/h} (2+e^{-d/h})\Big]
  dx
  \\
&\stackrel{(b)}{\geq}
 e^{-4r/h}
\int_{B_L}  
k^2(x,u)
  dx
 \,\,\stackrel{(c)}{\geq}\,\,
   e^{-6r/h}
V_L
\stackrel{(d)}{\geq} 
\frac{1}{10}V_L\,.
\end{align*}
We defined $d=\|u-x\|_p$. This implies
$ k(x,u)\geq k(x,x_L)e^{-d/h}$ which gives (a). (b) uses $d\leq 2 r$.
 For $x\in B_L$, we have $\|u-x\|_p\leq 2r$. This gives inequality (c).
The assumption  $\frac{h}{3}$ implies $r<\frac{h}{6}\,\ln 10$ which gives (d).

The assumption $D\geq 12 h \ln(2M)$ implies $D\geq 6 h \ln(2 L M)$ (since $L<M$) which gives
\[2(L+1)^2 \big[ L \epsilon^{(u)} +\epsilon^{(v)}+ 2 \epsilon^{(f)} \big]
+ e^{-2D/h} < \frac{1}{15 M}\,. \]

Plugging the above two statements in~\eqref{eq:Proof2Term2} gives the desired result.

 So for any $u\in \cup_{i=1}^{L} B_{i}$, there exists a $v\in \cup_{i=L+1}^{M} B_{i}$ which has larger score. Hence, the selection criterion based on $\score_{\data}$ would always pick a sample from a new ball to be labeled. \qedhere

 %%%%%%%%
  %%%%%%%%
   %%%%%%%%
 \begin{lemma}
 \label{l:inversKt}
 Let $\Lc=\{(x_i,y_i)\}_{i=1,\cdots,L}$ such that $\|x_i-x_j\|_p\geq D$ and let $u$ be such that $\|u-x_L\|=d\leq 2r$ and $\es(u)=y_L$. 
 Then there exists constants $\epsilon^{(u)} =  8 L
e^{-D/h} $ and  $\{\gamma_i\}_{i=1}^{L+1}$ satisfying
 $|\gamma_i|\leq 1$ such that 
 the interpolating function $f^u_{\es(u)}(x)$ defined
 in~\eqref{eq:defFtu} may be expressed as
 \begin{align*}
f^u_{\es(u)}(x) 
& =
 \Big[ \frac{y_L}{1+e^{-d/h}} +L\epsilon^{(u)}\gamma_{L+1} \Big]  \,k(x,u) 
 +
  \Big[ \frac{y_L}{1+e^{-d/h}} +L \epsilon^{(u)}\gamma_{L} \Big] \,k(x,x_L)
  \\
&\quad +
\sum_{i=1}^{L-1} (y_i + \epsilon^{(u)}\gamma_{i} ) K(x,x_i)\,.
\end{align*}
\end{lemma}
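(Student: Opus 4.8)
The plan is to solve the linear system $\Kt_u \widetilde{\alpha} = \yt_{\es(u)}$ directly, treating $\Kt_u$ as a perturbation of a block-diagonal matrix and tracking the single near-singular direction by hand. Write $d := \|u-x_L\|_p \le 2r$ and $\beta := e^{-d/h}$, and recall that the coordinates of $\Kt_u$ are ordered $x_1,\dots,x_{L-1},x_L,u$, with $x_L$ and $u$ the only pair at distance $\le 2r$; every other pair is at distance $\ge D$ (for pairs involving $u$, $\|x_i-u\|_p \ge D-d \ge D-2r$). First I would split $\Kt_u = K_0 + R$, where $K_0$ is block diagonal, equal to $I_{L-1}$ on the first $L-1$ coordinates and to $\left(\begin{smallmatrix} 1 & \beta\\ \beta & 1\end{smallmatrix}\right)$ on the $\{x_L,u\}$ block, and $R$ collects the remaining far kernel entries. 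Since $k(x_i,x_j)\le e^{-D/h}$ for every far pair and $r<h/3$, all entries of $R$ are $O(e^{-D/h})$; crucially, the two rows of $R$ indexed by $x_L$ and $u$ are nearly identical, because $\bigl|\,\|x_i-x_L\|_p-\|x_i-u\|_p\,\bigr|\le d$ forces $|R_{L,i}-R_{L+1,i}|=|k(x_i,x_L)-k(x_i,u)|\le (e^{d/h}-1)e^{-D/h}$ in each far column $i$. This near-equality of rows is the property that saves the estimate.

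Next I would read off the leading term. The inverse of $K_0$ is explicit: identity on the first $L-1$ coordinates and $\frac{1}{1-\beta^2}\left(\begin{smallmatrix}1 & -\beta \\ -\beta & 1\end{smallmatrix}\right)$ on the close block. Since $\es(u)=y_L$ (established in Step~3 via Proposition~\ref{p:labelKernel}), the right-hand side is $\yt_{\es(u)}=[y_1,\dots,y_{L-1},y_L,y_L]^T$, whose close-block part $[y_L,y_L]^T$ is purely symmetric. Applying $K_0^{-1}$ gives $\widetilde{\alpha}^{(0)}_i=y_i$ for $i<L$ and $\widetilde{\alpha}^{(0)}_L=\widetilde{\alpha}^{(0)}_{L+1}=\frac{y_L(1-\beta)}{1-\beta^2}=\frac{y_L}{1+\beta}$, which are exactly the main terms claimed in the lemma. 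The point to stress is that this leading solution has no component along the antisymmetric vector $e_L-e_{L+1}$, so it never excites the large eigenvalue $\frac{1}{1-\beta}$ of $K_0^{-1}$.

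The correction $\widetilde{\alpha}-\widetilde{\alpha}^{(0)}$ is where the work lies, and I would isolate the dangerous direction by an orthogonal change of variables that fixes the first $L-1$ coordinates and replaces $(\alpha_L,\alpha_{L+1})$ by $s=(\alpha_L+\alpha_{L+1})/\sqrt2$ and $\delta=(\alpha_L-\alpha_{L+1})/\sqrt2$. In these coordinates $K_0$ is diagonal on the special block with entries $1+\beta$ and $1-\beta$, the transformed right-hand side has a vanishing $\delta$-entry, and the transformed $R$ has a $\delta$-row and $\delta$-column of size $O((e^{d/h}-1)e^{-D/h})$ by the near-equality above (its $\delta\delta$-entry even vanishes, since the diagonal and the $e^{-d/h}$ term were absorbed into $K_0$). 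Eliminating the well-conditioned coordinates $\{1,\dots,L-1,s\}$, whose $K_0$-block has eigenvalues $\ge 1$ and is therefore invertible with a convergent Neumann series in $e^{-D/h}$ (using $D>h\ln(2L)$), leaves a scalar Schur complement for $\delta$ of the form $\delta=\bigl(1-\beta+O(e^{-2D/h})\bigr)^{-1}\cdot O\bigl(L(e^{d/h}-1)e^{-D/h}\bigr)$. The key cancellation is $\frac{e^{d/h}-1}{1-\beta}=e^{d/h}\le e^{2r/h}<2$, so $|\delta|=O(Le^{-D/h})$ uniformly in $d$; feeding this back, every other coordinate equals $\widetilde{\alpha}^{(0)}_i+O(Le^{-D/h})$. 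Rotating back, $\alpha_L=(s+\delta)/\sqrt2$ and $\alpha_{L+1}=(s-\delta)/\sqrt2$ each equal $\frac{y_L}{1+\beta}+O(Le^{-D/h})$, which is the stated form with $|\gamma_i|\le 1$ once the $O(\cdot)$ constants are absorbed into $\epsilon^{(u)}=8Le^{-D/h}$ (with the extra factor of $L$ available for the two close-pair coefficients).

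The main obstacle is precisely the near-singularity of the close block: for small $d$, both $\|K_0^{-1}\|$ and any black-box Neumann bound for $\Kt_u^{-1}$ blow up like $(1-\beta)^{-1}$, so a generic perturbation argument yields nothing uniform in $d$. The resolution — and the only nontrivial estimate — is that the perturbation can reach the antisymmetric coordinate $\delta$ only through the difference of the $x_L$ and $u$ kernel columns, which is itself $O(e^{d/h}-1)$, and this factor exactly cancels the $(1-\beta)^{-1}$ blow-up. Once this is established, the remaining bookkeeping (checking $|\gamma_i|\le 1$ and convergence of the Neumann series on the regular block) is routine and parallels Steps~1--2 of the proof of Theorem~\ref{t:clusterExplore}.
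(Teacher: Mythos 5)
Your proposal is correct and is essentially the paper's own argument in different clothing: both isolate the close pair $\{x_L,u\}$ as an exactly-inverted $2\times 2$ block (yielding the leading coefficients $y_L/(1+e^{-d/h})$), treat all far kernel entries as an $O(e^{-D/h})$ perturbation controlled by a Neumann series, and defeat the $(1-e^{-d/h})^{-1}$ near-singularity with the same triangle-inequality estimate $k(u,x_i)e^{-d/h}\le k(x_L,x_i)\le k(u,x_i)e^{d/h}$, which is the paper's~\eqref{eq:lemma2Triangle}. The only difference is bookkeeping: the paper carries out the cancellation entrywise inside $\textbf{F}=\textbf{B}\textbf{D}^{-1}$ (showing $0\le \textbf{F}_{i,1}\le e^{-D/h}$ despite $\textbf{D}^{-1}$ blowing up) before taking a Schur complement with respect to the close block, whereas you rotate the close block to symmetric/antisymmetric coordinates and exhibit the same cancellation as the scalar identity $(e^{d/h}-1)/(1-e^{-d/h})=e^{d/h}$ in a Schur complement for the antisymmetric coordinate.
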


\begin{proof}

For fixed $u\in B_L$, define $d:=\|u-x_L\|\leq 2r$.
Step 3 in the proof of Theorem~\ref{t:clusterExplore} shows that $\es(u)=y_L$.
According to~\eqref{eq:InterpFnctu}, we have 
\begin{align*}
\widetilde{\alpha}
& = 
\Kt_u^{-1}
\begin{bmatrix}
y_1,\cdots, y_{L-1}, y_L , y_L
\end{bmatrix}^T\,.
\end{align*}
Define ${\textbf{y}}_{[1:L-1]}=[y_1,\cdots,y_{L-1}]^T$ and ${\widetilde{\alpha}}_{[1:L-1]} = [\alpha_1,\cdots,\alpha_{L-1}]$. 
To prove the statement of the lemma, we need to show that 
\begin{align}
\label{eq:Lemma2Term1}
{\widetilde{\alpha}}_{[1:L-1]} = {\textbf{y}}_{[1:L-1]} + \epsilon^{(u)}{\gamma}_{[1:L-1]}
\end{align}
%\rob{i removed tilde on gamma}
 for 
 ${\gamma}_{[1:L-1]}= [\gamma_{1},\cdots,\gamma_{L-1}]^T$ 
 and 
\begin{align}
\label{eq:Lemma2Term2}
\begin{bmatrix}
\widetilde{\alpha}_L
\\ \widetilde{\alpha}_{L+1}
\end{bmatrix} = 
\frac{1}{1+e^{-d/h}}y_L 
\begin{bmatrix}
1
\\ 
1
\end{bmatrix}
+
L\epsilon^{(u)}
\begin{bmatrix}
\gamma_{L}
\\ 
\gamma_{L+1}
\end{bmatrix}\,.
\end{align}
for parameters $\gamma_i$ such that $|\gamma_i|\leq 1$. 

We will partition the matrix $\Kt_u$ defined
in~\eqref{eq:defKtYt} into the blocks corresponding to
$\{x_1,\dots,x_{L-1}\}$ and $\{x_L,x_u\}$,
\begin{align*}
\Kt_u
& = 
\begin{bmatrix}
\textbf{A} & \textbf{B}
\\
\textbf{B}^T & \textbf{D}
\end{bmatrix}
\end{align*}
where $\textbf{A} $ is a symmetric $L-1$ by $L-1$ matrix and
$\textbf{D} $ is a symmetric $2$ by $2$ matrix.  The proof essentially
follows from the fact that the elements of $\textbf{B}$ are
$k(x_i,x_L)$ and $k(x_i,x_U)$ for $i=1,\dots,L-1$, and hence very
small, so that
\begin{align*}
\Kt_u^{-1}
& \approx 
\begin{bmatrix}
\textbf{A}^{-1} & \textbf{0}
\\
\textbf{0}^T & \textbf{D}^{-1}
\end{bmatrix}
\end{align*}
To this end, first note that the diagonal elements of  $\textbf{A} $  are one. 
The off-diagonal elements of  $\textbf{A}$ are  $\textbf{A}_{i,j} = k(x_i,x_j)\leq e^{-D/h}$ for $i,j\leq L-1$.

Since $d=\|u-x_L\|$, we have  $\textbf{D}_{1,2} =\textbf{D}_{2,1}= k(x_L,u)\leq e^{-d/h}$, and 
\[
\textbf{D} = \begin{bmatrix}
 1 & e^{-d/h}
 \\
 e^{-d/h}&1
\end{bmatrix}\]

The elements of $L-1$ by $2$ matrix $\textbf{B}$ are $\textbf{B}_{i,1} = k(x_i,x_L)\leq e^{D/h}$ and $\textbf{B}_{i,2} = k(x_i,u)\leq e^{D/h}$ for $i\leq L-1$.
Since $d=\|u-x_L\|$, the application of triangle inequality gives  
$$\|u-x_i\| - \|u-x_L\| \leq  \|x_L-x_i\|\leq  \|u-x_i\| + \|u-x_L\|\,.$$
Hence, 
\begin{align}
\label{eq:lemma2Triangle}
k(u,x_i) e^{-d/h}\leq k(x_L,x_i) \leq k(u,x_i) e^{d/h}\,.
\end{align}
and consequently, $\textbf{B}_{i,1}\leq \textbf{B}_{i,2} e^{d/h}$.

Define $\textbf{F}= \textbf{B}\textbf{D}^{-1}$.
Using Schur complements, the inverse of $\Kt_u$ can be expressed as
\begin{align*}
\Kt_u^{-1}
& = 
\begin{bmatrix}
 \big( \textbf{A} - \textbf{F}  \textbf{B}^T\big)^{-1}
 & 
  - \big( \textbf{A} - \textbf{F}\textbf{B}^T\big)^{-1}
  \textbf{F}
 \\
 -\textbf{F}^T \big( \textbf{A} - \textbf{F} \textbf{B}^T\big)^{-1}
 &
 \big( \textbf{I} - \textbf{B}^T \textbf{A}^{-1} \textbf{F}\big)^{-1} \textbf{D}^{-1}
\end{bmatrix}\,.
\end{align*}
%Define $\textbf{F}= \textbf{B}\textbf{D}^{-1}$.
Then, we have 
\begin{align*}
{\widetilde{\alpha}}_{[1:L-1]} &= 
\big( \textbf{A} -\textbf{F} \textbf{B}^T\big)^{-1}
{\textbf{y}}_{[1:L-1]} 
  - \big( \textbf{A} - \textbf{F} \textbf{B}^T\big)^{-1}
  \textbf{F} 
  \begin{bmatrix}
  1
  \\ 1
  \end{bmatrix} y_L
  \\
  \begin{bmatrix}
\widetilde{\alpha}_L
\\ \widetilde{\alpha}_{L+1}
\end{bmatrix} 
& = 
 -\textbf{F}^T \big( \textbf{A} -\textbf{F}\textbf{B}^T\big)^{-1}
 {\textbf{y}}_{[1:L-1]} 
+
  \big( \textbf{I} - \textbf{B}^T \textbf{A}^{-1}\textbf{F}\big)^{-1} 
\textbf{D}^{-1}
\begin{bmatrix}
 1\\
  1
  \end{bmatrix} y_L
\end{align*}
Note that
\[
\textbf{D}^{-1} =
\frac{1}{1-e^{-2d/h}}
 \begin{bmatrix}
 1 & -e^{-d/h}
 \\
- e^{-d/h}&1
\end{bmatrix} \quad \quad \text{which gives} \quad\quad
\textbf{D}^{-1}
\begin{bmatrix}
 1\\
  1
  \end{bmatrix} = \frac{1}{1+e^{-d/h}}\begin{bmatrix}
 1\\
  1
  \end{bmatrix}\,.\]
Next, we will show that the elements of matrix $\textbf{F}=
\textbf{B}\textbf{D}^{-1}$ are all smaller than $e^{-D/h}$. Observe
that for $i\leq L-1$, 
\begin{align*}
\textbf{F}_{i,1} 
&= 
\frac{1}{1-e^{-2d/h}}
 \Big( k(x_i,x_L) - k(x_i,u)e^{-d/h}\Big)\,,
\\
 \textbf{F}_{i,1} 
&\stackrel{(a)}{\geq}
\frac{k(x_i,x_L) }{1-e^{-2d/h}}
 \Big( 1-1\Big)=0\,,
 \\
 \textbf{F}_{i,1} 
&\stackrel{(b)}{\leq}
\frac{k(x_i,x_L) }{1-e^{-2d/h}}
 \Big( 1-e^{-2d/h}\Big) \leq k(x_i,x_L) 
 \stackrel{(c)}{\leq}
 e^{-D/h}\,.
\end{align*}
(a) uses  $k(x_i,u)\leq e^{d/h}k(x_i,x_L)$.
(b) uses $k(x_i,u)\geq e^{-d/h}k(x_i,x_L) $. (c) uses $\|x_i-x_L\|_p\geq D$.
%%%
Similary, $\textbf{F}_{i,2}$ satisfies the same bounds.
Thus we have established that the elements of matrices $\textbf{B}$
and $\textbf{F}$ and off-diagonal elements of $\textbf{A}$ are all
smaller than $e^{-D/h}$, and recall that the diagonal elements of $\textbf{A}$ are
all one.

Next, using some algebra, the elements of matrix $\textbf{F}\textbf{B}^T$ are smaller than $2e^{-2D/h}$.
Hence,  the elements of matrix $\textbf{A} -\textbf{F} \textbf{B}^T-\textbf{I}$ have magnitude smaller than $2e^{-D/h}$ (since off-diagonal elements of matrix $\textbf{A}$ are smaller than $e^{-D/h}$).
Similar analysis as in~\eqref{eq:InverseFarK}, gives 
$\big( \textbf{A} -\textbf{F} \textbf{B}^T\big)^{-1}-\textbf{I}$  have elements smaller than $4e^{-D/h}$  (using the assumption $2L e^{-D/h}\leq 1/2$).
Also, $\textbf{F}^T\big( \textbf{A} -\textbf{F} \textbf{B}^T\big)^{-1}$
and $\big( \textbf{A} -\textbf{F} \textbf{B}^T\big)^{-1} \textbf{B}$  have elements smaller than $4Le^{-D/h}$.

Following analysis similar to~\eqref{eq:InverseFarK}, it is easy to show that
$\textbf{A}^{-1}$ has off-diagonal elements less than $2e^{-D/h}$ in
magnitude and diagonal elements satisfying $1-2e^{-D/h} \leq \textbf{A}^{-1}_{ii} \leq 1+2e^{-D/h}$.
Thus the  elements of matrix $\textbf{B}^T \textbf{A}^{-1}\textbf{F}$ are smaller than $L^2 e^{-2D/h}$.
Again, similar to the analysis in~\eqref{eq:InverseFarK}, 
$\big(\textbf{I} - \textbf{B}^T \textbf{A}^{-1}\textbf{F}\big)^{-1}-\textbf{I}$
both have elements smaller than $2 L^2e^{-2D/h}$ (using the assumption $2L^3 e^{-D/h}\leq 1/2$).

Thus we have established that the off-diagonal elements of matrix $\Kt_u^{-1}$
have magnitude smaller than $4\,L\,e^{-D/h}$ and the diagonal elements
have magnitude between $1-2Le^{-D/h}$ and $1+2Le^{-D/h}$. This fact
with the definition of $f^u_{\es(u)}(x)$ in~\eqref{eq:defFtu} gives
\begin{align*}
f^u_{\es(u)}(x) 
& =
 \big[ \frac{y_L}{1+e^{-d/h}} +L\epsilon^{(u)}\gamma_{L+1} \big]  \,k(x,u) 
 +
  \big[ \frac{y_L}{1+e^{-d/h}} +L \epsilon^{(u)}\gamma_{1} \big] \,k(x,x_L)
\\
&\quad +
\sum_{i=1}^{L-1} (y_i + \epsilon^{(u)}\gamma_{i} ) K(x,x_i) 
\end{align*}
where $\epsilon^{(u)} =  4 L^3 e^{-D/h} $.
%Hence, 
%\begin{align*}
%f_u^{t(u)}(x) - &f(x)
%=
% \frac{y_1}{1+e^{-d/h}}  k(x,u) -  \frac{y_1 e^{-d/h}}{1+e^{-d/h}}  k(x,x_1)
%  \\
%&\quad + 
%\big[ L \epsilon^{(u)} +  \epsilon^{(f)} \big]
%\Big[\sum_{i=1}^{L} \gamma_i k(x,x_i) + \gamma_{L+1} k(x,u)\Big]
%\end{align*}

\end{proof}
%\end{appendices}
\end{document}